%% file: iclr2027_conference.tex
\documentclass{article} % For LaTeX2e
\usepackage{iclr2027_conference,times}

\input{math_commands.tex}

\usepackage[hidelinks]{hyperref}
\usepackage{url}
\usepackage{comment}
\usepackage{xcolor}

\usepackage{algorithm}
\usepackage{algpseudocode}

\usepackage{graphicx}
\usepackage{booktabs}
\usepackage{amssymb, amsthm}
\usepackage{dsfont}
\usepackage{enumitem}
\usepackage{wrapfig}
\usepackage{multicol, multirow, array}
\usepackage{etoc}
\usepackage{adjustbox}
\usepackage{placeins}

\newtheorem{lemma}{Lemma}

\newtheorem{theorem}{Theorem}
\newtheorem{remark}{Remark}
\newtheorem{corollary}{Corollary}
\newtheorem{assumption}{Assumption}

\renewcommand{\eqref}[1]{Eq.~(\ref{#1})}

\title{ParetoTransport: Generative Optimization by Mass Transport Toward The Pareto Front}

\author{
\makebox[\textwidth][c]{%
Stephanie Holly\thanks{Corresponding author: \texttt{holly@ml.jku.at}}
\qquad
Sepp Hochreiter
\qquad
Werner Zellinger
}\\
\makebox[\textwidth][c]{LIT AI Lab and Institute for Machine Learning}\\
\makebox[\textwidth][c]{JKU Linz, Austria}
}

\iclrfinalcopy % Uncomment for camera-ready version, but NOT for submission.
\begin{document}

\maketitle
\fancyhead{}

\begin{abstract}
Offline multi-objective optimization requires not only moving the objective vectors of candidate designs toward the Pareto front, but also distributing them effectively along it. Generative methods have recently emerged as a natural approach because they learn a distribution over feasible designs while allowing generation to be steered toward promising designs.
Existing methods, however, largely retain classical sample-wise guidance strategies, leaving the distribution-level modeling capability of generative methods underused. 
We propose ParetoTransport, a training-free guidance method for pre-trained flow-matching models that explicitly specifies and refines a population-level distribution in objective space. 
ParetoTransport guides a flow-matching sampler to iteratively transport the empirical offline distribution toward the Pareto front, with Wasserstein matching to intermediate proxy distributions.
This directly controls distributional displacement and mass allocation along the front. We establish a convergence result and demonstrate state-of-the-art performance on standard offline MOO benchmarks, extending recent evaluations beyond hypervolume to generational distance, inverted generational distance, and Wasserstein distance.
\end{abstract}

\section{Introduction}
\label{sec:intro}

Offline multi-objective optimization (MOO) seeks a set of Pareto-optimal designs that represent diverse trade-offs among multiple potentially conflicting objectives, while optimization relies only on a fixed dataset of previously evaluated designs, without access to additional objective evaluations.
This setting is particularly relevant when evaluating new designs is expensive or impractical, as in molecular and materials design, where efficacy, toxicity, synthesizability, strength, conductivity, or cost must be balanced
~\citep{Wang2021naturedistillation, Du2024naturesurvey, Pogue2023naturesuperconducting, Zeni2025naturematerial}.

Recently, diffusion and flow-matching models have emerged as promising approaches to offline MOO because they learn a distribution over feasible designs and enable an approximate inverse relation from objective values to feasible designs.
Training-free guidance adapts such pre-trained models to task-specific objectives at inference time, separating generative modeling from task-specific optimization and allowing the model to be reused across optimization tasks without retraining. \citep{Benhamu2024dflow, Li2026hardflow}. 
At the population level, this turns optimization into the problem of controlling the induced objective-space distribution: its displacement toward the Pareto front and its allocation of probability mass along it.

Although methods have been proposed to control displacement and mass allocation during generation, most of them control the induced objective-space distribution only indirectly through sample-wise mechanisms. 
Typical approaches steer individual samples through (a) sample-wise update directions derived from scalarized objectives~\citep{Yuan2025paretoflow} or preference models~\citep{Annadani2025pgd}, (b) diversity based on repulsive interactions~\citep{Hotegni2026spread}, or (c) selected conditioning targets~\citep{Shrestha2026paretoconditioned}.
All of these methods have in common that they do not explicitly
specify and refine the geometry and mass allocation of the generated objective-space distribution, although such distribution-level control is a natural capability of generative inverse modeling.

\begin{figure}[t]
  \centering
  \includegraphics[width=\linewidth]{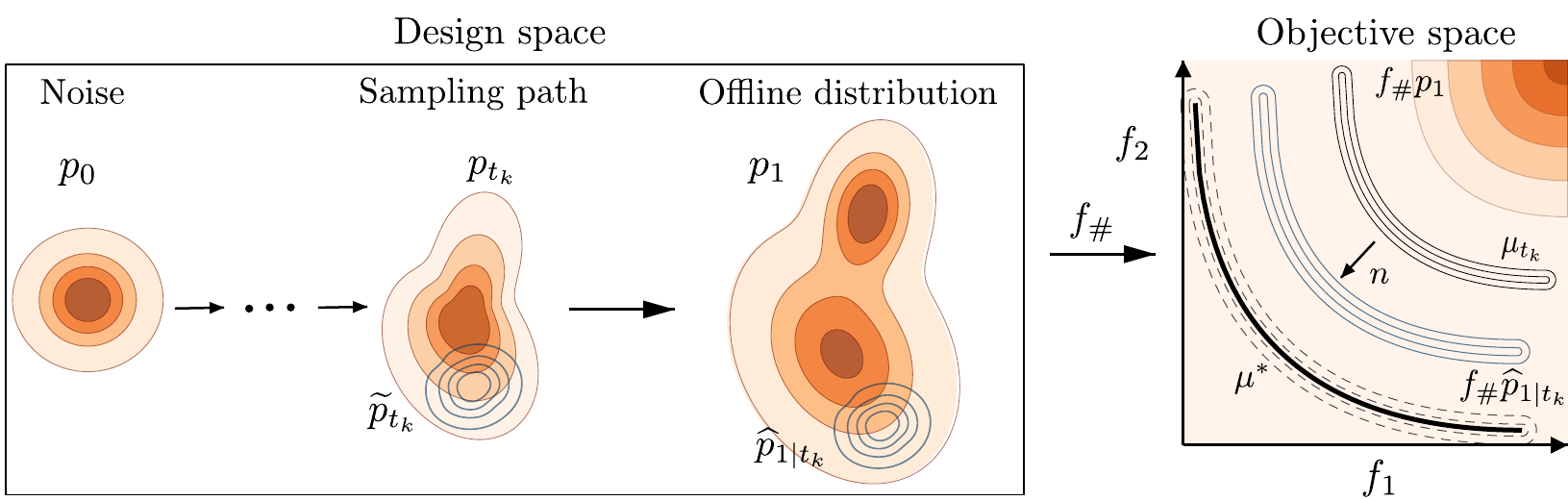}
  \setlength{\abovecaptionskip}{0pt}
  \caption{ParetoTransport steers the probability path $(p_{t_k})_{k=1}^K$ of a pre-trained flow model from its learned offline distribution $p_1$ toward a guided distribution, whose induced objective-space distribution approximates the uniform distribution $\mu^\ast$ on the Pareto front. At each step $t_k$, the estimated terminal objective-space distribution $f_\# \widehat{p}_{1\mid t_k}$ is transported toward the Pareto front through directional improvement along $n$, while Wasserstein matching to the current proxy $\mu_{t_k}$ controls its mass allocation.}
  \label{fig1}
\end{figure}

In this work, we introduce ParetoTransport, a training-free guidance method for pre-trained flow matching models that explicitly refines the geometry and mass allocation of the generated objective-space distribution. We start by formulating offline MOO as a mass-transport problem.  
ParetoTransport approaches this problem by iteratively transporting the empirical offline distribution toward the Pareto front, combining directional improvement
with Wasserstein matching to control probability-mass allocation. %
Under general assumptions, we establish a Wasserstein convergence guarantee toward the Pareto front distribution.
Across standard offline MOO benchmarks, ParetoTransport achieves state-of-the-art performance among training-free generative inverse methods and competitive performance relative to forward methods, and extends recent evaluations beyond hypervolume (HV) to generational distance (GD), inverted generational distance (IGD), and the Wasserstein distance $W_2$.

Our contributions are summarized as follows:
\begin{itemize}
    \item We formulate offline MOO as a sampling problem and motivate $W_2$ as a natural quality measure by relating it to the classical MOO metrics GD and IGD (Corollary~\ref{corollary:ot}).
    
    \item We propose ParetoTransport, a training-free guidance method for pre-trained flow matching models that approaches offline MOO by mass transport in objective space, iterating (i) Wasserstein control of probability-mass allocation with (ii) normal-direction improvement. 
    
    \item Under general assumptions, we establish a Wasserstein 
    convergence result toward the Pareto front distribution (Theorem~\ref{thm:wasserstein-contraction}).
    
    \item We demonstrate state-of-the-art performance across $24$ datasets, extending recent generative evaluations beyond HV to the classical MOO metrics GD and IGD, as well as to the $2$-Wasserstein distance between
    the generated objective-space distribution and the Pareto front.
\end{itemize}

\section{Problem: Offline Multi-Objective Optimization}
\label{sec:problem}
The goal of multi-objective optimization (MOO) is 
to identify designs $x\in\mathcal{X}\subset\mathbb{R}^d$ that minimize multiple objective functions $f_i:\mathcal{X} \rightarrow \mathbb{R}$, $i=1,\ldots,m$. Since the objective functions may conflict, there is generally no unique minimizer,
and MOO instead aims to identify the set of \textit{Pareto-optimal} designs.

A design $x \in \mathcal{X}$ is said to \textit{dominate} a design $x^\prime \in \mathcal{X}$ if
$f_i(x)\leq f_i(x^\prime)$ for all $i\in\{1,\ldots,m\}$, with strict inequality for at least one $i$.
A design $x\in \mathcal{X}$ is \textit{Pareto-optimal} if no other design $x^\prime\in \mathcal{X}$ dominates it.
The set of all Pareto-optimal designs is called the \textit{Pareto set}, and its image under
$f:=(f_1,\ldots,f_m)$ is the \textit{Pareto front}, denoted by $\mathcal{M}^\ast$.

\textit{Offline} MOO considers scientific and engineering problems in which optimization relies only on a fixed dataset\footnote{Following standard learning-theoretic assumptions~\citep{mohri2018foundations,shalevshwartz2014understanding}, we regard the designs $x_1,\ldots,x_N$ as independent and identically distributed samples from an unknown data distribution $p_1$ on $\mathcal{X}$.} $(x_i,f(x_i))_{i=1}^N$ of previously evaluated designs, without access to further evaluations of the objective function $f$.

\section{Background: Training-Free Guidance for Flow Matching}
The goal of flow matching (FM) \citep{Lipman2023flowmatching, Albergo2023building, Liu2022flow} is to learn a time-dependent velocity field $u: [0,1] \times \mathbb{R}^d \rightarrow \mathbb{R}^d$ that transforms a simple \textit{initial} distribution $p_0$ into a complex \textit{terminal} distribution $p_1$. The flow map $\phi: [0,1] \times \mathbb{R}^d \rightarrow \mathbb{R}^d$ is defined by $\phi_t(a):=x_t$, where $x:[0,1]\rightarrow\mathbb{R}^d$ is the solution of the ordinary differential equation (ODE)
\begin{equation}
\label{eq:flow_ode}
\frac{d x_t}{dt} = u_t(x_t), \qquad x_0=a.
\end{equation}
This flow map induces a probability path $p_t:=(\phi_t)_{\#}p_0$ that satisfies the continuity equation \mbox{$\partial_t p_t + \nabla\cdot(u_t p_t)=0$}.
The velocity field $u_t$ is learned through a conditional probability path $p_{t|Z}$, where $Z=(X_0,X_1) \sim \pi$ and $\pi \in \Gamma(p_0,p_1)$. A standard choice is an affine conditional probability path $X_t:= \alpha_t X_1 + \beta_t X_0 \sim p_{t|Z}$ with scheduler $(\alpha_t, \beta_t)$ satisfying $\alpha_0 = \beta_1 = 0, \alpha_1=\beta_0=1$. This yields the simple conditional velocity field $u_{t\mid Z}(X_t)=\dot{\alpha}_t X_1 + \dot{\beta}_t X_0$. The resulting conditional regression objective has the same gradient as the marginal regression objective \citep{Lipman2023flowmatching}, and therefore can be used to learn the marginal velocity field $u_t$.
At inference time, sampling is performed by drawing an initial sample $x_0\sim p_0$ and integrating the ODE in \eqref{eq:flow_ode} with the learned velocity field. Further details are provided in Appendix~\ref{app:flow_matching}.

\textbf{Training-Free Guidance} incorporates a task-specific loss $J:\mathbb{R}^d \rightarrow \mathbb{R}$ into the sampling path of a pre-trained generative model at inference time~\citep{Liu2023flowgrad, Benhamu2024dflow, Wang2025ocflow}. Given an initial sample $x_0 \sim p_0$, the guided sampling process can be formulated as 
\begin{align}
\label{eq:traj-opt}
\begin{aligned}
\min_{(\widetilde{x}_t,\,c_t)_{t\in [0,1]}}
\quad &
J(\widetilde{x}_1)
+\lambda\int_{0}^{1}\frac{1}{2}\|c_t\|^2\,dt
\\
&
\begin{aligned}[t]
\text{s.t.}\quad
&\widetilde{x}_0=x_0,\\
&
\frac{d \widetilde x_t}{dt} = u_t(\widetilde x_t) + c_t \qquad t\in[0,1] .
\end{aligned}
\end{aligned}
\end{align}
The optimization seeks a guidance velocity field $c: [0,1] \rightarrow \mathbb{R}^d$ that steers the induced trajectory $(\widetilde{x}_t)_{t\in [0,1]}$ toward a favorable terminal sample $\widetilde{x}_1$. The loss function $J$ evaluates this terminal sample, while the second term penalizes large guidance-induced deviations from the pre-trained sampling dynamics.

Recent work~\citep{Li2026hardflow, Webber2026flowmpc, Pourya2026flower} approximates the full-horizon optimization problem in \eqref{eq:traj-opt} by a sequence of single-step subproblems. At each intermediate time $t$, a posterior mean estimator, denoted by $\widehat{x}_{1\mid t}$, estimates the terminal sample at which the terminal loss is evaluated, with its gradient propagated back to the intermediate sample. \citet{Li2026hardflow} and \citet{Pourya2026flower} additionally derive a local mapping from terminal space back to time $t$, which we denote by $\widehat{x}_{t\mid 1}$. %and refer to as a local reverse parameterization. 
This allows each subproblem to be optimized directly in terminal space, avoiding differentiation through the terminal estimator $\widehat{x}_{1\mid t}$ and hence through the learned model. Further details are provided in Appendix~\ref{app:flow_matching}.

\section{Offline MOO as Sampling Problem}
\label{sec:oflline_moo_as_sampling_problem}
In this work, we view the MOO goal of identifying Pareto-optimal designs (see Section~\ref{sec:problem}) as a statistical sampling problem.
This view implies a refined formulation of the goal of MOO algorithms \citep{CHEN2011classicalmoo}, namely, to identify solution sets that simultaneously achieve convergence toward the Pareto front and uniform coverage along it.

Viewing a solution set as a statistical sample rather than only as a finite set, the convergence of solution sets becomes concentration of the samples' distribution near the Pareto front and coverage becomes allocation of probability mass along it.
Accordingly,
\textit{the goal of MOO algorithms becomes to find a design-space distribution $p$ whose induced objective-space distribution $f_{\#}p$ approximates the uniform distribution $\mu^{\ast}$ on the Pareto front $\mathcal{M}^{\ast}$.}

\subsection{A Quality Measure: The Wasserstein Distance}

The sampling view of MOO above leads to a natural quality measure for generative MOO methods: the $2$-Wasserstein distance\footnote{The squared $2$-Wasserstein distance between probability distributions $\mu,\nu\in\mathcal P_2(\mathbb R^m)$ is defined as
$
W_2^2(\mu,\nu)
:=
\inf_{\gamma\in\Gamma(\mu,\nu)}
\int \|x-y\|^2\,d\gamma(x,y),
$
where $\Gamma(\mu,\nu)$ denotes the set of couplings of $\mu$ and $\nu$.} $W_2$.
Indeed, the 2-Wasserstein distance is related to the two well-established MOO metrics GD and IGD\footnote{The squared GD~\citep{Lamont1999GD} between a solution set $A$ and a discrete approximation $Z$ of the Pareto front is defined by $\mathrm{GD}^2(A,Z) := \frac{1}{|A|} \sum_{a \in A} d(a,Z)^2$ The squared IGD~\citep{Coello2004IGD} is defined by $\mathrm{IGD}^2(A,Z):=\mathrm{GD}^2(Z,A)$.} as follows.
For a generative MOO method that learned to sample from a distribution $p$ with objective-space distribution $\mu:=f_{\#}p$, the value $W_2(\mu,\mu^\ast)$ upper bounds the continuous versions of GD and IGD,
\[
\mathrm{GD}(\mu\|\mu^\ast)
:=
\left(\int d(x,\operatorname{supp}\mu^\ast)^2\,d\mu(x)\right)^{1/2},
\qquad
\mathrm{IGD}(\mu\|\mu^\ast)
:= \mathrm{GD}(\mu^\ast\|\mu),
\]
where \(d(x,A):=\inf_{a\in A}\|x-a\|\) is the distance from \(x\) to the set \(A\).%
~\\

\begin{corollary}
\label{corollary:ot}
For $\mu,\nu\in\mathcal P_2(\mathbb R^m)$,
$\mathrm{GD}(\mu\|\nu)\leq W_2(\mu,\nu)$ and
$\mathrm{IGD}(\mu\|\nu)\leq W_2(\mu,\nu)$.
\end{corollary}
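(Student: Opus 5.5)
The plan is to bound the integrand pointwise on the support of any coupling and then integrate. The key observation is that for any coupling $\gamma\in\Gamma(\mu,\nu)$, $\gamma$-almost every pair $(x,y)$ has $y\in\operatorname{supp}\nu$. This holds because $\gamma(\mathbb R^m\times(\operatorname{supp}\nu)^c)=\nu((\operatorname{supp}\nu)^c)=0$, and the complement of the support is $\nu$-null by the standard fact in separable metric spaces. For such pairs we have $d(x,\operatorname{supp}\nu)\le\|x-y\|$ by the definition of the distance to a set as an infimum.

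Squaring and integrating against $\gamma$ gives
\[
\mathrm{GD}(\mu\|\nu)^2=\int d(x,\operatorname{supp}\nu)^2\,d\mu(x)=\int d(x,\operatorname{supp}\nu)^2\,d\gamma(x,y)\le\int\|x-y\|^2\,d\gamma(x,y).
\]
The middle equality uses that the first marginal of $\gamma$ is $\mu$. The function $x\mapsto d(x,\operatorname{supp}\nu)$ is $1$-Lipschitz, hence measurable, so every integral here makes sense. Since $\mu,\nu\in\mathcal P_2$, the right-hand side is finite for, e.g., the product coupling, so the infimum is not vacuous. Taking the infimum over $\gamma\in\Gamma(\mu,\nu)$ and then square roots yields $\mathrm{GD}(\mu\|\nu)\le W_2(\mu,\nu)$.

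For IGD, $\mathrm{IGD}(\mu\|\nu)=\mathrm{GD}(\nu\|\mu)$ by definition. Applying the first part with the roles of $\mu$ and $\nu$ swapped gives $\mathrm{GD}(\nu\|\mu)\le W_2(\nu,\mu)$. It then remains to note that $W_2$ is symmetric: the map $(x,y)\mapsto(y,x)$ pushes $\Gamma(\mu,\nu)$ bijectively onto $\Gamma(\nu,\mu)$ and preserves $\|x-y\|$.

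The only step requiring care is the measure-theoretic claim that $\nu(\operatorname{supp}\nu)=1$. This follows because $(\operatorname{supp}\nu)^c$ is a union of open $\nu$-null sets, and by second countability of $\mathbb R^m$ it is a countable such union. Everything else is a direct consequence of the definitions. No earlier result of the paper is needed beyond the definitions in the footnotes.
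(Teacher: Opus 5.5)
Your proposal is correct and follows the paper's proof essentially step for step. You bound $d(x,\operatorname{supp}\nu)\le\|x-y\|$ for $\gamma$-a.e.\ $(x,y)$, integrate, take the infimum over couplings, and then obtain the IGD bound from $\mathrm{IGD}(\mu\|\nu)=\mathrm{GD}(\nu\|\mu)$ together with the symmetry of $W_2$. Beyond the paper's version, you also spell out the measure-theoretic details it leaves implicit: that $\nu(\operatorname{supp}\nu)=1$, that the distance function is measurable, and why $W_2$ is symmetric.
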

% $W_2^2(\cdot,\mu)$ penalizes not only deviation from $\mu$ but also how mass is distributed along it.
The proof of our Corollary~\ref{corollary:ot} is provided in
Appendix~\ref{app:proof}.

\section{Our Method: Offline MOO by Mass Transport}
\label{sec:mass_transport}

The formulation of MOO as a sampling problem in Section~\ref{sec:oflline_moo_as_sampling_problem} motivates us to approach offline MOO by mass transport, which leads to our novel algorithm: \textit{ParetoTransport}.

\subsection{Approach by Mass Transport}

Denote by $\mu_0 \in\mathcal{P}_2(\mathbb{R}^m)$ the empirical distribution over the offline front, i.e., $\mu_0:=\frac{1}{|D(\mathrm{best})|} \sum_{x\in D(\mathrm{best})} \delta_{f(x)}$, where $D(\mathrm{best})$ denotes the set of all non-dominated designs in the offline dataset. Further, denote by $\mu^\ast$ the uniform distribution on the (unknown) Pareto front $\mathcal M^\ast$.

\textit{Following the sampling view of MOO (Section~\ref{sec:oflline_moo_as_sampling_problem}), our approach is to iteratively transport $\mu_0$ toward $\mu^\ast$ in terms of the 2-Wasserstein distance in objective space.}

\begin{figure}[!htbp]
  \centering
  \includegraphics[width=0.65\linewidth]{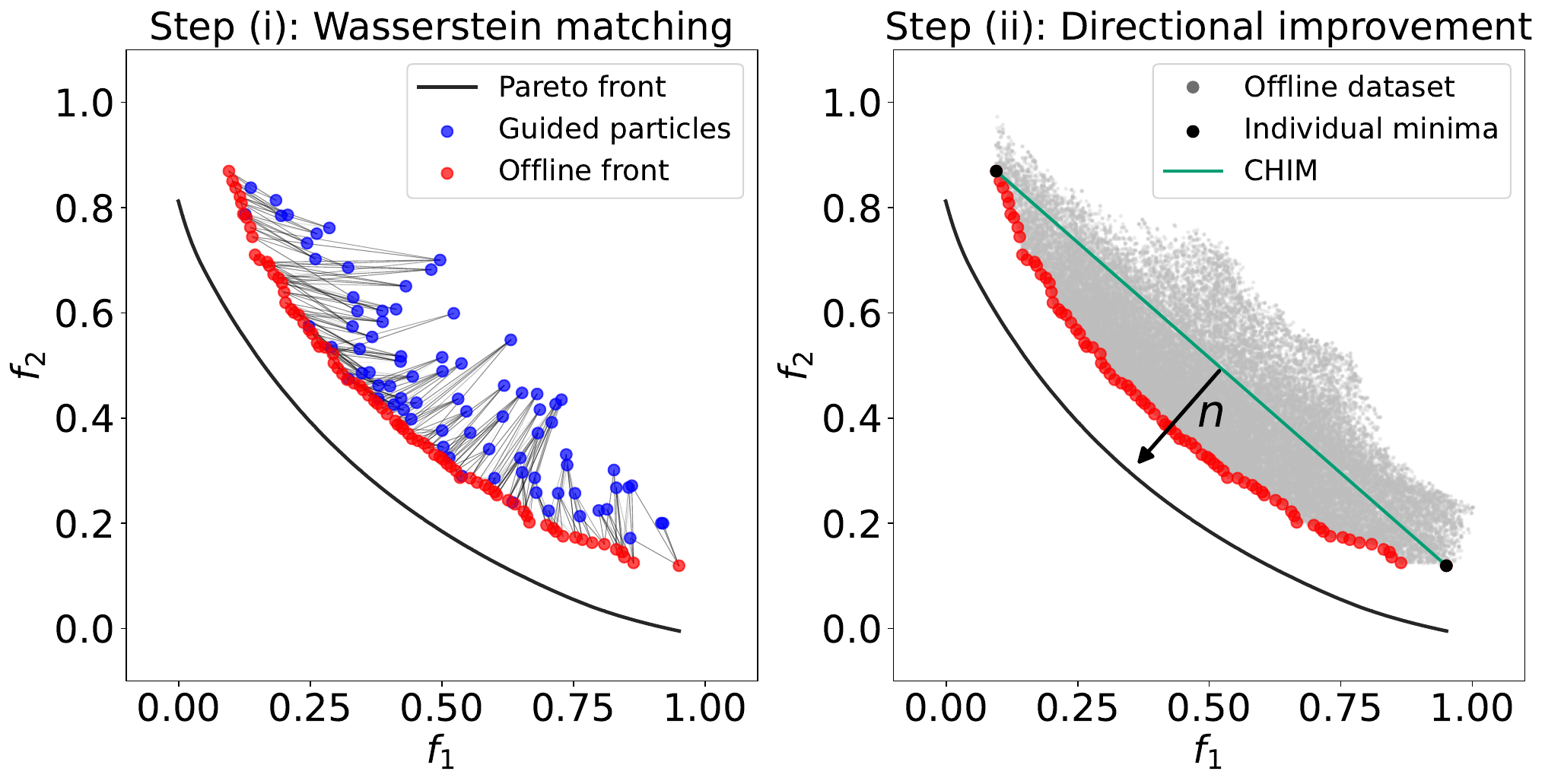}
    \caption{\textbf{Step~(i):} Wasserstein matching of probability-mass allocation. \textbf{Step~(ii):} Normal-direction improvement with the CHIM normal $n$ as a surrogate for the unknown projection direction toward the Pareto front.}
  \label{fig:mechanism}
\end{figure}

To realize this transport, we start from the offline front distribution $\mu_0$ and introduce a sequence $(\mu_k)_{k\geq0}$ of intermediate proxy distributions by alternating between two steps:
\[
\mu_k
\xrightarrow{\text{step~(i):~$W_2$ matching}}
q_{k+1}
\xrightarrow{\text{step~(ii):~directional improvement}}
\mu_{k+1}.
\]
Step (i) is a matching step that aims to find a design distribution\footnote{We  assume that $q$ is chosen from a set $\mathcal{Q}$ of admissible design-space distributions for which~\eqref{eq:matching} admits a minimizer, e.g., $\mathcal{Q}\subseteq\mathcal P_2(\mathcal X)$ is nonempty and weakly closed, with compact $\mathcal X \subseteq \mathbb R^d$ and continuous $f:\mathcal X\to\mathbb R^m$ \citep{villani2009optimal}.}
\begin{equation}
\label{eq:matching}
q_{k+1}\in
\operatorname*{argmin}_{q\in \mathcal Q}
W_2^2(f_\#q,\mu_k)
\end{equation}
whose induced objective-space distribution $f_\# q_{k+1}$ matches the current proxy $\mu_k$.
Step (ii) is a directional improvement step that moves $f_\# q_{k+1}$ toward the Pareto front and defines the next intermediate distribution $\mu_{k+1}:=(T_\eta\circ f)_\#q_{k+1}$ by the map
\begin{equation}
\label{eq:projection-update}
T_\eta(y):=y+\eta\bigl(\Pi_{\cal M^\ast}(y)-y\bigr),
\end{equation}
where $\eta\in(0,1]$ and $\Pi_{M^\ast}$ denotes the projection onto $\cal M^\ast$. Thus, $T_\eta$ moves each objective vector by a fraction $\eta$ of the displacement toward its closest point on the Pareto front.

The two-step procedure described above solves the MOO problem in the following sense, where we use $\bar{\mu}:=(\Pi_{\mathcal M^\ast})_\#\mu_0$ to denote the pushforward of the offline front distribution $\mu_0$ under the projection $\Pi_{\mathcal{M}^\ast}$.

\begin{theorem}
\label{thm:wasserstein-contraction}
If Step (i) achieves exact matching, $W_2(f_\#q_{k+1},\mu_k)=0$ for all $k$, and the technical Assumptions (1)--(3) in Appendix~\ref{app:proof} hold, then
\begin{equation}
\label{eq:w2-bound}
W_2(\mu_k,\mu^\ast)
\leq
(1-\eta)^k
W_2(\mu_0,\bar\mu)
+
W_2(\bar\mu,\mu^\ast).
\end{equation}
\end{theorem}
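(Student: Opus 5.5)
The plan is to split off the target with the triangle inequality, $W_2(\mu_k,\mu^\ast)\le W_2(\mu_k,\bar\mu)+W_2(\bar\mu,\mu^\ast)$, and then show $W_2(\mu_k,\bar\mu)\le(1-\eta)^k W_2(\mu_0,\bar\mu)$ by induction. The second term needs no work; it is the irreducible bias from projecting the offline front rather than sampling uniformly on $\mathcal M^\ast$.

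\textbf{Reduction to a fixed map.} Exact matching gives $f_\#q_{k+1}=\mu_k$, so $\mu_{k+1}=(T_\eta)_\#\mu_k$ and hence $\mu_k=(T_\eta^{k})_\#\mu_0$. I expect Assumptions (1)--(3) to provide the following. First, $\Pi_{\mathcal M^\ast}$ is single-valued and measurable on a region $U$ containing $\operatorname{supp}\mu_0$. Second, $U$ is invariant under $T_\eta$. Third, the projection is constant along the segments it generates: $\Pi_{\mathcal M^\ast}(y+s(\Pi_{\mathcal M^\ast}(y)-y))=\Pi_{\mathcal M^\ast}(y)$ for $s\in[0,1]$. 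This last property is standard for a nearest-point projection whenever the projection is unique. Granting these, induction gives $\Pi_{\mathcal M^\ast}\circ T_\eta^k=\Pi_{\mathcal M^\ast}$ on $U$, and
\[
T_\eta^k(y)-\Pi_{\mathcal M^\ast}(y)=(1-\eta)^k\bigl(y-\Pi_{\mathcal M^\ast}(y)\bigr).
\]

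\textbf{Explicit coupling.} Consider $\gamma:=(T_\eta^k,\Pi_{\mathcal M^\ast})_\#\mu_0$. Its marginals are $\mu_k$ and $\bar\mu$, so it is an admissible coupling, and
\[
W_2^2(\mu_k,\bar\mu)\le\int\|T_\eta^k(y)-\Pi_{\mathcal M^\ast}(y)\|^2\,d\mu_0(y)=(1-\eta)^{2k}\int\|y-\Pi_{\mathcal M^\ast}(y)\|^2\,d\mu_0(y).
\]
Finally I identify the last integral with $W_2^2(\mu_0,\bar\mu)$. The upper bound $W_2^2(\mu_0,\bar\mu)\le\int\|y-\Pi_{\mathcal M^\ast}(y)\|^2\,d\mu_0$ is immediate from the coupling $(\mathrm{id},\Pi_{\mathcal M^\ast})_\#\mu_0$. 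For the reverse inequality, take any coupling $\gamma'$ of $\mu_0$ and $\bar\mu$. Since $\bar\mu$ is supported on $\mathcal M^\ast$, we have $\|y-z\|\ge\|y-\Pi_{\mathcal M^\ast}(y)\|$ for $\gamma'$-a.e. $(y,z)$. Integrating gives $\int\|y-\Pi_{\mathcal M^\ast}(y)\|^2\,d\mu_0\le W_2^2(\mu_0,\bar\mu)$, so the two quantities are equal. Taking square roots and combining with the triangle inequality yields \eqref{eq:w2-bound}.

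\textbf{Main obstacle.} The hardest part is justifying the geometric properties of $\Pi_{\mathcal M^\ast}$: single-valuedness on $U$, the invariance $\Pi_{\mathcal M^\ast}\circ T_\eta=\Pi_{\mathcal M^\ast}$, and the fact that iterates remain where the projection is well defined. If the appendix assumptions do not state these directly, I would derive them as follows. If $z\neq\Pi_{\mathcal M^\ast}(y)$ were strictly closer to a point $y'$ on the open segment from $y$ to $\Pi_{\mathcal M^\ast}(y)$, then the triangle inequality would give $\|y-z\|\le\|y-y'\|+\|y'-z\|<\|y-\Pi_{\mathcal M^\ast}(y)\|$. This contradicts minimality, and the equality case is excluded by uniqueness of the projection from $y$. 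The measurability needed for the pushforwards I would take from Assumption (1), for instance closedness of $\mathcal M^\ast$.
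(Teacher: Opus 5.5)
Your proof is correct and shares the paper's skeleton. Exact matching gives $\mu_k=(T_\eta^k)_\#\mu_0$. The triangle-inequality/uniqueness argument gives $\Pi_{\mathcal M^\ast}\circ T_\eta=\Pi_{\mathcal M^\ast}$, hence $T_\eta^k=(1-\xi_k)\mathrm{Id}+\xi_k\Pi_{\mathcal M^\ast}$ on $U$ with $\xi_k=1-(1-\eta)^k$. The bias term $W_2(\bar\mu,\mu^\ast)$ is then split off by the triangle inequality.

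The difference lies in the contraction step. The paper cites Lemma~31 of Vayer and Gribonval to know that $(\mathrm{Id},P_{\mathcal M^\ast})_\#\mu_0$ is an optimal plan between $\mu_0$ and $\bar\mu$. It then applies the displacement-interpolation theorem (Santambrogio, Thm.~5.27) to identify $\mu_k$ with the point $\mu^{(\xi_k)}$ on the constant-speed $W_2$-geodesic from $\mu_0$ to $\bar\mu$. This yields the \emph{equality} $W_2(\mu_k,\bar\mu)=(1-\eta)^kW_2(\mu_0,\bar\mu)$.

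You instead use the explicit coupling $(T_\eta^k,\Pi_{\mathcal M^\ast})_\#\mu_0$. You prove the Lemma~31 identity by hand from $\|y-z\|\ge d(y,\mathcal M^\ast)$ for $z\in\mathcal M^\ast$. This is self-contained, avoids both external results, and yields only the inequality, which is all the theorem requires.

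The paper's route buys the sharper statement and the geometric picture that the proxies travel along a Wasserstein geodesic toward $\bar\mu$. Your elementary argument recovers equality as well, though. Any coupling of $\mu_k$ and $\bar\mu$ costs at least $\int d(x,\mathcal M^\ast)^2\,d\mu_k=(1-\eta)^{2k}\int\|y-\Pi_{\mathcal M^\ast}(y)\|^2\,d\mu_0$.

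Two by-products of your approach are worth noting. First, your segment argument shows that $\Pi_{\mathcal M^\ast}(y)$ is the unique nearest point of every point of $[y,\Pi_{\mathcal M^\ast}(y)]$. So Assumption~(3) could be dropped by replacing $U$ with $\bigcup_k T_\eta^k(U)$. Second, because $\mu_0$ is finitely supported, the measurability issues you flag are vacuous. The paper handles them in general with a Borel nearest-point selection that agrees with $\Pi_{\mathcal M^\ast}$ on $U$.
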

The proof is provided in Appendix~\ref{app:proof}. 

Theorem~\ref{thm:wasserstein-contraction} provides a quality guarantee for our iterative procedure.
Assume that the offline front distribution is sufficiently representative in the sense that its projection $\bar\mu$ onto the Pareto front satisfies $W_2(\bar\mu,\mu^\ast)<\epsilon$ for some small $\epsilon$.
Then, Theorem~\ref{thm:wasserstein-contraction} guarantees
$\limsup_{k\to\infty} W_2(\mu_k,\mu^\ast)<\epsilon$.
Moreover, the first term in Eq.~\ref{eq:w2-bound} decreases exponentially with $k$.

\subsection{Our ParetoTransport Algorithm}
\label{sec:ParetoTransport}
We introduce \textit{ParetoTransport}, a training-free guidance method for pre-trained flow matching models that lifts guidance from individual sampling trajectories to the induced probability path.
Existing training-free generative methods for offline MOO are sample-wise:
ParetoFlow~\citep{Yuan2025paretoflow} uses sample-wise scalarization weights to guide sampling from a flow matching model, while PGD~\citep{Annadani2025pgd} uses a preference-based classifier to guide sampling from a diffusion model.
ParetoTransport instead formulates guidance over the induced probability path, combining Wasserstein control with directional transport to realize the mass-transport iteration (Section~\ref{sec:mass_transport}) during sampling.

In particular, we lift the guided sampling problem in \eqref{eq:traj-opt} from an individual trajectory $(\widetilde x_t)_{t\in[0,1]}$ to the probability path $(\widetilde p_t)_{t\in[0,1]}$:
\begin{equation}
\label{eq:distributional-opt}
\begin{aligned}
\min_{\big(\widetilde p_t(\cdot),\,c_t(\cdot)\big)_{t\in[0,1]}}
\quad &
J(\widetilde p_1)
+
\lambda
\int_0^1
\int
\frac{1}{2}\|c_t(x)\|^2
\,d\widetilde p_t(x)\,dt
\\
\text{s.t.}\quad &
\widetilde p_0=p_0,
\\
&
\partial_t\widetilde p_t
+
\nabla\cdot\big((u_t+c_t)\widetilde p_t\big)
=0,
\qquad t\in[0,1].
\end{aligned}
\end{equation}
The optimization seeks a guidance velocity field $c_t:\mathbb R^d\to\mathbb R^d$ that steers the induced probability path
$(\widetilde p_t)_{t\in[0,1]}$ toward a favorable terminal distribution $\widetilde p_1$.
The functional $J:\mathcal{P}_2(\mathbb{R}^d)\to\mathbb{R}$ evaluates this terminal distribution, while the second term penalizes large guidance-induced deviations from the pre-trained sampling dynamics.

\begin{algorithm}[htbp!]
\caption{ParetoTransport}
\label{alg:FancyName}
\begin{algorithmic}[1]
\State Initialize guided particles $\widetilde x_{t_1}^{i}:=x_0^{i}, \qquad i=1,\dots,N$
%\State Initialize archive $(y_{t_1}^j)_{j=1}^M:=(y_0^j)_{j=1}^M$
\State Initialize proxy $\mu_{t_1}:=\mu_0$
\For{$k=1$ to $K-1$}
    \State Take one Euler step: $x_{t_{k+1}}^{i}:= \widetilde x_{t_{k}}^{i} + \Delta t_k u_{t_k}^{\theta}(\widetilde x_{t_{k}}^{i}), \qquad i=1,\dots,N$
    \If{$t_{k+1} \leq t_{\mathrm{guided}}$}
        \State $\widetilde x_{t_{k+1}}^{i}:=x_{t_{k+1}}^{i}, \qquad i=1,\dots,N$
    \Else
        \State Estimate terminal particles:
        $\widehat{x}_{1}^{i}:=\widehat{x}^{\theta}_{1\mid t_{k+1}}(x_{t_{k+1}}^{i}), \qquad i=1,\dots,N$
        \State Solve the optimization problem in terminal space:
        \[
        \big(x^{i}_{\mathrm{opt}}\big)_{i=1}^N
        \in
        \arg\min_{(x^{i}_1)_{i=1}^N}
        J_k\big(\frac{1}{N}\sum_{i=1}^N \delta_{x^{i}_1}\big)
        %D(\frac{1}{N} \sum_{i=1}^N \delta_{x^{i}_1}\|\mu_{t_k})
        + \frac{\lambda}{N}\sum_{i=1}^N \frac{\alpha_{t_{k+1}}^2}{2 \Delta t_k} \|x_1^{i} -\widehat{x}^{i}_{1}\|^2,
        \]
        \State Update proxy: $\mu_{t_{k+1}}:=\mathcal{R}\big(\mu_{t_k},\big(f(x^{i}_{\mathrm{opt}})\big)_{i=1}^N\big)$   
        \State Recover guided intermediate particles: 
        $\widetilde x_{t_{k+1}}^{i}:=\widehat{x}_{t_{k+1}\mid 1}(x_{\mathrm{opt}}^{i}), \qquad i=1,\dots,N$
    \EndIf 
\EndFor
\State \Return Guided terminal particles $(\widetilde x_1^{i})_{i=1}^{N}$
\end{algorithmic}
\end{algorithm}

In practice, we discretize the probability path on a grid $0=t_1<\ldots<t_K=1$ with $\Delta t_k:=t_{k+1}-t_k$, and use forward Euler. We introduce a guidance onset time $t_{\mathrm{guided}}$: for $t\leq t_{\mathrm{guided}}$, sampling consists of plain Euler updates, while for $t > t_{\mathrm{guided}}$, each sampling step $t_k\to t_{k+1}$ is coupled with an iteration of the mass-transport procedure. We denote the corresponding intermediate proxy by $\mu_{t_k}$.
%At iteration $k$, Wasserstein matching controls probability-mass allocation relative to $\mu_{t_k}$, while the transport direction promotes improvement.
We realize the matching and directional improvement steps jointly through design-space optimization rather than by explicitly transporting the proxy in objective space.
This avoids constructing increasingly optimistic objective-space vectors that need not correspond to feasible designs. At each sampling step $k$, we define the loss functional in \eqref{eq:distributional-opt} as
%the directed loss functional \(D:\mathcal{P}_2(\mathbb{R}^m)\times\mathcal{P}_2(\mathbb{R}^m)\rightarrow \mathbb{R}\),
\begin{equation}
\label{eq:directed-discrepancy}
J_k(q)
:=
W_2^2(f_\#q,\mu_{t_k})
+ \gamma L_n(f_\#q\| \mu_{t_k}),
\qquad
L_n(\nu\| \mu):=
-
\left(
\mathbb E_{Y\sim\nu}[Y]
-
\mathbb E_{Y\sim\mu}[Y]
\right)^\top n ,
\end{equation}
where $q$ denotes the terminal design-space distribution.
Thus, the terminal loss functional is updated at each sampling step according to the current proxy distribution $\mu_{t_k}$.
The Wasserstein term controls the probability-mass allocation in terms of the current proxy, while the directional term encourages transport along the surrogate projection direction $n$. Since the projection direction is unknown, we use the CHIM (Convex Hull of Individual Minima; \citealp{Das1998nbi}) normal $n$ as a global surrogate\footnote{If the individual-minimum objective vectors are affinely independent, then the CHIM spans an $(m-1)$-dimensional affine hyperplane with a unique normal direction up to orientation. We resolve the sign ambiguity by choosing the orientation with nonnegative inner product with the simultaneous-decrease direction
$-\mathbf 1_m$.}.
We use the same notation $\gamma$ because it controls the strength of the directional improvement along the normal direction, analogously to the displacement fraction in the transport map of Section~\ref{sec:mass_transport}.

We represent each intermediate distribution \(\widetilde p_{t_k}\) by a particle system $(\widetilde x^{\smash{i}}_{t_k})_{i=1}^N$, approximating the probability path by a set of particle trajectories. Following the single-trajectory approximation of~\citet{Li2026hardflow, Pourya2026flower}, we apply the same construction to the particle system, replacing the full-horizon optimization problem by a sequence of single-step subproblems. 
The particle trajectories are coupled through the distributional loss. %For the Wasserstein term, let $(x_1^{\smash{i}})_{i=1}^N$ denote terminal samples in design space and $(y^{\smash{j}}_{t_k})_{j=1}^M$ the support of the current proxy in objective space.
We denote by $\mathcal R$ the proxy-update operator that combines the
current proxy with the optimized objective vectors to construct the next proxy distribution, updating its support and assigning probability masses according to local crowding so that sparsely represented regions receive more mass and densely represented regions receive less. 
The pseudocode is provided in Algorithm~\ref{alg:FancyName}; further details on its derivation and implementation are provided in Appendices~\ref{app:flow_matching} and~\ref{app:algorithmic-details}.

\section{Related Work}

A common forward paradigm in offline optimization is to learn surrogate models that map candidate designs to objective values and then optimize these surrogates to identify promising designs~\citep{Xue2024OFFMOO, Kim2025MOOReview}. 
In single-objective optimization, surrogates enable gradient-based optimization \citep{Trabucco2022DesignBench} and evolutionary computation~\citep{Jones1998}.
In offline MOO, this extends to multiple objectives by learning one or more surrogate models and optimizing them with multi-objective search algorithms, commonly evolutionary algorithms~\citep{Knowles2006, jin2011, wang2018, yang2019}, such as NSGA-II~\citep{Deb2002NSGA2}. Recently, generative methods have emerged as an inverse paradigm, using desired objective information to directly generate candidate designs. This can be realized through conditional generative modeling, where objective information is already incorporated during training \citep{Kumar2020Gen,Hotegni2026spread, Shrestha2026paretoconditioned}, or through training-free guidance, where objective information steers a pre-trained unconditional generative model only during sampling \citep{Yuan2025paretoflow, Annadani2025pgd}. Training-free guidance is particularly attractive because it separates generative modeling from task-specific optimization, allowing the same pre-trained generative model to be reused across different optimization objectives without retraining.

\section{Experiments}
We evaluate our method on standard offline MOO benchmarks \citep{Xue2024OFFMOO}, ranging from synthetic tasks \citep{Zitzler2000ZDT, Deb2002DTLZ} to real-world engineering tasks \citep{Tanabe2020RE}.
%, and neural architecture search (\textbf{MO-NAS}) tasks. 
Each task provides an offline dataset of $60$k designs and their corresponding objective vectors. 
The Synthetic benchmark suite contains $12$ tasks widely used in MOO, including the \textbf{ZDT} \citep{Zitzler2000ZDT} and the \textbf{DTLZ} \citep{Deb2002DTLZ} tasks. The objective functions are analytical, and their Pareto fronts are known in closed form. Each task has $2$--$3$ objectives and $10$--$30$ design variables.
The \textbf{RE} benchmark suite \citep{Tanabe2020RE} contains $15$ tasks based on real-world engineering design problems, including bar truss, welded beam, and disc brake design. Each task has $2$--$6$ objectives and $3$--$7$ design variables. For evaluation, we use Pareto front approximations provided with the RE benchmark. 
Further details on the benchmarks are provided in Appendix \ref{app:dataset-details}.

\textbf{Baselines.} We compare our method against state-of-the-art offline MOO methods.
A common paradigm for offline MOO is to first learn surrogate models on the offline dataset and then use the learned surrogates to identify Pareto-optimal designs. 
Existing methods differ in both the learned surrogate models and the optimization algorithm used to search the design space.
Inverse methods optimize the learned surrogates using a generative model. \textbf{ParetoFlow} \citep{Yuan2025paretoflow} trains one surrogate model per objective and then uses these models to guide the sampling process of a pre-trained flow matching model. \textbf{Preference Guided Diffusion} \citep{Annadani2025pgd} instead trains a preference model that predicts the probability of whether a design dominates another design, and then guides the sampling process of a pre-trained diffusion model. 
Forward methods optimize the learned surrogates using an evolutionary algorithm such as NSGA-II \citep{Deb2002NSGA2}. Following \citet{Xue2024OFFMOO}, we consider two surrogate learning strategies for the forward methods. \textbf{Multi-Head} trains a single surrogate model using multi-task learning methods, including \textbf{GradNorm} \citep{Chen2018GradNorm} and \textbf{PCGrad} \citep{Yu2020PCGrad}. \textbf{Multiple-Models} trains one surrogate model per objective using offline single-objective optimization methods, including \textbf{COM} \citep{Trabucco2021COM}, \textbf{IOM} \citep{Qi2022IOM}, \textbf{ICT} \citep{Yuan2023ICT}, \textbf{RoMA} \citep{Yu2021RoMA}, and \textbf{TriMentoring} \citep{Chen2023TriMentoring}.

\textbf{Evaluation.}
We evaluate all methods using Hypervolume (HV), Generational Distance (GD), Inverted Generational Distance (IGD), and the Wasserstein distance $W_2$. HV measures the dominated objective-space volume and can be strongly influenced by individual solutions, while GD and IGD provide complementary set-based measures of convergence to and coverage of the Pareto front, respectively. 
Motivated by our sampling view of MOO in Section~\ref{sec:oflline_moo_as_sampling_problem}, we further report $W_2$ between the generated objective-space distribution and the uniform distribution on the Pareto front. %By Corollary~\ref{corollary:ot}, $W_2$ upper bounds the continuous versions of both GD and IGD and, 
Unlike the set-based metrics, $W_2$ additionally accounts for probability-mass allocation along the front. To our knowledge, this is the first offline MOO evaluation to use $W_2$ against a Pareto-front distribution. Further details on the metrics are provided in Appendix~\ref{app:moo-metrics}.
\subsection{Training details}
For the surrogate models, we follow the multiple-model configuration of~\citet{Xue2024OFFMOO} and train one surrogate per objective. Each surrogate is a three-layer MLP with two $2048$-dimensional hidden layers and ReLU activations, trained with MSE loss and Adam with learning rate $10^{-3}$ for $200$ epochs.
For the flow matching model, we use an architecture based on~\citet{Yuan2025paretoflow}, consisting of a four-layer MLP with three $256$-dimensional hidden layers and SeLU activations. The model is trained with Adam with learning rate $10^{-3}$ for up to $200$ epochs.

At inference, we integrate the learned velocity field using explicit Euler with $K=100$ steps, generate a population of size $N=256$, and use a proxy support of size $M=256$.
Following~\citet{Yuan2025paretoflow}, guidance is applied only for $t>t_{\mathrm{guided}}=0.8$; earlier steps are plain Euler updates.
At each guided step, we perform $10$ Adam steps with learning rate $0.01$ to minimize the directed loss functional in~\eqref{eq:directed-discrepancy}, using $\gamma=0.5$ and $\lambda = 10^{-4}$.
Additional training details are provided in Appendix~\ref{app:training-details}.

\subsection{Results}
All metrics are computed in the normalized objective space using the fixed min--max statistics of the offline dataset. We evaluate each method over $5$ random seeds using a population of $N=256$ candidate designs.
Table~\ref{tab:HV-representative} reports HV on a representative subset of tasks across all three benchmark families. 
ParetoTransport remains competitive in terms of HV with both forward and inverse methods.
Overall, we observe only limited differences in HV performance across methods.
This motivates additionally evaluating GD, IGD, and $W_2$ to assess convergence, coverage, and probability-mass allocation.

\input{tables/hv_representation}
\input{tables/rank_summary}

Table~\ref{tab:rank_summary} reports the average rank across tasks within each benchmark family for HV, GD, IGD, and $W_2$. 
ParetoTransport achieves the best average rank on all four metrics for both ZDT and DTLZ. On the RE tasks, it remains competitive, achieving the best average rank in GD and $W_2$, and the third-best rank in both HV and IGD. Most notably, ParetoTransport attains the best average $W_2$ rank across all three benchmark families, supporting our approach of coupling transport toward the Pareto front with explicit Wasserstein control of probability-mass allocation. It also achieves the best average GD rank across all three families, indicating consistently strong convergence.
Complete per-task results are provided in Appendix~\ref{app:additional-results}.

\subsection{Ablations}
We study the contributions of the matching and directional improvement terms in~\eqref{eq:directed-discrepancy}, as well as the role of $\gamma$. 

\textbf{Wasserstein matching and directional improvement.}
Table~\ref{tab:ablation-components} reports HV, GD, IGD, and $W_2$ for the offline front, denoted by D(best). We report the corresponding changes $\Delta$ relative to D(best) when using the matching term $W_2$, the directional improvement term $L_n$, or both. Results are averaged over $5$ random seeds per task and then across all tasks within each benchmark family.
The directional improvement term $L_n$ alone yields the largest reduction in GD on both benchmark families, consistent with its role in moving the candidate distribution toward the Pareto front. Combining $L_n$ with Wasserstein control yields the largest improvements in IGD, $W_2$, and HV on both families, at a small trade-off in GD relative to $L_n$ alone. Overall, the two terms are complementary: $L_n$ is particularly effective for convergence, while their combination provides the best performance across coverage and probability-mass allocation.
Complete per-task ablation results are provided in Appendix~\ref{app:additional-ablations}.

\input{tables/ablation_components}

\input{tables/dtlz_gamma_gd}
\textbf{Directional improvement weight $\gamma$.}
We further study the effect of $\gamma$ in \eqref{eq:directed-discrepancy}.
%To make $\gamma$ reflect the relative weight of transport and Wasserstein control, 
We rescale the $L_n$ gradient to match the norm of the $W_2$ gradient; details are provided in Appendix~\ref{app:additional-ablations}.
Table~\ref{tab:dtlz-gamma-gd} shows that increasing $\gamma$ initially improves GD, consistent with stronger directional improvement toward the Pareto front. 
For large $\gamma$, however, GD increases again, indicating that stronger directional transport does not continue to improve convergence.
%The corresponding IGD and $W_2$ results show a similar preference for intermediate values of $\gamma$. 
Complete results across $\gamma$, including GD, IGD, $W_2$, and HV, are provided in Appendix~\ref{app:additional-ablations}.

%\begin{figure}[!htbp]
%  \centering
%\includegraphics[width=0.8\linewidth{iclr2027/figures/bars_GD_DTLZ.pdf}
%\caption{Ablation of the transport strength $\gamma$ on %DTLZ tasks. GD is computed over a population of $N=256$ %candidate designs and averaged over $5$ random seeds.}
%  \label{fig:ablation-gd-dtlz}
%\end{figure}

\section{Conclusion}
In this work, we approached offline MOO by mass transport toward the Pareto front and introduced a training-free guidance method for pre-trained flow matching models that explicitly refines the geometry and mass allocation of the generated objective-space distribution.
More broadly, generative models inherently operate on distributions over designs, making it natural to treat the induced distribution itself as the object of optimization rather than solely generating individual candidate solutions independently.
Beyond distribution-level control, generative methods are also particularly promising for offline MOO problems where feasible candidate designs live on low-dimensional manifolds embedded in high-dimensional ambient spaces \citep{Song2019Manifold}.  Forward optimization methods such as gradient-based or evolutionary approaches typically rely on operations directly in the ambient space, which may fail to preserve this manifold structure and move candidate designs outside the data-supported region.

\textbf{Limitations.} We leave the many-objective setting ($m \geq 4$) for future work, where a single global CHIM normal may be insufficient to capture the higher-dimensional Pareto-front geometry.

\subsubsection*{Acknowledgments}
We thank Alexandru-Ciprian Z\u{a}voianu for helpful discussions during the early stages of this work.

The ELLIS Unit Linz, the LIT AI Lab, the Institute for Machine Learning, are supported by the Federal State Upper Austria. We thank the projects LCM-COMET K2 Center, FWF AIRI FG 9-N (10.55776/FG9), AI4GreenHeatingGrids (FFG-899943), ELISE (H2020-ICT-2019-3 ID: 951847), Stars4Waters (HORIZON-CL6-2021-CLIMATE-01-01), FWF Bilateral Artificial Intelligence ([10.55776/COE12]). We thank NXAI GmbH, Silicon Austria Labs (SAL), FILL Gesellschaft mbH, Google, ZF Friedrichshafen AG, Robert Bosch GmbH, Merck Healthcare KGaA, GLS (Univ.~Waterloo), Borealis AG, T\"{U}V Austria, TRUMPF and the NVIDIA Corporation.

\bibliography{iclr2027_conference}
\bibliographystyle{iclr2027_conference}

\appendix
\section{Appendix}

\subsection{Training-free guidance of flow-matching models}
\label{app:flow_matching}
Flow matching (FM) \citep{Lipman2023flowmatching, Albergo2023building, Liu2022flow} is an efficient and simulation-free framework for generative modeling. The goal is to learn a time-dependent velocity field $u: [0,1] \times \mathbb{R}^d \rightarrow \mathbb{R}^d$ that transforms a simple initial distribution $p_0$ into a complex terminal distribution $p_1$. The flow map $\phi: [0,1] \times \mathbb{R}^d \rightarrow \mathbb{R}^d$ is defined by $\phi_t(x_0):=x(t)$, where $x:[0,1]\rightarrow\mathbb{R}^d$ is the solution of the ordinary differential equation (ODE)
\begin{equation}
\label{app:eq:flow_ode}
\frac{d x(t)}{dt} = u_t\big(x(t)\big), \qquad x(0)=x_0.
\end{equation}
This flow map induces a probability path $p_t:=(\phi_t)_{\#}p_0$ that satisfies the continuity equation \mbox{$\partial_t p_t + \nabla\cdot(u_t p_t)=0$}.
FM learns a model $u^{\theta}_t$ by minimizing the regression loss w.r.t.~the ground truth velocity field $u_t$,
\[
\mathcal{L}_{\mathrm{FM}}(\theta)
:=
\mathbb{E}_{t \sim \mathcal{U}[0,1],\, X_t \sim p_t}\!
\left\|u^{\theta}_t(X_t)-u_t(X_t)\right\|^2.
\]
Since the velocity field $u_t$ is unknown, the FM loss is intractable.
FM therefore introduces a conditional probability path $p_{t|Z}$ with $Z=(X_0,X_1) \sim \pi$, where $\pi \in \Gamma(p_0,p_1)$ is a coupling of $X_0 \sim p_{0}$ and $X_{1} \sim p_{1}$. A standard choice is an affine conditional probability path $X_t:= \alpha_t X_1 + \beta_t X_0$ with scheduler $(\alpha_t, \beta_t)$ satisfying $\alpha_0 = \beta_1 = 0, \alpha_1=\beta_0=1$. The induced velocity field $u_{t\mid Z}$ is then given by $u_{t\mid Z}(X_t)=\dot{\alpha}_t X_1 + \dot{\beta}_t X_0$. 
This yields a tractable conditional FM loss
\[
\mathcal{L}_{\mathrm{CFM}}(\theta)
:=
\mathbb{E}_{t \sim \mathcal{U}[0,1],\, Z \sim \pi,\, X_t \sim p_{t\mid Z}}
\left\|
u_{t\mid Z}^{\theta}(X_t)-u_{t\mid Z}(X_t)
\right\|^2.
\]
The marginal and conditional velocity fields are related through
\[
u_t(x)
=
%\mathbb{E}_{Z\sim\pi,\,X_t\mid Z\sim p_{t\mid Z}}
%\left[
%u_{t\mid Z}(X_t\mid Z)
%\mid X_t=x
%\mathbb{E}_{Z \sim p_{Z\mid t}(\cdot \mid x)}
%\left[
%u_{t\mid Z}(x)
%\right].
\mathbb{E}
\left[
u_{t\mid Z}(X_t) \mid X_t =x
\right].
\]
Moreover, it has been shown that $\nabla \mathcal{L}_{\mathrm{FM}}(\theta)
= \nabla \mathcal{L}_{\mathrm{CFM}}(\theta)$ \citep{Lipman2023flowmatching}. Therefore, the velocity field $u^\theta_t$ can be trained by minimizing the conditional FM loss $\mathcal{L}_{\mathrm{CFM}}(\theta)$. At inference time, sampling is performed by drawing an initial sample $x_0\sim p_0$ and integrating the ODE in (\ref{app:eq:flow_ode}) with the learned velocity field.

\textbf{Training-free guidance.} A key challenge in training-free guidance is the approximation of the full-horizon optimization problem by a sequence of single-step subproblems. Recent work \citep{Li2026hardflow, Pourya2026flower} further introduces a reverse parameterization that reformulates the optimization in terms of the terminal sample, thereby avoiding differentiation of the terminal loss through the neural network.

The full-horizon optimization problem \citep{Benhamu2024dflow} is formulated as
\begin{align}
\begin{aligned}
\min_{\big(\widetilde{x}(t),\,c(t)\big)_{t\in [0,1]}}
\quad &
J\big(\widetilde{x}(1)\big)
+\lambda\int_{0}^{1}\frac{1}{2}\|c(t)\|^2\,dt
\\
&
\begin{aligned}[t]
\text{s.t.}\quad
&\widetilde{x}(0)=x_0,\\
&
\frac{d \widetilde x(t)}{dt} = u_t\big(\widetilde x(t)\big) + c(t) \qquad t\in[0,1].
\end{aligned}
\end{aligned}
\end{align}
In practice, the continuous-time dynamics are discretized on a time grid $0=t_1< \ldots<t_K=1$, $\Delta{t_k}:=t_{k+1}-t_k$. Using forward Euler, the full-horizon optimization problem becomes
\begin{align}
\label{eq:discrete-opt}
\begin{aligned}
\min_{(\widetilde{x}_{t_k})_{k=1}^K,\,(c_{t_k})_{k=1}^{K-1}}
\quad &
J\big(\widetilde{x}_1\big)
+\lambda \sum_{k=1}^{K-1}\frac{\Delta t_k}{2}\|c_{t_k}\|^2
\\
&
\begin{aligned}[t]
\text{s.t.}\quad
&\widetilde{x}_0=x_0,\\
&
\widetilde{x}_{t_{k+1}}
=
\widetilde{x}_{t_k}
+\Delta t_k
\left(
u_{t_k}(\widetilde{x}_{t_k})
+c_{t_k}
\right) \qquad k=1,\ldots,K-1.
\end{aligned}
\end{aligned}
\end{align}
Although \eqref{eq:discrete-opt} is written in terms of both the trajectory and guidance variables, the Euler recursion uniquely determines the trajectory from the guidance variables once the initial samples are fixed. Hence, the problem can equivalently be viewed as a full-horizon optimization over $(c_{t_k})_{k=1}^{K-1}$ alone.

Prior work \citep{Li2026hardflow, Webber2026flowmpc} approximates this discrete-time full-horizon problem by a sequence of single-step subproblems. 
At step $k$, future guidance velocities $c_{t_{k+1}},\ldots,c_{t_{K-1}}$ are ignored, and the posterior mean $\widehat{x}_{1\mid t_{k+1}}$ is used to approximate the terminal sample resulting from the current guidance step. Hence, rather than jointly optimizing the guidance velocities over the entire sampling path, each subproblem optimizes only $c_{t_k}$.
Given $x_0\sim p_0$ and $\widetilde{x}_0:=x_0$, the resulting recursion
for $k=1,\ldots,K-1$ is
\begin{align}
\begin{aligned}
\label{eq:opt_single_step}
c_{\mathrm{opt}}
&:=
\arg\min_{c_{t_k}}
\quad
J(\widehat{x}_{1})
+\lambda\frac{\Delta t_k}{2}\|c_{t_k}\|^2
\\ &
\phantom{\qquad J(\widehat{x}_{1}}
\text{s.t.}\quad 
 \widehat{x}_{1}
=
\widehat{x}_{1\mid t_{k+1}}
\big(
\widetilde{x}_{t_k}
+\Delta t_k
\left(
u_{t_k}(\widetilde{x}_{t_k})
+c_{t_k}
\right)
\big),
\\
\widetilde{x}_{t_{k+1}}
&:=
\widetilde{x}_{t_k}
+\Delta t_k
\left(
u_{t_k}(\widetilde{x}_{t_k})
+c_{\mathrm{opt}}
\right).
\end{aligned}
\end{align}

Since the terminal loss depends on $c_{t_k}$ only through the estimated terminal sample, \citet{Li2026hardflow} reformulate each subproblem directly in terminal space. The optimized terminal sample is then mapped back to the sampling path using the local reverse parameterization, denoted here by $\widehat{x}_{t_{k+1} \mid 1}$. The regularization term on the guidance velocity can be equivalently expressed in terms of the terminal sample \citep{Li2026hardflow}, resulting in the following recursion for $k=1,\ldots,K-1$:
\begin{align}
\label{app:step-opt}
\begin{aligned}
\widehat{x}_{1}
&:=
\widehat{x}_{1\mid t_{k+1}}\big(\widetilde{x}_{t_k}
+\Delta t_k
u_{t_k}(\widetilde{x}_{t_k})\big),
\\ x_{\mathrm{opt}}
&:=
\arg\min_{x_1}
\quad
J(x_1)
+\lambda\frac{\alpha_{t_{k+1}}^2}{2 \Delta t_k} \|x_1 -\widehat{x}_{1}\|^2,
\\
\widetilde{x}_{t_{k+1}}
&:=
\widehat{x}_{t_{k+1} \mid 1}(x_{\mathrm{opt}}).
\end{aligned}
\end{align}

A detailed analysis of the approximation error is provided by \citet{Li2026hardflow}.

\textbf{Posterior mean estimation.} Prior work in diffusion \citep{Kim2021noisescore, Chung2023DPS, Kim2025testtime} has leveraged Tweedie's formula to obtain the posterior mean estimator of the terminal sample. Under a Gaussian initial distribution and an affine conditional probability path, Tweedie's formula has also been adopted for flow matching \citep{Kim2025FlowDPS, Pourya2026flower}. However, flow matching also admits a direct derivation of the same posterior mean estimator that relies only on the affine conditional probability path, without explicit use of Tweedie's formula \citep{Feng2025GuidanceFM, Li2026hardflow, Webber2026flowmpc}.

\begin{lemma}\citep{Feng2025GuidanceFM}
\label{lemma:posterior_mean}
Let $Z:=(X_0,X_1)\sim\pi \in \Gamma(p_0,p_1)$ and consider the affine conditional probability path $X_t:=\alpha_tX_1+\beta_tX_0, t\in[0,1]$, where $(\alpha_t,\beta_t)$ is a differentiable scheduler satisfying $\alpha_0 = \beta_1 = 0, \alpha_1=\beta_0=1$. The induced conditional velocity field $u_{t\mid Z}$ is therefore
$u_{t\mid Z}(X_t) = \dot{\alpha}_tX_1+\dot{\beta}_tX_0$. Define $\Lambda_t:=\alpha_t\dot{\beta}_t-\dot{\alpha}_t\beta_t$.
Assume that $\Lambda_t\neq0$ for all $t\in[0,1]$. Then, for every $x\in\mathbb{R}^d$, the posterior means are given by
\[
\widehat{x}_{0\mid t}(x)
:=
\mathbb{E}[X_0\mid X_t=x]
=
\frac{-\dot{\alpha}_t\,x+\alpha_t\,u_t(x)}
{\Lambda_t}.
\]
and
\[
\widehat{x}_{1\mid t}(x)
:=
\mathbb{E}[X_1\mid X_t=x]
=
\frac{\dot{\beta}_t\,x-\beta_t\,u_t(x)}
{\Lambda_t}.
\]
Moreover, they satisfy the identity
\begin{align}
\label{app:identity}
    x
    =
    \alpha_t\widehat{x}_{1\mid t}(x)
    +
    \beta_t\widehat{x}_{0\mid t}(x).
\end{align}
\end{lemma}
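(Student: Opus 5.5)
We prove Lemma~\ref{lemma:posterior_mean} by conditioning the two defining linear relations on $X_t=x$ and solving the resulting $2\times2$ linear system for the two posterior means. The first relation is the path itself, $X_t=\alpha_tX_1+\beta_tX_0$. Taking $\mathbb E[\,\cdot\mid X_t=x]$ and using that $X_t$ is measurable with respect to itself gives
\[
x=\alpha_t\,\widehat x_{1\mid t}(x)+\beta_t\,\widehat x_{0\mid t}(x).
\]
This is already identity~\eqref{app:identity}; it holds for every $t$ and needs no assumption on $\Lambda_t$.

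The second relation comes from the velocity field. We use the marginalization formula stated above, $u_t(x)=\mathbb E[u_{t\mid Z}(X_t)\mid X_t=x]$. Substituting $u_{t\mid Z}(X_t)=\dot\alpha_tX_1+\dot\beta_tX_0$ and using linearity of conditional expectation gives
\[
u_t(x)=\dot\alpha_t\,\widehat x_{1\mid t}(x)+\dot\beta_t\,\widehat x_{0\mid t}(x).
\]
Together with the first relation, this is a linear system in the unknowns $(\widehat x_{1\mid t}(x),\widehat x_{0\mid t}(x))$. The system acts componentwise in $\mathbb R^d$ with the same scalar coefficient matrix
\[
A_t=\begin{pmatrix}\alpha_t&\beta_t\\ \dot\alpha_t&\dot\beta_t\end{pmatrix},
\qquad \det A_t=\alpha_t\dot\beta_t-\beta_t\dot\alpha_t=\Lambda_t .
\]

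Under the hypothesis $\Lambda_t\neq0$, the matrix $A_t$ is invertible, so Cramer's rule (or multiplying by $A_t^{-1}=\Lambda_t^{-1}\begin{pmatrix}\dot\beta_t&-\beta_t\\-\dot\alpha_t&\alpha_t\end{pmatrix}$) gives
\[
\widehat x_{1\mid t}(x)=\frac{\dot\beta_t x-\beta_t u_t(x)}{\Lambda_t},
\qquad
\widehat x_{0\mid t}(x)=\frac{-\dot\alpha_t x+\alpha_t u_t(x)}{\Lambda_t},
\]
which are the claimed formulas. A direct substitution back into the first relation confirms~\eqref{app:identity}, since $\alpha_t\dot\beta_t-\beta_t\dot\alpha_t=\Lambda_t$ while the $u_t$ terms cancel.

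The main obstacle is not the algebra but justifying the conditioning step. It requires that $X_0,X_1$ be integrable, so that the conditional expectations exist, and that $u_t$ be the version of the conditional expectation defining the marginal velocity. Since this identity is taken as given in the paper, we rely on it. We then interpret the conclusion "for every $x$" as holding for the version of the posterior means defined through $u_t$, which is equal to them $p_t$-almost everywhere.
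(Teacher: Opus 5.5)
Your proof is correct and is the standard direct derivation the paper has in mind; the paper gives no proof of its own and defers to \citet{Feng2025GuidanceFM}: condition the path relation and the marginal-velocity identity on $X_t=x$, then invert the $2\times 2$ system whose determinant is $\Lambda_t$. Your caveat is a legitimate sharpening of the statement, not a gap: ``for every $x$'' should be read for the version of the posterior means defined through $u_t$, which agrees with the conditional expectations $p_t$-almost everywhere and presupposes integrable $X_0,X_1$.
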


\begin{corollary}
\label{cor:linear_scheduler}
Consider the linear scheduler $\alpha_t:=t,
\beta_t:=1-t$ corresponding to the affine conditional probability path $X_t:=tX_1+(1-t)X_0$.
Then the posterior means simplify to
\begin{align}
    \widehat{x}_{0 \mid t}(x)=x-t u_t(x),
    \qquad 
    \widehat{x}_{1\mid t}(x)= x+(1-t)u_t(x).
\end{align}
\end{corollary}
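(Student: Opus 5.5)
The plan is to obtain Corollary~\ref{cor:linear_scheduler} as a direct specialization of Lemma~\ref{lemma:posterior_mean}. That requires two things: checking that the linear scheduler satisfies the lemma's hypotheses, and substituting it into the two closed-form expressions.

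\textbf{Hypotheses.} The scheduler $\alpha_t=t$, $\beta_t=1-t$ is differentiable, with $\alpha_0=0$, $\beta_1=0$, $\alpha_1=1$ and $\beta_0=1$. Its derivatives are $\dot\alpha_t=1$ and $\dot\beta_t=-1$. Hence
\[
\Lambda_t=\alpha_t\dot\beta_t-\dot\alpha_t\beta_t=-t-(1-t)=-1,
\]
which is nonzero for every $t\in[0,1]$. This is the only nondegeneracy condition the lemma imposes, so Lemma~\ref{lemma:posterior_mean} applies uniformly on $[0,1]$, including at the endpoints.

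\textbf{Substitution.} Inserting these values into the two formulas of Lemma~\ref{lemma:posterior_mean} gives
\[
\widehat{x}_{0\mid t}(x)=\frac{-x+t\,u_t(x)}{-1}=x-t\,u_t(x),
\qquad
\widehat{x}_{1\mid t}(x)=\frac{-x-(1-t)\,u_t(x)}{-1}=x+(1-t)\,u_t(x),
\]
which are exactly the claimed expressions.

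As a consistency check, the identity \eqref{app:identity} also holds for these expressions:
\[
t\bigl(x+(1-t)u_t(x)\bigr)+(1-t)\bigl(x-t\,u_t(x)\bigr)=x.
\]
There is no genuine obstacle here; the only point needing care is confirming that $\Lambda_t$ is a nonzero constant, so that the division in the lemma is valid for all $t$.
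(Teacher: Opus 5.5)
Your proof is correct and follows the paper's route. The paper gives no separate argument: the corollary is the direct specialization of Lemma~\ref{lemma:posterior_mean}, obtained by checking $\dot\alpha_t=1$, $\dot\beta_t=-1$ and $\Lambda_t=-1\neq 0$, then substituting into the two closed-form posterior means, exactly as you do.
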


%Recent work \citep{Li2026hardflow, Pourya2026flower} leverages 
\begin{remark}
The identity in (\ref{app:identity}) yields an implicit characterization of the reverse parameterization. 
For $\alpha_t\neq0$, 
\[
x_1=\widehat{x}_{1\mid t}(x_t)
\quad\Longleftrightarrow\quad
x_t=\alpha_t x_1+\beta_t\widehat{x}_{0\mid t}(x_t).
\]
Therefore, recovering $x_t$ from a given terminal sample $x_1$ is equivalent to finding a fixed point of the mapping
\[
T_{x_1}(x)
:=
\alpha_t x_1
+
\beta_t\widehat{x}_{0\mid t}(x).
\]  
If $T_{x_1}$ is a contraction, the Banach fixed-point theorem guarantees that, for any initialization $x^{(0)}$, the iterates
\[
x^{(k+1)}=T_{x_1}(x^{(k)})
\]
converge to the unique fixed point.
\end{remark}

Given a reference intermediate sample $x_t^{\mathrm{ref}}$,
\citet{Li2026hardflow} approximate the solution of the fixed-point
equation by a single iteration initialized at
$x^{(0)}:=x_t^{\mathrm{ref}}$. Thus, for a given terminal sample $x_1$,
\[
\widehat{x}_{t\mid 1}
\bigl(x_1;x_t^{\mathrm{ref}}\bigr)
:=
x^{(1)}
=
T_{x_1}(x^{(0)})
=
\alpha_t x_1
+
\beta_t
\widehat{x}_{0\mid t}
\bigl(x_t^{\mathrm{ref}}\bigr).
\]
\cite{Pourya2026flower} instead define the reverse parameterization by
\[
\widehat{x}_{t\mid 1}(x_1):=\alpha_t x_1 + \beta_t x_0, 
\]
where $x_0 \sim p_0$ is independently sampled at each step.

\subsection{Algorithmic Details}
\label{app:algorithmic-details}

\subsubsection{Particle-level approximation of distributional guidance.}

We derive the particle-level approximation of the distributional
guidance problem in \eqref{eq:distributional-opt}.
We approximate the probability path $(\widetilde p_t)_{t\in[0,1]}$
by the empirical path and, with a slight abuse of notation, write $\widetilde p_t=\frac{1}{N}\sum_{i=1}^N\delta_{\widetilde x_t^i}$.
Thus, the probability path $(\widetilde p_t)_{t\in [0,1]}$ is represented by a system of particle trajectories $\{(\widetilde x_t^i)_{t\in [0,1]}\mid i=1,\ldots,N\}$.
The continuity equation in \eqref{eq:distributional-opt} then takes the form
\begin{equation}
\label{eq:particle-characteristics}
\frac{d}{dt}\widetilde x_t^i
=
u_t(\widetilde x_t^i)
+
c_t(\widetilde x_t^i),
\qquad
\widetilde x_0^i=x_0^i,
\qquad
i=1,\ldots,N.
\end{equation}
On the time grid $0=t_1<\ldots<t_K=1$, $\Delta t_k:=t_{k+1}-t_k$, the forward Euler
discretization of \eqref{eq:particle-characteristics} is
\begin{equation}
\label{eq:particle-euler}
\widetilde x_{t_{k+1}}^i
=
\widetilde x_{t_k}^i
+
\Delta t_k
\left(
u_{t_k}(\widetilde x_{t_k}^i)
+
c_{t_k}(\widetilde x_{t_k}^i)
\right).
\end{equation}
Thus, the particle-level time-discrete approximation of the guidance regularization term in \eqref{eq:distributional-opt} becomes
\begin{align*}
\lambda
\int_0^1
\int
\frac{1}{2}\|c_t(x)\|^2
\,d\widetilde p_t(x)\,dt
=
\frac{\lambda}{N}
\sum_{i=1}^N
\int_0^1
\frac{1}{2}
\|c_t(\widetilde x_t^i)\|^2\,dt
\approx
\frac{\lambda}{N}
\sum_{i=1}^N
\sum_{k=1}^{K-1}
\frac{\Delta t_k}{2}
\|c_{t_k}(\widetilde x_{t_k}^i)\|^2 .
%\label{eq:particle-kinetic}
\end{align*}

The distributional guidance problem in \eqref{eq:distributional-opt} is then approximated by
\begin{align}
\min_{\substack{
(\widetilde x_{t_k}^i)_{i,k},\,
(c_{t_k}^i)_{i,k}
}}
\quad &
J\left(
\frac{1}{N}
\sum_{i=1}^N
\delta_{\widetilde x_{1}^i}
\right)
+
\frac{\lambda}{N}
\sum_{i=1}^N
\sum_{k=1}^{K-1}
\frac{\Delta t_k}{2}
\|c_{t_k}^i\|^2
\label{eq:particle-discrete-opt}
\\[-1mm]
\text{s.t.}\quad &
\widetilde x_{0}^i=x_0^i,
\qquad i=1,\ldots,N,
\notag\\
&
\widetilde x_{t_{k+1}}^i
=
\widetilde x_{t_k}^i
+
\Delta t_k
\left(
u_{t_k}(\widetilde x_{t_k}^i)
+
c_{t_k}^i
\right),
\quad
i=1,\ldots,N,\quad
k=1,\ldots,K-1 .
\notag
\end{align}

The single-trajectory approximation described in Appendix~\ref{app:flow_matching} replaces the corresponding time-discrete full-horizon optimization problem by a sequence of single-step terminal-space subproblems.
We apply this approximation to each particle trajectory.
Thus, at step $k$, only the current guidance variables $(c_{t_k}^i)_{i=1}^N$ are considered.
For each particle, the terminal estimate is obtained by evaluating the posterior mean map at the unguided Euler update,
\begin{equation*}
\widehat{x}^{i}_{1}
:=
\widehat{x}_{1 \mid t_{k+1}}
\left(
\widetilde{x}^{i}_{t_k}
+
\Delta t_k
u_{t_k}(\widetilde{x}^{i}_{t_k})
\right),
\qquad
i=1,\ldots,N.
\end{equation*}
The particle-wise guidance variables $(c_{t_k}^i)_{i=1}^N$ are then reparameterized by terminal variables $(x_1^i)_{i=1}^N$.
The resulting subproblem is
\begin{equation*}
\begin{aligned}
\big(x^{i}_{\mathrm{opt}}\big)_{i=1}^N
\in
\arg\min_{(x^{i}_1)_{i=1}^N}
\;
J
\left(
\frac{1}{N}
\sum_{i=1}^{N}
\delta_{x^{i}_1}
\right)
+
\frac{\lambda}{N}
\sum_{i=1}^{N}
\frac{\alpha_{t_{k+1}}^2}{2\Delta t_k}
\left\|
x^{i}_1-\widehat{x}^{i}_{1}
\right\|^2 .
\end{aligned}
\end{equation*}
Unlike in the single-trajectory setting, the loss functional $J$ acts on the empirical distribution of the terminal particles and therefore couples the terminal variables in a joint optimization.  %, while the quadratic regularizer remains separable across particles.
Finally, the optimized terminal particles are mapped back to the sampling path using the local reverse parameterization,
\begin{equation*}
\widetilde{x}^{i}_{t_{k+1}}
:=
\widehat{x}_{t_{k+1} \mid 1}
\big(x^{i}_{\mathrm{opt}}\big),
\qquad
i=1,\ldots,N.
\end{equation*}

\subsubsection{Intermediate proxy distributions}
\textbf{Objective-space normalization.}
All objective-space computations in ParetoTransport are performed after min--max normalization using statistics computed once from the offline dataset $\mathcal D_{\mathrm{off}}$. 
Since the $W_2$ term uses Euclidean distances in objective space, normalization ensures that its transport cost is not dominated by differences in objective scales.
For objective $r=1,\ldots,m$, let
\[
f_r^{\min}
:=
\min_{x\in\mathcal D_{\mathrm{off}}} f_r(x),
\qquad
f_r^{\max}
:=
\max_{x\in\mathcal D_{\mathrm{off}}} f_r(x).
\]
We define the normalized objective functions
\begin{equation*}
\bar f_r(x)
:=
\frac{
f_r(x)-f_r^{\min}
}{
f_r^{\max}-f_r^{\min}
},
\qquad r=1,\ldots,m,
\end{equation*}
and keep these normalization constants fixed throughout surrogate training, flow model training, and sampling.
For simplicity, we write $f$ for the normalized objective function $\bar f :=(\bar f_1,\ldots,\bar f_m)$.

\textbf{Crowding distance.}
For a set $\{y^j\}_{j=1}^M\subset\mathbb R^m$, we compute the standard NSGA-II \citep{Deb2002NSGA2} crowding distance.
For each objective $r$, the points are sorted according to their
$r$-th objective value. For an interior point $y^j$, its contribution
along objective $r$ is
\begin{equation*}
d_{r}^{j}
:=
\frac{
y_{r}^{j^+}-y_{r}^{j^-}
}{
y_{r}^{\max}-y_{r}^{\min}
},
\end{equation*}
where $j^-$ and $j^+$ denote the adjacent points in the ordering of
objective $r$. The total crowding
distance is
\begin{equation*}
d^j
:=
\sum_{r=1}^m d_r^j.
\end{equation*}
As in NSGA-II, boundary points are assigned infinite crowding distance
during the sorting procedure.

\paragraph{Proxy-update operator.}
We denote by $\mathcal{R}$ the proxy-update operator that combines the current proxy with the optimized objective vectors to construct the next proxy distribution, updating its support and assigning probability masses according to local crowding so that sparsely represented regions receive more mass and densely represented regions receive less.

In particular, we combine the support of the current proxy with the new candidate objective vectors. Let $(y^j)_{j=1}^M$ denote the $M$ points retained after non-dominated filtering and crowding-based truncation.
We then compute the crowding distance $d^j$ for each point. Since boundary points are assigned infinite crowding distance, we replace each infinite value by the largest finite crowding distance in the archive,
\begin{equation*}
\widetilde d^j
:=
\begin{cases}
d^j,
& d^j<\infty,\\[1mm]
\displaystyle
\max_{\ell:\,d^\ell<\infty} d^\ell,
& d^j=\infty.
\end{cases}
\label{eq:finite-crowding}
\end{equation*}
Direct normalization of these scores can assign excessive mass to isolated points. We therefore impose the upper bound
\[
b^j
\leq
\frac{\kappa}{M},
\]
where $\kappa\geq1$ is a fixed cap parameter. The target marginal is
defined by
\begin{equation*}
b^j
:=
\min
\left\{
\frac{\kappa}{M},
\lambda w^j
\right\},
\qquad
j=1,\ldots,M,
\end{equation*}
where $\lambda>0$ is chosen such that
\[
\sum_{j=1}^M b^j=1.
\]
Thus, the uncapped masses remain proportional to the crowding distances,
while excess mass from capped points is redistributed proportionally
among the remaining points. We use $\kappa=5$.
The resulting next target is
\begin{equation*}
\mu_{t_{k+1}}
:=
\sum_{j=1}^M
b^j\delta_{y^j}.
\end{equation*}

\subsubsection{Loss functional}
\label{app:loss-functional}
We derive the implementation details of the loss functional in equation \eqref{eq:directed-discrepancy},
\begin{equation*}
J_k(q)
=
W_2^2(f_\#q,\mu_{t_k})
+ \gamma L_n(f_\#q\|\mu_{t_k}).
\end{equation*}
For candidate terminal particles $X:=(x_1^i)_{i=1}^N$, the terminal design-space distribution is represented by the empirical distribution $q:=\frac{1}{N}\sum_{i=1}^N \delta_{x_1^i}$.
The particle-level optimization acts on the particle locations $x_1^i$, while each particle retains mass $\frac{1}{N}$. 
Consequently, the induced objective-space distribution is $f_\# q=\frac{1}{N}\sum_{i=1}^N\delta_{f(x_1^i)}$.

\textbf{Directional improvement term.} We give details on the surrogate projection direction and the particle-level implementation of the transport term $L_n$,
\begin{equation*}
    L_n(f_\#q\|\mu_{t_k})
=
-
\left(
\mathbb E_{Y\sim f_\#q}[Y]
-
\mathbb E_{Y\sim\mu_{t_k}}[Y]
\right)^\top n. 
\end{equation*}
Following \citet{Das1998nbi}, we construct the Convex Hull of Individual Minima (CHIM) from the objective vectors corresponding to the individual objective minima. For each $r=1,\ldots,m$, let
\begin{equation*}
x_{\mathrm{IM}}^r
\in
\arg\min_{x\in\mathcal D_{\mathrm{off}}} f_r(x),
\qquad
z^r
:=
f(x_{\mathrm{IM}}^r)
\in\mathbb R^m.
\end{equation*}
The CHIM is $\mathcal H:=\operatorname{conv}\{z^1,\ldots,z^m\}$.
We use a unit normal $n$ to the affine hull of $\mathcal H$ as the
global surrogate direction in Eq.~(\ref{eq:directed-discrepancy}).
To compute the normal, for $m=2$ we set
\begin{equation*}
v:=z^2-z^1,
\qquad
\widetilde n:=(-v_2,v_1)^\top,
\end{equation*}
while for $m=3$ we use
\begin{equation*}
\widetilde n
:=
(z^2-z^1)\times(z^3-z^1).
\end{equation*}
Finally, we normalize the vector and resolve its sign ambiguity by
orienting it toward simultaneous objective decrease:
\begin{equation}
\bar n
:=
\frac{\widetilde n}{\|\widetilde n\|_2},
\qquad
n
:=
\begin{cases}
\bar n,
& \bar n^\top(-\mathbf 1_m)\geq 0,\\
-\bar n,
& \text{otherwise}.
\end{cases}
\label{eq:chim-normal}
\end{equation}
The resulting direction $n$ is computed once from the offline dataset
and kept fixed throughout guided sampling.

At sampling step $k$, let $(\widehat{x}_1^i)_{i=1}^N$ denote the posterior terminal estimates.
For any fixed reference distribution $\rho\in\mathcal P_2(\mathbb R^m)$, the particle-level transport term is
\[
L_n(f_\#q\|\rho)
=
-
\left(
\frac{1}{N}\sum_{i=1}^N f(x_1^i)
-
\mathbb E_{Y\sim\rho}[Y]
\right)^\top n.
\]
Since $\mathbb E_{Y\sim\rho}[Y]^\top n$ is constant with respect to the candidate particles, the gradient of $L_n$ is independent of the choice of the fixed reference distribution. In the implementation, we use the empirical distribution of the posterior terminal estimates $\rho:=\frac{1}{N}\sum_{i=1}^N\delta_{f(\widehat{x}_1^i)}$ as the reference for the transport term. 
%This gives
%\[
%L_n(f_\#q\|\rho)
%=
%-\frac{1}{N}\sum_{i=1}^N
%\left(
%f(x_1^i)-f(\widehat{x}_1^i)
%\right)^\top n.
%\]
The posterior estimates are held fixed during the terminal optimization, so this formulation induces the same gradient as $L_n(f_\#q\|\mu_{t_k})$.
We center the loss at the posterior estimates so that it directly represents the displacement of the candidate objective vectors along the surrogate projection direction.
The proxy $\mu_{t_k}$ remains the reference for the Wasserstein term, where it controls probability-mass allocation.

\textbf{Wasserstein term.} We approximate the discrete optimal transport problem defining $W_2^2(f_\#q,\mu_{t_k})$ by an entropically regularized optimal transport problem. 
The source marginal of the objective-space distribution $f_\# q$ in the optimal transport problem is $a:=\frac{1}{N}\mathbf 1_N$.
In contrast, the target marginal $b_{t_k}:=(b_{t_k}^j)_{j=1}^M$ is determined by the crowding-based
weighting rule. For simplicity, at a fixed step $k$, we write $y:=y_{t_k}$ and $b:=b_{t_k}$.

We define the quadratic cost matrix $C(X)\in\mathbb R^{N\times M}$ by $C_{ij}(X):=\|f(x_1^i)-y^j\|^2$.
The set of admissible couplings is $\mathcal U(a,b)
:=
\left\{
P\in\mathbb R_+^{N\times M}
:
P\mathbf 1_M=a,\;
P^\top\mathbf 1_N=b
\right\}$. The entropically regularized optimal transport problem is
\[
\mathcal W_\varepsilon(X)
:=
\min_{P\in\mathcal U(a,b)}
F_\varepsilon(X,P), 
\quad 
F_\varepsilon(X,P)
:=
\langle P,C(X)\rangle
+
\varepsilon
\sum_{i=1}^N\sum_{j=1}^M
P_{ij}(\log P_{ij}-1).
\]

%During optimization, the source marginal $a$ and the target marginal $b$ are fixed, so $\mathcal U(a,b)$ is independent of $X$.
For $\varepsilon>0$, $F_\varepsilon(X,\cdot)$ is strictly convex and therefore admits a unique minimizer $P_\varepsilon^\star(X)$ \citep{Peyre2019ComputationalOT}. Hence, Danskin's theorem yields
\begin{equation*}
\nabla_X\mathcal W_\varepsilon(X)
=
\nabla_X
F_\varepsilon\bigl(X,P_\varepsilon^\star(X)\bigr),
\end{equation*}
where $P_\varepsilon^\star(X)$ is held fixed in the derivative on the right-hand side. 
Thus, gradients with respect to the terminal particles can be computed without differentiating through the optimal coupling.

In practice, we approximate the coupling using the log-domain Sinkhorn solver implemented in the POT library \citep{POT}, using $\varepsilon=10^{-3}$.
Given the resulting coupling $P$, we optimize the weighted quadratic transport cost
\[
\langle P,C(X)\rangle 
=
\sum_{i=1}^N\sum_{j=1}^M
P_{ij}\left\|f(x_1^i)-y^j\right\|^2
\]
with respect to the terminal particles $X$.

\subsection{Implementation details}
\label{app:implementation-details}
We provide details on the implementation of ParetoTransport and the datasets used in our experiments.
\subsubsection{Training details}
\label{app:training-details}
For the surrogate models, we adopt the training configuration from~\citet{Xue2024OFFMOO}. We use the multiple model setting and train an individual surrogate for each objective. Each surrogate is a three-layer MLP with two $2048$-dimensional hidden layers and ReLU activations. The models are trained on the offline dataset using mean squared error (MSE) loss and the Adam optimizer with initial learning rate $1\times 10^{-3}$ and learning rate decay $0.98$ per epoch. We train for $200$ epochs with a batch size of $128$. For the flow matching model, we use an architecture based on \citet{Yuan2025paretoflow}: a four-layer MLP with three hidden layers and SeLU activations. We use a hidden width of $256$ and train for up to $200$ epochs with early stopping (patience $20$), using the Adam optimizer with learning rate of $1\times10^{-3}$ and a batch size of $512$.

We min-max normalize the design space to $[0,1]^d$ and set the initial distribution to be uniform over this cube. 
%We use independent-coupling conditional flow matching. 
We integrate the learned velocity field along the affine probability path, using an explicit Euler scheme on a uniform grid of $T=100$ points. We generate $N=256$ solutions in parallel. As in~\citet{Yuan2025paretoflow}, guidance is confined to the final phase of the trajectory, $t>0.8$; earlier steps are plain Euler updates. At each guidance step, we run $10$ Adam steps with learning rate $0.01$ to minimize the loss functional $J$ in \eqref{eq:directed-discrepancy}, using $\gamma=0.5$.

\subsubsection{MOO Metrics}
\label{app:moo-metrics}
We provide details on the evaluation metrics.

Hypervolume (HV)~\citep{Zitzler1999HV} measures $m$-dimensional volume of the objective-space region dominated by the solution set $A \subset \mathbb{R}^{m}$ and bounded by a reference point $r \in \mathbb{R}^{m}$. The reference point is selected such that $a_i\leq r_i$ for all $a\in A$ and $i=1,\ldots,m$. The hypervolume is defined as
\begin{align}
    \mathrm{HV}(A, r) := \lambda \Bigg( \bigcup_{a \in A} [a_1, r_1] \times [a_2, r_2] \times \dots \times [a_m, r_m] \Bigg),
\end{align}
where $\lambda$ denotes the Lebesgue measure in $\mathbb{R}^{m}$.

Generational Distance (GD)~\citep{Lamont1999GD, Schutze2012} measures the root mean squared distance from points in the solution set $A$ to their closest points in a discrete approximation $Z$ of the Pareto front,
\begin{align}
    \mathrm{GD}(A,Z) := \sqrt{\frac{1}{|A|} \sum_{a \in A} d(a,Z)^2}.
\end{align}

Analogously, the Inverted Generational Distance (IGD)~\citep{Coello2004IGD, Schutze2012} measures the root mean squared distance from points in a discrete approximation $Z$ of the Pareto front to their closest points in the solution set $A$, $\mathrm{IGD}(A,Z) := \mathrm{GD}(Z,A)$.

HV can be strongly influenced by individual solutions, whereas GD and IGD aggregate distances across the solution set and Pareto-front approximation, respectively, providing complementary measures of convergence and coverage.
\FloatBarrier
\subsubsection{Dataset details}
\label{app:dataset-details}
We provide further details on the datasets\footnote{We focus on tasks with $m\leq3$, and leave the extension to the many-objective setting ($m \geq 4$) for future work. This choice is motivated by the complex structure of high-dimensional Pareto fronts.} in Tables~\ref{tab:ref-synthetic} and~\ref{tab:ref-re}. We use the original reference points provided in the code repository of \citet{Xue2024OFFMOO}.
\input{tables/synthetic_datasets}
\input{tables/re_datasets}
\FloatBarrier
\subsection{Theoretical results}
\label{app:proof}

\textbf{Corollary~\ref{corollary:ot}.} \textit{For $\mu,\nu\in\mathcal P_2(\mathbb R^m)$,
$\mathrm{GD}(\mu\|\nu)\leq W_2(\mu,\nu)$ and
$\mathrm{IGD}(\mu\|\nu)\leq W_2(\mu,\nu)$.}
\begin{proof}
Let $\pi\in\Gamma(\mu,\nu)$. Since
$y\in\operatorname{supp}\nu$ for $\pi$-almost every $(x,y)$, it follows that $d(x,\operatorname{supp}\nu)\leq \|x-y\|$ for $\pi$-a.e. $(x,y)$.
Therefore,
\[
\operatorname{GD}^2(\mu\Vert\nu)
\leq
\int_{\mathbb{R}^m\times\mathbb{R}^m}
    \|x-y\|^2\,d\pi(x,y).
\]
Taking the infimum over $\pi\in\Gamma(\mu,\nu)$ yields $\operatorname{GD}(\mu\Vert\nu)\leq W_2(\mu,\nu)$.
Since $\operatorname{IGD}(\mu\Vert\nu)=\operatorname{GD}(\nu\Vert\mu)$
and $W_2$ is symmetric, the IGD bound follows.
\end{proof}

\begin{remark}[One-sided optimal transport interpretation]
GD and IGD admit a one-sided optimal-transport interpretation in which one marginal is relaxed subject to a hard support constraint. In contrast, $W_2(\mu,\nu)$ fixes both marginals and therefore also accounts for probability-mass allocation. Indeed, as a direct consequence of Lemma~31 of \citet{Vayer2023distOT}, attributed therein to \citet{Canas2012distOT}, we have
\[
    \operatorname{GD}^2(\mu\Vert\nu)
    =
    \min_{\rho\in\mathcal P_2(\operatorname{supp}\nu)}
    W_2^2(\mu,\rho), 
    \qquad
    \operatorname{IGD}^2(\mu\Vert\nu)
    =
    \min_{\rho\in\mathcal P_2(\operatorname{supp}\mu)}
    W_2^2(\nu,\rho).
\]
Thus, Corollary~\ref{corollary:ot} follows also by evaluating the
respective minima at $\rho=\nu$ and $\rho=\mu$.
\end{remark}

\begin{assumption}[Projection condition]
\label{ass:projection}
    The Pareto front $M^\ast\subset\mathbb{R}^m$ is nonempty and closed. There exists a set $U\supset M^\ast$ such that every $y\in U$ has a unique nearest point in $M^\ast$, denoted by $\Pi_{M^\ast}(y):=\operatorname*{argmin}_{z\in M^\ast}\|y-z\|$.
\end{assumption}

\begin{assumption}[Moment and concentration condition]
\label{ass:moment}
The offline front and Pareto front distributions satisfy $\mu_0,\mu^\ast\in\mathcal{P}_2(\mathbb{R}^m)$ and $\mu_0(U)=1$.
\end{assumption}

\begin{assumption}[Invariance under transport]
\label{ass:invariance}
For a fixed $\eta\in(0,1]$, define $T_\eta(y):=(1-\eta)y+\eta\Pi_{M^\ast}(y)$, $y \in U$. $T_\eta$ satisfies $T_\eta(U)\subseteq U$.
\end{assumption}

\textbf{Theorem~\ref{thm:wasserstein-contraction}.}
{\itshape
Let $(\mu_k)_{k\geq 0}$ denote the sequence of intermediate proxy distributions defined by the iterative procedure in Section~\ref{sec:mass_transport}.  If Step (i) achieves exact matching, $W_2(f_\#q_{k+1},\mu_k)=0$ for all $k$, and the technical Assumptions (1)--(3) hold, then
\begin{equation}
\label{eq:app:w2-bound}
W_2(\mu_k,\mu^\ast)
\leq
(1-\gamma)^k
W_2(\mu_0,\bar\mu)
+
W_2(\bar\mu,\mu^\ast).
\end{equation}
}

\begin{proof}
Since $W_2$ is a metric on $\mathcal{P}_2(\mathbb{R}^m)$, exact matching
in Step~(i) implies $f_\#q_{k+1}=\mu_k$, and hence by Step~(ii), $\mu_{k+1} = (T_\eta)_\#\mu_k$.
In particular, $\mu_k=(T_\eta^k)_\#\mu_0$.

\textbf{Step 1:} We first establish $T_\eta^k=(1-\xi_k) \operatorname{Id}+\xi_k\Pi_{M^\ast}$ for $\xi_k:=1-(1-\eta)^k$.
By Assumptions~\ref{ass:projection} and~\ref{ass:invariance}, 
\begin{equation}
\label{eq:projection-invariance}
    \Pi_{M^\ast}(T_\eta(y))=\Pi_{M^\ast}(y), \quad y \in U.
\end{equation}
Indeed, fix $y\in U$, let $z:=\Pi_{M^\ast}(y)$ and
$r:=\|y-z\|=d(y,M^\ast)$, and set
$w:=\Pi_{M^\ast}(T_\eta(y))$. The latter is well defined by
Assumption~\ref{ass:invariance}. Since $w\in M^\ast$, $r\leq\|y-w\|$. 
Since $w$ is a nearest point to $T_\eta(y)$ and $z\in M^\ast$,
\[
\begin{aligned}
    \|y-w\|
    \leq
    \|y-T_\eta(y)\|
    +
    \|T_\eta(y)-w\|
    \leq
    \eta r+(1-\eta)r
    =
    r.
\end{aligned}
\]
Thus $\|y-w\|=r$, so $w$ is also a nearest point in $M^\ast$ to $y$.
By uniqueness in Assumption~\ref{ass:projection}, $w=z$, proving
\eqref{eq:projection-invariance}.
By Assumption~\ref{ass:invariance}, the identity
\eqref{eq:projection-invariance} may be iterated. A direct induction therefore gives
\begin{equation}
    T_\eta^k(y)
    =
    (1-\xi_k)y+\xi_k\Pi_{M^\ast}(y),
    \qquad y\in U.
    \label{eq:explicit-iterate}
\end{equation}

\textbf{Step 2:} Assumptions~\ref{ass:moment} and~\ref{ass:invariance}, together with $\mu_k=(T_\eta^k)_\#\mu_0$, imply $\mu_k(U)=1$ for all $k$.
Moreover, all iterates $\mu_k$ belong to $\mathcal{P}_2(\mathbb{R}^m)$.
Indeed, fixing any $z_0\in M^\ast$, for $y\in U$,
\[
    \|T_\eta(y)\|
    \leq
    \|y\|+ \|y - T_\eta(y)\|
    =
    \|y\|+ \eta \|y - \Pi_{\mathcal{M}^\ast}(y)\|
    \leq
    (1+\eta)\|y\|+\eta\|z_0\|,
\]
so finite second moments are preserved by $(T_\eta)_\#$.

\textbf{Step 3:} We now relate the iteration to a Wasserstein geodesic.
Since $M^\ast$ is nonempty and closed by Assumption~\ref{ass:projection}, there exists a Borel measurable
nearest-point selection $P_{M^\ast}:\mathbb{R}^m\to M^\ast$.
On $U$ the nearest point is unique, hence $P_{M^\ast}(y)=\Pi_{M^\ast}(y)$, $y \in U$.
Since $\mu_0(U)=1$ by Assumption~\ref{ass:moment}, $(P_{M^\ast})_\#\mu_0=(\Pi_{M^\ast})_\#\mu_0=\bar{\mu}$.

By Lemma~31 of \citet{Vayer2023distOT}, it holds that
\begin{equation}
    W_2^2(\mu_0,\bar{\mu})
    =
    \int_{\mathbb{R}^m}
        \|y-P_{M^\ast}(y)\|^2\,d\mu_0(y),
    \label{eq:projection-cost}
\end{equation}
and the coupling $\pi:=(\operatorname{Id},P_{M^\ast})_\#\mu_0\in\Gamma(\mu_0,\bar{\mu})$ is an optimal transport plan between $\mu_0$ and $\bar{\mu}$.
Since $\mu_0\in\mathcal{P}_2(\mathbb{R}^m)$ by Assumption~\ref{ass:moment}, and $\bar{\mu}\in\mathcal{P}_2(\mathbb{R}^m)$, we may apply the displacement-interpolation theorem $5.27$ of \citet{Santambrogio2015optimal} to $\pi$. Thus
\begin{equation}
    \mu^{(s)}
    :=
    \bigl((1-s)\operatorname{pr}_1+s\operatorname{pr}_2\bigr)_\#
    \pi,
    \qquad s\in[0,1],
    \label{eq:displacement-interpolation}
\end{equation}
is a constant-speed $W_2$-geodesic from $\mu_0$ to $\bar{\mu}$, that is,
\begin{equation}
    W_2(\mu^{(s)},\bar{\mu})
    =
    (1-s)W_2(\mu_0,\bar{\mu}),
    \qquad s\in[0,1].
    \label{eq:constant-speed}
\end{equation}

Using the Monge form of $\pi$ together with $(P_{M^\ast})_\#\mu_0=(\Pi_{M^\ast})_\#\mu_0$ and $\mu_0(U)=1$, we may equivalently write
\begin{equation}
    \mu^{(s)}
    =
    \bigl((1-s)\operatorname{Id}
          +s\Pi_{M^\ast}\bigr)_\#\mu_0 .
    \label{eq:geodesic-projection}
\end{equation}
Combining $\mu_k=(T_\eta^k)_\#\mu_0$,
\eqref{eq:explicit-iterate}, and
\eqref{eq:geodesic-projection} yields $\mu_k=\mu^{(\xi_k)}$.
Therefore, by \eqref{eq:constant-speed},
\begin{equation*}
    W_2(\mu_k,\bar{\mu})
    =
    W_2(\mu^{(\xi_k)},\bar{\mu}) 
    =
    (1-\xi_k)W_2(\mu_0,\bar{\mu}) 
    =
    (1-\eta)^k W_2(\mu_0,\bar{\mu}).
\end{equation*}
%Since $\gamma\in(0,1]$ by Assumption~\ref{ass:invariance}, $W_2(\mu_k,\bar{\mu})\longrightarrow0$ for $k\longrightarrow \infty$.
Finally, $\mu^\ast\in\mathcal{P}_2(\mathbb{R}^m)$ by
Assumption~\ref{ass:moment}, so the triangle inequality for $W_2$ gives
\[
\begin{aligned}
    W_2(\mu_k,\mu^\ast)
    \leq
    W_2(\mu_k,\bar{\mu})
    +
    W_2(\bar{\mu},\mu^\ast) 
    =
    (1-\eta)^kW_2(\mu_0,\bar{\mu})
    +
    W_2(\bar{\mu},\mu^\ast),
\end{aligned}
\]
which proves the claim.
\end{proof}

\FloatBarrier
\subsection{Additional Results}
\label{app:additional-results}

\input{tables/zdt_HV}
\input{tables/zdt_GD}
\input{tables/zdt_IGD}
\input{tables/zdt_W2}

\input{tables/dtlz_HV}
\input{tables/dtlz_GD}
\input{tables/dtlz_IGD}
\input{tables/dtlz_W2}

\input{tables/re_HV}
\input{tables/re_GD}
\input{tables/re_IGD}
\input{tables/re_W2}

\FloatBarrier

\subsection{Additional Ablations}
\label{app:additional-ablations}
\subsubsection{Wasserstein and directional improvement components}
\input{tables/ablations_components_zdt}
\input{tables/ablations_components_dtlz}
\FloatBarrier

\subsubsection{Transport Strength $\gamma$}
\normalfont
The Wasserstein and directional improvement terms can induce gradients of different magnitudes with respect to the terminal particles. 
To make $\gamma$ control their relative contribution rather than compensate for differences in their raw gradient scales, we rescale the $L_n$ gradient relative to the $W_2$ gradient.
Following the notation in Appendix~\ref{app:loss-functional}, let
\[
g_W
:=
\nabla_X W_2^2(f_\#q,\mu_{t_k}),
\qquad
g_n
:=
\nabla_X L_n(f_\#q\|\mu_{t_k}).
\]
We define
\[
\widetilde g_n
:=
\frac{\|g_W\|_2}{\|g_n\|_2}g_n,
\]
such that $\|\widetilde g_n\|_2=\|g_W\|_2$. The combined guidance gradient is then $g=g_W+\gamma \widetilde g_n$. Thus, $\gamma$ directly controls the relative strength of the transport contribution.

We study the effect of the transport strength by varying
\(
\gamma
\in
\{0,0.1,0.2,0.5,1,2,5,10\},
\)
and report GD, IGD, $W_2$, and HV for all ZDT and DTLZ tasks in Tables~\ref{tab:zdt-gamma-hv}--\ref{tab:dtlz-gamma-w2}.

\input{tables/ablation_gamma/zdt_gamma_hv}
\input{tables/ablation_gamma/zdt_gamma_gd}
\input{tables/ablation_gamma/zdt_gamma_igd}
\input{tables/ablation_gamma/zdt_gamma_w2}
\input{tables/ablation_gamma/dtlz_gamma_hv}
%\input{tables/ablation_gamma/dtlz_gamma_gd}
\input{tables/ablation_gamma/dtlz_gamma_igd}
\input{tables/ablation_gamma/dtlz_gamma_w2}

\end{document}

%% file: math_commands.tex
\usepackage{amsmath,amsfonts,bm}

\def\eqref#1{equation~\ref{#1}}
\def\1{\bm{1}}

\DeclareMathAlphabet{\mathsfit}{\encodingdefault}{\sfdefault}{m}{sl}
\SetMathAlphabet{\mathsfit}{bold}{\encodingdefault}{\sfdefault}{bx}{n}

%% file: tables/hv_representation.tex
\begin{table}[!htbp]
\centering
\caption{HV ($\uparrow$) results; see Appendix~\ref{app:additional-results} for further results. Bold $=$ best per column. Each method is run for $5$ seeds and evaluated on $256$ designs.}
\resizebox{\linewidth}{!}{%
\begin{tabular}{lccccccc}
\toprule
Method & ZDT2 & ZDT6 & DTLZ6 & DTLZ7 & RE22 & RE23 & RE24 \\
\midrule
D(best) & 4.677 & 4.608 & 10.671 & 8.992 & 4.806 & 4.751 & 4.595 \\
MultiHead-Vallina & 5.612 $\pm$ 0.098 & 4.774 $\pm$ 0.003 & 10.900 $\pm$ 0.019 & 10.715 $\pm$ 0.027 & \textbf{4.840 $\pm$ 0.000} & \textbf{4.840 $\pm$ 0.000} & 4.075 $\pm$ 0.815 \\
MultiHead-PcGrad & 5.605 $\pm$ 0.047 & 4.776 $\pm$ 0.013 & 10.918 $\pm$ 0.011 & 10.598 $\pm$ 0.028 & 4.828 $\pm$ 0.023 & 4.763 $\pm$ 0.170 & 4.544 $\pm$ 0.231 \\
MultiHead-GradNorm & 5.483 $\pm$ 0.115 & 4.709 $\pm$ 0.025 & 10.848 $\pm$ 0.060 & 9.654 $\pm$ 0.735 & 4.721 $\pm$ 0.218 & 2.714 $\pm$ 0.067 & 4.333 $\pm$ 0.094 \\
MultipleModels-Vallina & 5.600 $\pm$ 0.043 & 4.774 $\pm$ 0.004 & 10.928 $\pm$ 0.008 & 10.535 $\pm$ 0.079 & \textbf{4.840 $\pm$ 0.000} & 4.831 $\pm$ 0.018 & 4.831 $\pm$ 0.010 \\
MultipleModels-COM & 5.093 $\pm$ 0.131 & 3.435 $\pm$ 0.750 & 10.786 $\pm$ 0.017 & 9.809 $\pm$ 0.273 & 4.809 $\pm$ 0.012 & 4.817 $\pm$ 0.033 & 4.658 $\pm$ 0.071 \\
MultipleModels-IOM & 5.530 $\pm$ 0.101 & 4.777 $\pm$ 0.005 & 10.886 $\pm$ 0.008 & 10.544 $\pm$ 0.126 & \textbf{4.840 $\pm$ 0.000} & 4.813 $\pm$ 0.034 & 4.789 $\pm$ 0.104 \\
MultipleModels-ICT & 5.428 $\pm$ 0.004 & 4.723 $\pm$ 0.051 & 10.781 $\pm$ 0.022 & 10.342 $\pm$ 0.092 & \textbf{4.840 $\pm$ 0.000} & 4.455 $\pm$ 0.071 & 4.830 $\pm$ 0.011 \\
MultipleModels-RoMA & 5.496 $\pm$ 0.067 & 2.597 $\pm$ 0.027 & 10.742 $\pm$ 0.015 & 10.596 $\pm$ 0.036 & 4.543 $\pm$ 0.664 & 4.836 $\pm$ 0.006 & 3.954 $\pm$ 1.200 \\
MultipleModels-Trimentoring & 5.548 $\pm$ 0.041 & 4.045 $\pm$ 0.122 & 10.778 $\pm$ 0.049 & 10.392 $\pm$ 0.423 & \textbf{4.840 $\pm$ 0.000} & \textbf{4.840 $\pm$ 0.000} & 4.530 $\pm$ 0.406 \\
\midrule
ParetoFlow & 4.695 $\pm$ 0.071 & 4.683 $\pm$ 0.068 & 10.686 $\pm$ 0.039 & 8.875 $\pm$ 0.099 & 4.815 $\pm$ 0.028 & 4.800 $\pm$ 0.022 & 4.832 $\pm$ 0.005 \\
PreferenceGuidedDiffusion & 5.367 $\pm$ 0.064 & 4.823 $\pm$ 0.004 & 10.336 $\pm$ 0.204 & 9.767 $\pm$ 0.141 & 4.836 $\pm$ 0.003 & 4.837 $\pm$ 0.003 & \textbf{4.834 $\pm$ 0.001} \\
\midrule
ParetoTransport & \textbf{5.660 $\pm$ 0.026} & \textbf{4.825 $\pm$ 0.001} & \textbf{11.028 $\pm$ 0.019} & \textbf{10.728 $\pm$ 0.076} & 4.839 $\pm$ 0.002 & \textbf{4.840 $\pm$ 0.000} & 4.831 $\pm$ 0.002 \\
\bottomrule
\end{tabular}
}
\label{tab:HV-representative}
\end{table}

%% file: tables/rank_summary.tex
\begin{table}[!htbp]
\centering
\caption{Average rank per metric across benchmark families (lower is better). Bold $=$ best per column. Each method is run for $5$ seeds and evaluated on $256$ designs.}
\resizebox{\linewidth}{!}{%
\begin{tabular}{lcccccccccccc}
\toprule
 & \multicolumn{4}{c}{ZDT} & \multicolumn{4}{c}{DTLZ} & \multicolumn{4}{c}{RE} \\
\cmidrule(lr){2-5} \cmidrule(lr){6-9} \cmidrule(lr){10-13}
 & HV & GD & IGD & W2 & HV & GD & IGD & W2 & HV & GD & IGD & W2 \\
\midrule
MultiHead-Vallina & 4.200 & 4.200 & 5.200 & 4.400 & 3.571 & 6.000 & 5.286 & 5.571 & 4.750 & 6.417 & 5.917 & 5.833 \\
MultiHead-PcGrad & 6.000 & 5.000 & 5.400 & 4.600 & 2.857 & 4.143 & 4.286 & 4.143 & 7.000 & 7.167 & 7.167 & 6.917 \\
MultiHead-GradNorm & 8.000 & 7.200 & 8.400 & 7.000 & 9.000 & 5.429 & 7.714 & 5.857 & 11.500 & 7.833 & 10.333 & 9.000 \\
MultipleModels-Vallina & 5.600 & 4.000 & 5.800 & 6.400 & 4.429 & 4.286 & 5.429 & 4.000 & \textbf{3.417} & 5.500 & 5.083 & 5.750 \\
MultipleModels-COM & 8.800 & 7.800 & 8.200 & 7.600 & 7.714 & 8.571 & 7.000 & 7.714 & 8.750 & 5.333 & 7.167 & 5.750 \\
MultipleModels-IOM & 4.000 & 5.400 & 4.200 & 4.200 & 5.143 & 5.714 & 5.429 & 5.286 & 3.500 & 6.333 & \textbf{3.167} & 5.083 \\
MultipleModels-ICT & 7.400 & 6.800 & 7.800 & 7.600 & 7.286 & 9.286 & 7.714 & 9.286 & 4.833 & 5.250 & 6.083 & 5.750 \\
MultipleModels-RoMA & 7.000 & 7.600 & 7.600 & 7.600 & 8.714 & 10.857 & 9.714 & 10.000 & 7.750 & 9.167 & 7.583 & 8.667 \\
MultipleModels-Trimentoring & 8.000 & 9.400 & 7.800 & 9.200 & 9.000 & 6.714 & 7.143 & 6.571 & 5.750 & 6.167 & 7.250 & 6.083 \\
\midrule
ParetoFlow & 9.600 & 8.000 & 9.600 & 9.400 & 9.857 & 8.429 & 8.571 & 8.714 & 9.250 & 6.083 & 6.917 & 6.500 \\
PreferenceGuidedDiffusion & 6.800 & 9.000 & 6.400 & 7.800 & 8.571 & 7.286 & 6.857 & 8.571 & 7.167 & 8.333 & 5.917 & 7.909 \\
\midrule
ParetoTransport & \textbf{2.600} & \textbf{3.600} & \textbf{1.600} & \textbf{2.200} & \textbf{1.714} & \textbf{2.000} & \textbf{2.714} & \textbf{1.857} & 4.333 & \textbf{4.417} & 5.417 & \textbf{4.417} \\
\bottomrule
\end{tabular}
}
\label{tab:rank_summary}
\end{table}

%% file: tables/ablation_components.tex
\begin{table}[!htbp]
\centering
\small
\caption{Ablation of Wasserstein matching $W_2$ and directional improvement $L_n$. Results are averaged over all tasks within each benchmark family and reported as changes $\Delta$ relative to the offline front D(best); bold $=$ best per column.}
\label{tab:ablation-components}
\resizebox{0.8\linewidth}{!}{%
\begin{tabular}{ccccccccccc}
\toprule
 &  &  & \multicolumn{4}{c}{ZDT} & \multicolumn{4}{c}{DTLZ} \\
\cmidrule(lr){4-7} \cmidrule(lr){8-11}
 & $W_2$ & $L_n$ & GD ($\downarrow$) & IGD ($\downarrow$) & $W_2$ ($\downarrow$) & HV ($\uparrow$) & GD ($\downarrow$) & IGD ($\downarrow$) & $W_2$ ($\downarrow$) & HV ($\uparrow$) \\
\midrule
D(best) &  &  & 0.262 & 0.261 & 0.374 & 4.812 & 0.373 & 0.247 & 0.427 & 11.542 \\
\midrule
 \multirow{3}{*}{$\Delta$} &  & \checkmark & \textbf{-0.124} & -0.059 & +0.029 & +0.419 & \textbf{-0.151} & -0.123 & -0.101 & +0.285 \\
& \checkmark &  & -0.054 & -0.087 & -0.063 & +0.413 & -0.065 & -0.120 & -0.065 & +0.286 \\
& \checkmark & \checkmark & -0.118 & \textbf{-0.165} & \textbf{-0.113} & \textbf{+0.464} & -0.139 & \textbf{-0.138} & \textbf{-0.131} & \textbf{+0.326} \\
\bottomrule
\end{tabular}
}
\end{table}

%% file: tables/dtlz_gamma_gd.tex
% requires \usepackage{booktabs}
\begin{table}[!htbp]
\caption{Ablation of $\gamma$ on DTLZ family. GD is computed over a population of $256$ candidate designs and averaged over $5$ random seeds.}
\label{tab:dtlz-gamma-gd}
\resizebox{\linewidth}{!}{%
\begin{tabular}{lccccccc}
\toprule
$\gamma$ & DTLZ1 & DTLZ2 & DTLZ3 & DTLZ4 & DTLZ5 & DTLZ6 & DTLZ7 \\
\midrule
0 & 0.419 $\pm$ 0.005 & 0.044 $\pm$ 0.003 & 0.533 $\pm$ 0.008 & 0.124 $\pm$ 0.014 & 0.081 $\pm$ 0.013 & 0.703 $\pm$ 0.007 & 0.252 $\pm$ 0.041 \\
0.1 & 0.428 $\pm$ 0.010 & 0.033 $\pm$ 0.003 & 0.509 $\pm$ 0.020 & 0.115 $\pm$ 0.017 & 0.081 $\pm$ 0.009 & 0.687 $\pm$ 0.010 & 0.164 $\pm$ 0.048 \\
0.2 & 0.415 $\pm$ 0.018 & 0.025 $\pm$ 0.003 & 0.501 $\pm$ 0.015 & 0.096 $\pm$ 0.015 & 0.073 $\pm$ 0.012 & 0.671 $\pm$ 0.010 & 0.101 $\pm$ 0.022 \\
0.5 & 0.397 $\pm$ 0.021 & 0.023 $\pm$ 0.003 & 0.451 $\pm$ 0.030 & 0.063 $\pm$ 0.016 & 0.063 $\pm$ 0.008 & 0.576 $\pm$ 0.030 & 0.063 $\pm$ 0.018 \\
1 & \textbf{0.355 $\pm$ 0.016} & 0.021 $\pm$ 0.004 & 0.396 $\pm$ 0.024 & \textbf{0.050 $\pm$ 0.012} & 0.047 $\pm$ 0.006 & 0.327 $\pm$ 0.012 & 0.059 $\pm$ 0.012 \\
2 & 0.377 $\pm$ 0.025 & \textbf{0.020 $\pm$ 0.003} & \textbf{0.387 $\pm$ 0.013} & \textbf{0.050 $\pm$ 0.012} & 0.038 $\pm$ 0.005 & \textbf{0.322 $\pm$ 0.009} & \textbf{0.040 $\pm$ 0.014} \\
5 & 0.411 $\pm$ 0.017 & 0.026 $\pm$ 0.003 & \textbf{0.387 $\pm$ 0.013} & 0.051 $\pm$ 0.013 & \textbf{0.034 $\pm$ 0.009} & 0.482 $\pm$ 0.020 & 0.042 $\pm$ 0.009\\
10 & 0.428 $\pm$ 0.014 & 0.026 $\pm$ 0.003 & \textbf{0.387 $\pm$ 0.015} & 0.051 $\pm$ 0.013 & 0.035 $\pm$ 0.011 & 0.494 $\pm$ 0.016 & 0.048 $\pm$ 0.014 \\
\bottomrule
\end{tabular}
}
\end{table}

%% file: tables/synthetic_datasets.tex
\begin{table}[!htbp]
\centering
\caption{Synthetic tasks; dataset information and reference point for HV computation.}
\label{tab:ref-synthetic}
\resizebox{\linewidth}{!}{%
\begin{tabular}{lccll}
\toprule
Name & $d$ & $m$ & Pareto Front Shape & Reference Point \\
\midrule
DTLZ1 &  7 & 3 & Linear        & (507.46516068, 502.09050332, 516.68985527) \\
DTLZ2 & 10 & 3 & Concave       & (2.51925352, 2.53011955, 2.66682285) \\
DTLZ3 & 10 & 3 & Concave       & (1548.83497615, 1459.58638514, 1518.61926977) \\
DTLZ4 & 10 & 3 & Concave       & (2.75747516, 2.57674881, 2.52558196) \\
DTLZ5 & 10 & 3 & Concave       & (2.40756462, 2.36937324, 2.45008585) \\
DTLZ6 & 10 & 3 & Concave       & (8.90471718, 8.89404128, 8.88893072) \\
DTLZ7 & 10 & 3 & Disconnected  & (0.99999540, 0.99999772, 30.79227037) \\
ZDT1  & 30 & 2 & Convex        & (0.99999809, 7.82506630) \\
ZDT2  & 30 & 2 & Concave       & (0.99999706, 9.74316166) \\
ZDT3  & 30 & 2 & Disconnected  & (0.99999954, 9.83643758) \\
ZDT4  & 10 & 2 & Convex        & (0.99980978, 273.25533168) \\
ZDT6  & 10 & 2 & Concave       & (1.00000000, 9.34365208) \\
\bottomrule
\end{tabular}
}
\end{table}

%% file: tables/re_datasets.tex
\begin{table}[!htbp]
\centering
\caption{Real-engineering tasks; dataset information and reference point for HV computation.}
\label{tab:ref-re}
\resizebox{\linewidth}{!}{%
\begin{tabular}{lccll}
\toprule
Name & $d$ & $m$ & Pareto Front Shape & Reference Point \\
\midrule
RE21 (Four bar truss design)           & 4 & 2 & Convex                & (2971.23988, 0.0485512338) \\
RE22 (Reinforced concrete beam design) & 3 & 2 & Mixed                 & (829.079443, 2407217.25) \\
RE23 (Pressure vessel design)          & 4 & 2 & Mixed, Disconnected   & (713710.875, 1288669.78054) \\
RE24 (Hatch cover design)              & 2 & 2 & Convex                & (5997.8316325, 43.67584229) \\
RE25 (Coil compression spring design)  & 3 & 2 & Mixed, Disconnected   & (124.795202, 10038735.0) \\
RE31 (Two bar truss design)            & 3 & 3 & Unknown               & (808.852742, 6893375.82, 6793450.00) \\
RE32 (Welded beam design)              & 4 & 3 & Unknown               & (290.661885, 16552.4628, 388265024.0) \\
RE33 (Disc brake design)               & 4 & 3 & Unknown               & (8.01164324, 8.83604223, 2343.29711914) \\
RE34 (Vehicle crashworthiness design)  & 5 & 3 & Unknown               & (1702.51811, 11.6807224, 0.263918844) \\
RE35 (Speed reducer design)            & 7 & 3 & Unknown               & (7050.78959905, 1696.66697789, 397.83456421) \\
RE36 (Gear train design)               & 4 & 3 & Concave, Disconnected & (10.21185714, 60.0, 0.97335988) \\
RE37 (Rocket injector design)          & 4 & 3 & Unknown               & (0.98949120096, 0.956587924661, 0.987530948586) \\
\bottomrule
\end{tabular}
}
\end{table}

%% file: tables/zdt_HV.tex
\begin{table}[!htbp]
\centering
\caption{HV results on ZDT ($\uparrow$). Bold $=$ best per column. Each algorithm is run for $5$ seeds and evaluated on $256$ designs.}
\resizebox{\linewidth}{!}{%
\begin{tabular}{lcccccc}
\toprule
Method & ZDT1 & ZDT2 & ZDT3 & ZDT4 & ZDT6 & Avg. Rank \\
\midrule
D(best) & 4.168 & 4.677 & 5.147 & 5.457 & 4.608 & -- \\
MultiHead-Vallina & 4.823 $\pm$ 0.030 & 5.612 $\pm$ 0.098 & 5.637 $\pm$ 0.131 & 4.938 $\pm$ 0.237 & 4.774 $\pm$ 0.003 & 4.200 \\
MultiHead-PcGrad & \textbf{4.831 $\pm$ 0.028} & 5.605 $\pm$ 0.047 & 5.524 $\pm$ 0.048 & 3.239 $\pm$ 0.678 & 4.776 $\pm$ 0.013 & 6.000 \\
MultiHead-GradNorm & 4.741 $\pm$ 0.034 & 5.483 $\pm$ 0.115 & 5.589 $\pm$ 0.192 & 3.489 $\pm$ 0.665 & 4.709 $\pm$ 0.025 & 8.000 \\
MultipleModels-Vallina & 4.812 $\pm$ 0.028 & 5.600 $\pm$ 0.043 & 5.536 $\pm$ 0.140 & 4.991 $\pm$ 0.067 & 4.774 $\pm$ 0.004 & 5.600 \\
MultipleModels-COM & 4.679 $\pm$ 0.112 & 5.093 $\pm$ 0.131 & 5.530 $\pm$ 0.072 & 5.260 $\pm$ 0.193 & 3.435 $\pm$ 0.750 & 8.800 \\
MultipleModels-IOM & 4.763 $\pm$ 0.033 & 5.530 $\pm$ 0.101 & 5.681 $\pm$ 0.038 & 5.338 $\pm$ 0.174 & 4.777 $\pm$ 0.005 & 4.000 \\
MultipleModels-ICT & 4.748 $\pm$ 0.004 & 5.428 $\pm$ 0.004 & 5.560 $\pm$ 0.049 & 4.830 $\pm$ 0.243 & 4.723 $\pm$ 0.051 & 7.400 \\
MultipleModels-RoMA & 4.817 $\pm$ 0.029 & 5.496 $\pm$ 0.067 & \textbf{5.894 $\pm$ 0.027} & 3.679 $\pm$ 0.049 & 2.597 $\pm$ 0.027 & 7.000 \\
MultipleModels-Trimentoring & 4.821 $\pm$ 0.002 & 5.548 $\pm$ 0.041 & 4.569 $\pm$ 0.083 & 4.557 $\pm$ 0.556 & 4.045 $\pm$ 0.122 & 8.000 \\
ParetoFlow & 4.136 $\pm$ 0.030 & 4.695 $\pm$ 0.071 & 5.171 $\pm$ 0.061 & 5.104 $\pm$ 0.125 & 4.683 $\pm$ 0.068 & 9.600 \\
PreferenceGuidedDiffusion & 4.567 $\pm$ 0.045 & 5.367 $\pm$ 0.064 & 5.540 $\pm$ 0.066 & 4.999 $\pm$ 0.085 & 4.823 $\pm$ 0.004 & 6.800 \\
\midrule
ParetoTransport & 4.829 $\pm$ 0.034 & \textbf{5.660 $\pm$ 0.026} & 5.531 $\pm$ 0.028 & \textbf{5.495 $\pm$ 0.000} & \textbf{4.825 $\pm$ 0.001} & \textbf{2.600} \\
\bottomrule
\end{tabular}
}
\label{tab:zdt_HV}
\end{table}

%% file: tables/zdt_GD.tex
\begin{table}[!htbp]
\centering
\caption{GD results on ZDT ($\downarrow$). Bold $=$ best per column. Each algorithm is run for $5$ seeds and evaluated on $256$ designs.}
\resizebox{\linewidth}{!}{%
\begin{tabular}{lcccccc}
\toprule
Method & ZDT1 & ZDT2 & ZDT3 & ZDT4 & ZDT6 & Avg. Rank \\
\midrule
D(best) & 0.378 & 0.433 & 0.382 & 0.051 & 0.064 & -- \\
MultiHead-Vallina & 0.032 $\pm$ 0.017 & \textbf{0.043 $\pm$ 0.048} & 0.154 $\pm$ 0.031 & 0.597 $\pm$ 0.108 & 0.113 $\pm$ 0.016 & 4.200 \\
MultiHead-PcGrad & \textbf{0.011 $\pm$ 0.002} & 0.091 $\pm$ 0.048 & 0.255 $\pm$ 0.018 & 0.935 $\pm$ 0.221 & 0.074 $\pm$ 0.120 & 5.000 \\
MultiHead-GradNorm & 0.098 $\pm$ 0.084 & 0.103 $\pm$ 0.058 & 0.169 $\pm$ 0.010 & 0.960 $\pm$ 0.107 & 0.098 $\pm$ 0.053 & 7.200 \\
MultipleModels-Vallina & 0.042 $\pm$ 0.019 & 0.074 $\pm$ 0.017 & 0.190 $\pm$ 0.021 & 0.494 $\pm$ 0.025 & 0.096 $\pm$ 0.011 & 4.000 \\
MultipleModels-COM & 0.072 $\pm$ 0.038 & 0.237 $\pm$ 0.072 & 0.234 $\pm$ 0.018 & 0.413 $\pm$ 0.077 & 0.301 $\pm$ 0.092 & 7.800 \\
MultipleModels-IOM & 0.084 $\pm$ 0.013 & 0.109 $\pm$ 0.037 & 0.149 $\pm$ 0.014 & 0.411 $\pm$ 0.037 & 0.112 $\pm$ 0.020 & 5.400 \\
MultipleModels-ICT & 0.049 $\pm$ 0.011 & 0.103 $\pm$ 0.009 & 0.202 $\pm$ 0.004 & 0.646 $\pm$ 0.075 & 0.252 $\pm$ 0.063 & 6.800 \\
MultipleModels-RoMA & 0.043 $\pm$ 0.018 & 0.113 $\pm$ 0.009 & \textbf{0.086 $\pm$ 0.011} & 0.972 $\pm$ 0.022 & 0.614 $\pm$ 0.028 & 7.600 \\
MultipleModels-Trimentoring & 0.049 $\pm$ 0.008 & 0.185 $\pm$ 0.032 & 0.702 $\pm$ 0.041 & 0.851 $\pm$ 0.216 & 0.513 $\pm$ 0.023 & 9.400 \\
ParetoFlow & 0.363 $\pm$ 0.019 & 0.455 $\pm$ 0.010 & 0.354 $\pm$ 0.019 & 0.395 $\pm$ 0.008 & 0.087 $\pm$ 0.024 & 8.000 \\
PreferenceGuidedDiffusion & 0.275 $\pm$ 0.041 & 0.184 $\pm$ 0.032 & 0.247 $\pm$ 0.010 & 0.565 $\pm$ 0.006 & 0.515 $\pm$ 0.041 & 9.000 \\
\midrule
ParetoTransport & 0.026 $\pm$ 0.012 & 0.102 $\pm$ 0.010 & 0.261 $\pm$ 0.013 & \textbf{0.174 $\pm$ 0.008} & \textbf{0.049 $\pm$ 0.002} & \textbf{3.600} \\
\bottomrule
\end{tabular}
}
\label{tab:zdt_GD}
\end{table}

%% file: tables/zdt_IGD.tex
\begin{table}[!htbp]
\centering
\caption{IGD results on ZDT ($\downarrow$). Bold $=$ best per column. Each algorithm is run for $5$ seeds and evaluated on $256$ designs.}
\resizebox{\linewidth}{!}{%
\begin{tabular}{lcccccc}
\toprule
Method & ZDT1 & ZDT2 & ZDT3 & ZDT4 & ZDT6 & Avg. Rank \\
\midrule
D(best) & 0.359 & 0.441 & 0.367 & 0.068 & 0.071 & -- \\
MultiHead-Vallina & 0.049 $\pm$ 0.076 & 0.094 $\pm$ 0.169 & 0.162 $\pm$ 0.108 & 0.607 $\pm$ 0.060 & 0.022 $\pm$ 0.002 & 5.200 \\
MultiHead-PcGrad & 0.036 $\pm$ 0.057 & 0.057 $\pm$ 0.055 & 0.194 $\pm$ 0.027 & 0.939 $\pm$ 0.231 & 0.020 $\pm$ 0.003 & 5.400 \\
MultiHead-GradNorm & 0.090 $\pm$ 0.044 & 0.100 $\pm$ 0.052 & 0.223 $\pm$ 0.167 & 0.879 $\pm$ 0.061 & 0.057 $\pm$ 0.016 & 8.400 \\
MultipleModels-Vallina & 0.058 $\pm$ 0.060 & 0.047 $\pm$ 0.020 & 0.232 $\pm$ 0.159 & 0.584 $\pm$ 0.020 & 0.022 $\pm$ 0.001 & 5.800 \\
MultipleModels-COM & 0.159 $\pm$ 0.159 & 0.288 $\pm$ 0.046 & 0.196 $\pm$ 0.051 & 0.423 $\pm$ 0.152 & 0.317 $\pm$ 0.132 & 8.200 \\
MultipleModels-IOM & 0.059 $\pm$ 0.020 & 0.081 $\pm$ 0.034 & 0.098 $\pm$ 0.014 & 0.524 $\pm$ 0.031 & 0.022 $\pm$ 0.001 & 4.200 \\
MultipleModels-ICT & 0.072 $\pm$ 0.011 & 0.489 $\pm$ 0.039 & 0.147 $\pm$ 0.009 & 0.622 $\pm$ 0.087 & 0.155 $\pm$ 0.185 & 7.800 \\
MultipleModels-RoMA & 0.031 $\pm$ 0.024 & 0.420 $\pm$ 0.161 & \textbf{0.017 $\pm$ 0.005} & 1.024 $\pm$ 0.014 & 0.505 $\pm$ 0.002 & 7.600 \\
MultipleModels-Trimentoring & 0.022 $\pm$ 0.002 & 0.084 $\pm$ 0.027 & 0.614 $\pm$ 0.035 & 0.699 $\pm$ 0.146 & 0.420 $\pm$ 0.049 & 7.800 \\
ParetoFlow & 0.342 $\pm$ 0.010 & 0.465 $\pm$ 0.023 & 0.336 $\pm$ 0.026 & 0.541 $\pm$ 0.019 & 0.224 $\pm$ 0.025 & 9.600 \\
PreferenceGuidedDiffusion & 0.155 $\pm$ 0.036 & 0.138 $\pm$ 0.019 & 0.173 $\pm$ 0.042 & 0.312 $\pm$ 0.033 & 0.023 $\pm$ 0.010 & 6.400 \\
\midrule
ParetoTransport & \textbf{0.018 $\pm$ 0.025} & \textbf{0.017 $\pm$ 0.008} & 0.155 $\pm$ 0.012 & \textbf{0.185 $\pm$ 0.002} & \textbf{0.015 $\pm$ 0.001} & \textbf{1.600} \\
\bottomrule
\end{tabular}
}
\label{tab:zdt_IGD}
\end{table}

%% file: tables/zdt_W2.tex
\begin{table}[!htbp]
\centering
\caption{$W_2$ results on ZDT ($\downarrow$). Bold $=$ best per column. Each algorithm is run for $5$ seeds and evaluated on $256$ designs.}
\resizebox{\linewidth}{!}{%
\begin{tabular}{lcccccc}
\toprule
Method & ZDT1 & ZDT2 & ZDT3 & ZDT4 & ZDT6 & Avg. Rank \\
\midrule
D(best) & 0.385 & 0.475 & 0.405 & 0.433 & 0.171 & -- \\
MultiHead-Vallina & 0.194 $\pm$ 0.144 & \textbf{0.130 $\pm$ 0.219} & 0.306 $\pm$ 0.149 & 0.802 $\pm$ 0.084 & 0.195 $\pm$ 0.036 & 4.400 \\
MultiHead-PcGrad & 0.088 $\pm$ 0.106 & 0.196 $\pm$ 0.175 & 0.290 $\pm$ 0.025 & 1.085 $\pm$ 0.203 & \textbf{0.087 $\pm$ 0.120} & 4.600 \\
MultiHead-GradNorm & 0.222 $\pm$ 0.133 & 0.221 $\pm$ 0.057 & 0.343 $\pm$ 0.106 & 1.038 $\pm$ 0.056 & 0.259 $\pm$ 0.126 & 7.000 \\
MultipleModels-Vallina & 0.328 $\pm$ 0.081 & 0.224 $\pm$ 0.128 & 0.375 $\pm$ 0.106 & 0.724 $\pm$ 0.019 & 0.186 $\pm$ 0.022 & 6.400 \\
MultipleModels-COM & 0.363 $\pm$ 0.154 & 0.440 $\pm$ 0.066 & 0.342 $\pm$ 0.052 & 0.629 $\pm$ 0.091 & 0.442 $\pm$ 0.176 & 7.600 \\
MultipleModels-IOM & 0.224 $\pm$ 0.049 & 0.134 $\pm$ 0.052 & 0.175 $\pm$ 0.017 & 0.662 $\pm$ 0.024 & 0.258 $\pm$ 0.036 & 4.200 \\
MultipleModels-ICT & 0.332 $\pm$ 0.045 & 0.522 $\pm$ 0.002 & 0.261 $\pm$ 0.008 & 0.836 $\pm$ 0.062 & 0.395 $\pm$ 0.144 & 7.600 \\
MultipleModels-RoMA & 0.089 $\pm$ 0.033 & 0.507 $\pm$ 0.040 & \textbf{0.137 $\pm$ 0.033} & 1.122 $\pm$ 0.019 & 0.776 $\pm$ 0.022 & 7.600 \\
MultipleModels-Trimentoring & 0.218 $\pm$ 0.027 & 0.448 $\pm$ 0.061 & 0.731 $\pm$ 0.048 & 0.960 $\pm$ 0.153 & 0.657 $\pm$ 0.021 & 9.200 \\
ParetoFlow & 0.491 $\pm$ 0.012 & 0.633 $\pm$ 0.006 & 0.471 $\pm$ 0.014 & 0.650 $\pm$ 0.005 & 0.417 $\pm$ 0.004 & 9.400 \\
PreferenceGuidedDiffusion & 0.333 $\pm$ 0.058 & 0.344 $\pm$ 0.022 & 0.447 $\pm$ 0.045 & 0.579 $\pm$ 0.007 & 0.591 $\pm$ 0.032 & 7.800 \\
\midrule
ParetoTransport & \textbf{0.051 $\pm$ 0.026} & 0.185 $\pm$ 0.007 & 0.289 $\pm$ 0.012 & \textbf{0.362 $\pm$ 0.005} & 0.153 $\pm$ 0.001 & \textbf{2.200} \\
\bottomrule
\end{tabular}
}
\label{tab:zdt_W2}
\end{table}

%% file: tables/dtlz_HV.tex
\begin{table}[!htbp]
\centering
\caption{HV results on DTLZ ($\uparrow$). Bold $=$ best per column. Each algorithm is run for $5$ seeds and evaluated on $256$ designs.}
\resizebox{\linewidth}{!}{%
\begin{tabular}{lcccccccc}
\toprule
Method & DTLZ1 & DTLZ2 & DTLZ3 & DTLZ4 & DTLZ5 & DTLZ6 & DTLZ7 & Avg. Rank \\
\midrule
D(best) & 10.614 & 12.427 & 9.897 & 17.503 & 10.692 & 10.671 & 8.992 & -- \\
MultiHead-Vallina & 10.643 $\pm$ 0.002 & 12.444 $\pm$ 0.000 & 9.882 $\pm$ 0.004 & 17.567 $\pm$ 0.091 & 10.758 $\pm$ 0.001 & 10.900 $\pm$ 0.019 & 10.715 $\pm$ 0.027 & 3.714 \\
MultiHead-PcGrad & 10.644 $\pm$ 0.001 & 12.443 $\pm$ 0.001 & 9.895 $\pm$ 0.004 & 17.533 $\pm$ 0.014 & \textbf{10.759 $\pm$ 0.001} & 10.918 $\pm$ 0.011 & 10.598 $\pm$ 0.028 & 2.857 \\
MultiHead-GradNorm & 10.537 $\pm$ 0.239 & 12.439 $\pm$ 0.001 & 9.657 $\pm$ 0.307 & 17.451 $\pm$ 0.190 & 9.833 $\pm$ 2.028 & 10.848 $\pm$ 0.060 & 9.654 $\pm$ 0.735 & 9.000 \\
MultipleModels-Vallina & 10.644 $\pm$ 0.001 & 12.445 $\pm$ 0.000 & 9.886 $\pm$ 0.003 & 17.046 $\pm$ 0.179 & 10.758 $\pm$ 0.000 & 10.928 $\pm$ 0.008 & 10.535 $\pm$ 0.079 & 4.429 \\
MultipleModels-COM & 10.642 $\pm$ 0.002 & 12.433 $\pm$ 0.001 & 9.877 $\pm$ 0.006 & 17.077 $\pm$ 0.381 & 10.747 $\pm$ 0.003 & 10.786 $\pm$ 0.017 & 9.809 $\pm$ 0.273 & 7.714 \\
MultipleModels-IOM & 10.643 $\pm$ 0.002 & 12.443 $\pm$ 0.001 & \textbf{9.898 $\pm$ 0.000} & 16.937 $\pm$ 0.203 & 10.757 $\pm$ 0.001 & 10.886 $\pm$ 0.008 & 10.544 $\pm$ 0.126 & 5.143 \\
MultipleModels-ICT & 10.644 $\pm$ 0.001 & 12.429 $\pm$ 0.003 & 9.855 $\pm$ 0.050 & 17.109 $\pm$ 0.251 & 10.688 $\pm$ 0.011 & 10.781 $\pm$ 0.022 & 10.342 $\pm$ 0.092 & 7.286 \\
MultipleModels-RoMA & 10.641 $\pm$ 0.002 & 12.349 $\pm$ 0.014 & 9.878 $\pm$ 0.002 & 15.201 $\pm$ 0.490 & 10.594 $\pm$ 0.033 & 10.742 $\pm$ 0.015 & 10.596 $\pm$ 0.036 & 8.714 \\
MultipleModels-Trimentoring & 10.642 $\pm$ 0.007 & 11.854 $\pm$ 0.892 & 9.326 $\pm$ 0.020 & 17.107 $\pm$ 0.040 & 10.650 $\pm$ 0.004 & 10.778 $\pm$ 0.049 & 10.392 $\pm$ 0.423 & 9.000 \\
ParetoFlow & 10.626 $\pm$ 0.006 & 12.253 $\pm$ 0.075 & 9.792 $\pm$ 0.054 & 17.508 $\pm$ 0.005 & 10.546 $\pm$ 0.046 & 10.686 $\pm$ 0.039 & 8.875 $\pm$ 0.099 & 9.857 \\
PreferenceGuidedDiffusion & 10.645 $\pm$ 0.001 & 12.387 $\pm$ 0.024 & 9.889 $\pm$ 0.002 & 14.504 $\pm$ 0.074 & 10.369 $\pm$ 0.357 & 10.336 $\pm$ 0.204 & 9.767 $\pm$ 0.141 & 8.571 \\
\midrule
ParetoTransport & \textbf{10.646 $\pm$ 0.000} & \textbf{12.448 $\pm$ 0.000} & 9.897 $\pm$ 0.001 & \textbf{17.605 $\pm$ 0.009} & 10.754 $\pm$ 0.001 & \textbf{11.028 $\pm$ 0.019} & \textbf{10.728 $\pm$ 0.076} & \textbf{1.714} \\
\bottomrule
\end{tabular}
}
\label{tab:dtlz_HV}
\end{table}

%% file: tables/dtlz_GD.tex
\begin{table}[!htbp]
\centering
\caption{GD results on DTLZ ($\downarrow$). Bold $=$ best per column. Each algorithm is run for $5$ seeds and evaluated on $256$ designs.}
\resizebox{\linewidth}{!}{%
\begin{tabular}{lcccccccc}
\toprule
Method & DTLZ1 & DTLZ2 & DTLZ3 & DTLZ4 & DTLZ5 & DTLZ6 & DTLZ7 & Avg. Rank \\
\midrule
D(best) & 0.601 & 0.102 & 0.341 & 0.185 & 0.131 & 0.759 & 0.490 & -- \\
MultiHead-Vallina & 0.469 $\pm$ 0.021 & 0.066 $\pm$ 0.014 & 0.531 $\pm$ 0.035 & 0.275 $\pm$ 0.037 & 0.064 $\pm$ 0.006 & 0.555 $\pm$ 0.027 & 0.190 $\pm$ 0.022 & 6.000 \\
MultiHead-PcGrad & 0.414 $\pm$ 0.032 & 0.066 $\pm$ 0.015 & 0.435 $\pm$ 0.020 & 0.321 $\pm$ 0.048 & 0.049 $\pm$ 0.007 & 0.521 $\pm$ 0.036 & 0.078 $\pm$ 0.048 & 4.000 \\
MultiHead-GradNorm & 0.402 $\pm$ 0.029 & 0.067 $\pm$ 0.024 & 0.401 $\pm$ 0.116 & 0.307 $\pm$ 0.034 & 0.244 $\pm$ 0.234 & 0.596 $\pm$ 0.082 & 0.209 $\pm$ 0.065 & 5.143 \\
MultipleModels-Vallina & 0.407 $\pm$ 0.029 & 0.035 $\pm$ 0.009 & 0.486 $\pm$ 0.029 & 0.557 $\pm$ 0.102 & \textbf{0.011 $\pm$ 0.005} & 0.522 $\pm$ 0.010 & 0.162 $\pm$ 0.020 & 4.143 \\
MultipleModels-COM & 0.459 $\pm$ 0.016 & 0.162 $\pm$ 0.023 & 0.527 $\pm$ 0.017 & 0.285 $\pm$ 0.036 & 0.145 $\pm$ 0.027 & 0.684 $\pm$ 0.008 & 0.498 $\pm$ 0.021 & 8.571 \\
MultipleModels-IOM & 0.436 $\pm$ 0.021 & 0.078 $\pm$ 0.020 & 0.424 $\pm$ 0.009 & 0.278 $\pm$ 0.034 & 0.090 $\pm$ 0.022 & 0.606 $\pm$ 0.012 & 0.305 $\pm$ 0.053 & 5.714 \\
MultipleModels-ICT & 0.450 $\pm$ 0.019 & 0.279 $\pm$ 0.012 & 0.493 $\pm$ 0.027 & 0.614 $\pm$ 0.036 & 0.174 $\pm$ 0.019 & 0.692 $\pm$ 0.023 & 0.426 $\pm$ 0.068 & 9.286 \\
MultipleModels-RoMA & 0.446 $\pm$ 0.009 & 0.303 $\pm$ 0.040 & 0.561 $\pm$ 0.023 & 0.707 $\pm$ 0.050 & 0.273 $\pm$ 0.042 & 0.724 $\pm$ 0.008 & 0.397 $\pm$ 0.030 & 10.857 \\
MultipleModels-Trimentoring & \textbf{0.250 $\pm$ 0.095} & 0.275 $\pm$ 0.058 & \textbf{0.175 $\pm$ 0.008} & 0.594 $\pm$ 0.040 & 0.206 $\pm$ 0.022 & 0.711 $\pm$ 0.061 & 0.288 $\pm$ 0.136 & 6.714 \\
ParetoFlow & 0.510 $\pm$ 0.023 & 0.130 $\pm$ 0.015 & 0.513 $\pm$ 0.023 & 0.193 $\pm$ 0.004 & 0.150 $\pm$ 0.010 & 0.725 $\pm$ 0.040 & 0.479 $\pm$ 0.016 & 8.429 \\
PreferenceGuidedDiffusion & 0.402 $\pm$ 0.011 & 0.205 $\pm$ 0.051 & 0.523 $\pm$ 0.011 & 0.212 $\pm$ 0.022 & 0.252 $\pm$ 0.027 & 0.667 $\pm$ 0.006 & 0.359 $\pm$ 0.028 & 7.143 \\
\midrule
ParetoTransport & 0.403 $\pm$ 0.011 & \textbf{0.023 $\pm$ 0.004} & 0.408 $\pm$ 0.015 & \textbf{0.063 $\pm$ 0.012} & 0.063 $\pm$ 0.007 & \textbf{0.493 $\pm$ 0.027} & \textbf{0.050 $\pm$ 0.016} & \textbf{2.000} \\
\bottomrule
\end{tabular}
}
\label{tab:dtlz_GD}
\end{table}

%% file: tables/dtlz_IGD.tex
\begin{table}[!htbp]
\centering
\caption{IGD results on DTLZ ($\downarrow$). Bold $=$ best per column. Each algorithm is run for $5$ seeds and evaluated on $256$ designs.}
\resizebox{\linewidth}{!}{%
\begin{tabular}{lcccccccc}
\toprule
Method & DTLZ1 & DTLZ2 & DTLZ3 & DTLZ4 & DTLZ5 & DTLZ6 & DTLZ7 & Avg. Rank \\
\midrule
D(best) & 0.259 & 0.059 & 0.083 & 0.209 & 0.042 & 0.684 & 0.390 & -- \\
MultiHead-Vallina & 0.132 $\pm$ 0.038 & 0.024 $\pm$ 0.001 & 0.215 $\pm$ 0.020 & \textbf{0.158 $\pm$ 0.056} & \textbf{0.003 $\pm$ 0.000} & 0.467 $\pm$ 0.016 & \textbf{0.076 $\pm$ 0.040} & 5.143 \\
MultiHead-PcGrad & 0.114 $\pm$ 0.018 & 0.025 $\pm$ 0.002 & 0.103 $\pm$ 0.039 & \textbf{0.158 $\pm$ 0.006} & \textbf{0.003 $\pm$ 0.000} & 0.448 $\pm$ 0.020 & 0.353 $\pm$ 0.084 & 4.143 \\
MultiHead-GradNorm & 0.108 $\pm$ 0.027 & 0.032 $\pm$ 0.001 & 0.150 $\pm$ 0.043 & 0.211 $\pm$ 0.042 & 0.161 $\pm$ 0.334 & 0.494 $\pm$ 0.058 & 0.555 $\pm$ 0.272 & 7.714 \\
MultipleModels-Vallina & 0.101 $\pm$ 0.027 & 0.023 $\pm$ 0.001 & 0.185 $\pm$ 0.025 & 0.407 $\pm$ 0.096 & \textbf{0.003 $\pm$ 0.000} & 0.453 $\pm$ 0.009 & 0.247 $\pm$ 0.127 & 5.571 \\
MultipleModels-COM & 0.125 $\pm$ 0.016 & 0.042 $\pm$ 0.003 & 0.183 $\pm$ 0.020 & 0.318 $\pm$ 0.065 & 0.008 $\pm$ 0.001 & 0.528 $\pm$ 0.020 & 0.225 $\pm$ 0.039 & 7.143 \\
MultipleModels-IOM & 0.128 $\pm$ 0.023 & 0.027 $\pm$ 0.002 & \textbf{0.072 $\pm$ 0.015} & 0.336 $\pm$ 0.030 & \textbf{0.003 $\pm$ 0.000} & 0.483 $\pm$ 0.014 & 0.158 $\pm$ 0.108 & 5.571 \\
MultipleModels-ICT & 0.095 $\pm$ 0.015 & 0.061 $\pm$ 0.007 & 0.185 $\pm$ 0.028 & 0.312 $\pm$ 0.009 & 0.076 $\pm$ 0.014 & 0.571 $\pm$ 0.014 & 0.309 $\pm$ 0.024 & 7.714 \\
MultipleModels-RoMA & 0.130 $\pm$ 0.032 & 0.183 $\pm$ 0.015 & 0.215 $\pm$ 0.055 & 0.779 $\pm$ 0.048 & 0.101 $\pm$ 0.020 & 0.616 $\pm$ 0.017 & 0.097 $\pm$ 0.007 & 9.714 \\
MultipleModels-Trimentoring & \textbf{0.076 $\pm$ 0.006} & 0.196 $\pm$ 0.114 & 0.124 $\pm$ 0.004 & 0.362 $\pm$ 0.066 & 0.160 $\pm$ 0.000 & 0.570 $\pm$ 0.038 & 0.126 $\pm$ 0.090 & 7.000 \\
ParetoFlow & 0.087 $\pm$ 0.041 & 0.121 $\pm$ 0.009 & 0.217 $\pm$ 0.045 & 0.208 $\pm$ 0.001 & 0.077 $\pm$ 0.009 & 0.608 $\pm$ 0.040 & 0.406 $\pm$ 0.022 & 8.571 \\
PreferenceGuidedDiffusion & 0.083 $\pm$ 0.029 & 0.120 $\pm$ 0.031 & 0.152 $\pm$ 0.023 & 0.457 $\pm$ 0.005 & 0.123 $\pm$ 0.016 & 0.427 $\pm$ 0.043 & 0.268 $\pm$ 0.034 & 7.000 \\
\midrule
ParetoTransport & 0.082 $\pm$ 0.016 & \textbf{0.021 $\pm$ 0.001} & 0.124 $\pm$ 0.009 & 0.161 $\pm$ 0.013 & 0.006 $\pm$ 0.001 & \textbf{0.284 $\pm$ 0.001} & 0.132 $\pm$ 0.083 & \textbf{2.714} \\
\bottomrule
\end{tabular}
}
\label{tab:dtlz_IGD}
\end{table}

%% file: tables/dtlz_W2.tex
\begin{table}[!htbp]
\centering
\caption{$W_2$ results on DTLZ ($\downarrow$). Bold $=$ best per column. Each algorithm is run for $5$ seeds and evaluated on $256$ designs.}
\resizebox{\linewidth}{!}{%
\begin{tabular}{lcccccccc}
\toprule
Method & DTLZ1 & DTLZ2 & DTLZ3 & DTLZ4 & DTLZ5 & DTLZ6 & DTLZ7 & Avg. Rank \\
\midrule
D(best) & 0.602 & 0.113 & 0.341 & 0.461 & 0.199 & 0.764 & 0.508 & -- \\
MultiHead-Vallina & 0.469 $\pm$ 0.021 & 0.107 $\pm$ 0.022 & 0.531 $\pm$ 0.035 & 0.351 $\pm$ 0.051 & 0.110 $\pm$ 0.012 & 0.558 $\pm$ 0.026 & 0.294 $\pm$ 0.038 & 5.714 \\
MultiHead-PcGrad & 0.415 $\pm$ 0.032 & 0.100 $\pm$ 0.017 & 0.435 $\pm$ 0.020 & 0.380 $\pm$ 0.048 & 0.096 $\pm$ 0.012 & 0.528 $\pm$ 0.030 & 0.415 $\pm$ 0.083 & 4.286 \\
MultiHead-GradNorm & 0.402 $\pm$ 0.029 & 0.095 $\pm$ 0.026 & 0.402 $\pm$ 0.116 & 0.423 $\pm$ 0.038 & 0.323 $\pm$ 0.297 & 0.607 $\pm$ 0.082 & 0.683 $\pm$ 0.097 & 5.571 \\
MultipleModels-Vallina & 0.408 $\pm$ 0.029 & 0.063 $\pm$ 0.013 & 0.487 $\pm$ 0.029 & 0.659 $\pm$ 0.106 & \textbf{0.037 $\pm$ 0.013} & 0.526 $\pm$ 0.010 & 0.412 $\pm$ 0.053 & 4.286 \\
MultipleModels-COM & 0.459 $\pm$ 0.016 & 0.179 $\pm$ 0.023 & 0.527 $\pm$ 0.017 & 0.528 $\pm$ 0.055 & 0.188 $\pm$ 0.026 & 0.688 $\pm$ 0.008 & 0.526 $\pm$ 0.021 & 7.714 \\
MultipleModels-IOM & 0.436 $\pm$ 0.021 & 0.101 $\pm$ 0.021 & 0.424 $\pm$ 0.009 & 0.541 $\pm$ 0.025 & 0.125 $\pm$ 0.023 & 0.609 $\pm$ 0.013 & 0.398 $\pm$ 0.091 & 5.429 \\
MultipleModels-ICT & 0.451 $\pm$ 0.019 & 0.312 $\pm$ 0.012 & 0.493 $\pm$ 0.027 & 0.756 $\pm$ 0.059 & 0.255 $\pm$ 0.020 & 0.697 $\pm$ 0.024 & 0.567 $\pm$ 0.025 & 9.286 \\
MultipleModels-RoMA & 0.446 $\pm$ 0.009 & 0.342 $\pm$ 0.036 & 0.561 $\pm$ 0.023 & 0.931 $\pm$ 0.029 & 0.282 $\pm$ 0.041 & 0.726 $\pm$ 0.009 & 0.498 $\pm$ 0.036 & 10.000 \\
MultipleModels-Trimentoring & \textbf{0.250 $\pm$ 0.095} & 0.419 $\pm$ 0.119 & \textbf{0.175 $\pm$ 0.008} & 0.693 $\pm$ 0.045 & 0.319 $\pm$ 0.028 & 0.714 $\pm$ 0.061 & 0.349 $\pm$ 0.123 & 6.714 \\
ParetoFlow & 0.510 $\pm$ 0.023 & 0.200 $\pm$ 0.020 & 0.513 $\pm$ 0.023 & 0.463 $\pm$ 0.003 & 0.196 $\pm$ 0.018 & 0.732 $\pm$ 0.040 & 0.543 $\pm$ 0.012 & 8.714 \\
PreferenceGuidedDiffusion & 0.403 $\pm$ 0.011 & 0.255 $\pm$ 0.054 & 0.523 $\pm$ 0.011 & 0.566 $\pm$ 0.016 & 0.405 $\pm$ 0.048 & 0.689 $\pm$ 0.005 & 0.544 $\pm$ 0.037 & 8.429 \\
\midrule
ParetoTransport & 0.403 $\pm$ 0.012 & \textbf{0.039 $\pm$ 0.003} & 0.408 $\pm$ 0.015 & \textbf{0.339 $\pm$ 0.020} & 0.081 $\pm$ 0.011 & \textbf{0.501 $\pm$ 0.026} & \textbf{0.274 $\pm$ 0.122} & \textbf{1.857} \\
\bottomrule
\end{tabular}
}
\label{tab:dtlz_W2}
\end{table}

%% file: tables/re_HV.tex
\begin{table}[!htbp]
\centering
\caption{HV results on RE ($\uparrow$). Bold $=$ best per column. Each algorithm is run for $5$ seeds and evaluated on $256$ designs.}
\resizebox{\linewidth}{!}{%
\begin{tabular}{lccccccccccccc}
\toprule
Method & RE21 & RE22 & RE23 & RE24 & RE25 & RE31 & RE32 & RE33 & RE34 & RE35 & RE36 & RE37 & Avg. Rank \\
\midrule
D(best) & 4.176 & 4.806 & 4.751 & 4.595 & 4.794 & 10.607 & 10.611 & 10.533 & 9.652 & 10.217 & 7.760 & 5.916 & -- \\
MultiHead-Vallina & \textbf{4.598 $\pm$ 0.000} & \textbf{4.840 $\pm$ 0.000} & \textbf{4.840 $\pm$ 0.000} & 4.075 $\pm$ 0.815 & 4.540 $\pm$ 0.287 & 10.647 $\pm$ 0.001 & 10.632 $\pm$ 0.009 & 10.619 $\pm$ 0.007 & 10.109 $\pm$ 0.004 & 10.556 $\pm$ 0.024 & 10.201 $\pm$ 0.074 & \textbf{6.740 $\pm$ 0.003} & 4.750 \\
MultiHead-PcGrad & \textbf{4.598 $\pm$ 0.001} & 4.828 $\pm$ 0.023 & 4.763 $\pm$ 0.170 & 4.544 $\pm$ 0.231 & 4.739 $\pm$ 0.206 & \textbf{10.648 $\pm$ 0.000} & 10.628 $\pm$ 0.002 & 10.588 $\pm$ 0.025 & 10.101 $\pm$ 0.007 & 10.498 $\pm$ 0.085 & 10.189 $\pm$ 0.070 & 6.650 $\pm$ 0.075 & 7.000 \\
MultiHead-GradNorm & 4.120 $\pm$ 0.606 & 4.721 $\pm$ 0.218 & 2.714 $\pm$ 0.067 & 4.333 $\pm$ 0.094 & 4.689 $\pm$ 0.295 & 10.228 $\pm$ 0.577 & 10.505 $\pm$ 0.190 & 6.970 $\pm$ 0.349 & 8.408 $\pm$ 2.057 & 8.225 $\pm$ 3.414 & 8.496 $\pm$ 1.387 & 5.898 $\pm$ 1.595 & 11.500 \\
MultipleModels-Vallina & \textbf{4.598 $\pm$ 0.001} & \textbf{4.840 $\pm$ 0.000} & 4.831 $\pm$ 0.018 & 4.831 $\pm$ 0.010 & 4.838 $\pm$ 0.002 & \textbf{10.648 $\pm$ 0.000} & 10.639 $\pm$ 0.002 & 10.618 $\pm$ 0.004 & \textbf{10.112 $\pm$ 0.003} & 10.547 $\pm$ 0.035 & 10.230 $\pm$ 0.091 & 6.739 $\pm$ 0.006 & \textbf{3.417} \\
MultipleModels-COM & 4.238 $\pm$ 0.012 & 4.809 $\pm$ 0.012 & 4.817 $\pm$ 0.033 & 4.658 $\pm$ 0.071 & 4.803 $\pm$ 0.004 & 10.595 $\pm$ 0.005 & 10.617 $\pm$ 0.004 & 10.611 $\pm$ 0.008 & 9.858 $\pm$ 0.079 & 10.471 $\pm$ 0.019 & 9.540 $\pm$ 0.049 & 6.245 $\pm$ 0.071 & 8.750 \\
MultipleModels-IOM & 4.594 $\pm$ 0.007 & \textbf{4.840 $\pm$ 0.000} & 4.813 $\pm$ 0.034 & 4.789 $\pm$ 0.104 & \textbf{4.841 $\pm$ 0.000} & \textbf{10.648 $\pm$ 0.000} & \textbf{10.648 $\pm$ 0.000} & 10.620 $\pm$ 0.001 & 10.105 $\pm$ 0.005 & 10.569 $\pm$ 0.009 & 10.226 $\pm$ 0.138 & 6.723 $\pm$ 0.008 & 3.500 \\
MultipleModels-ICT & \textbf{4.598 $\pm$ 0.001} & \textbf{4.840 $\pm$ 0.000} & 4.455 $\pm$ 0.071 & 4.830 $\pm$ 0.011 & 4.834 $\pm$ 0.003 & \textbf{10.648 $\pm$ 0.000} & 10.639 $\pm$ 0.003 & 10.621 $\pm$ 0.004 & 10.101 $\pm$ 0.003 & 10.490 $\pm$ 0.006 & \textbf{10.312 $\pm$ 0.012} & 6.732 $\pm$ 0.007 & 4.833 \\
MultipleModels-RoMA & 4.566 $\pm$ 0.003 & 4.543 $\pm$ 0.664 & 4.836 $\pm$ 0.006 & 3.954 $\pm$ 1.200 & 4.832 $\pm$ 0.010 & 10.644 $\pm$ 0.004 & 10.635 $\pm$ 0.005 & 10.592 $\pm$ 0.021 & 9.917 $\pm$ 0.010 & 10.533 $\pm$ 0.022 & 9.490 $\pm$ 0.442 & 6.611 $\pm$ 0.054 & 7.750 \\
MultipleModels-Trimentoring & 4.595 $\pm$ 0.001 & \textbf{4.840 $\pm$ 0.000} & \textbf{4.840 $\pm$ 0.000} & 4.530 $\pm$ 0.406 & 4.807 $\pm$ 0.002 & \textbf{10.648 $\pm$ 0.000} & 10.631 $\pm$ 0.010 & \textbf{10.626 $\pm$ 0.000} & 10.093 $\pm$ 0.004 & 10.521 $\pm$ 0.020 & 7.270 $\pm$ 2.131 & 6.718 $\pm$ 0.017 & 5.750 \\
ParetoFlow & 4.133 $\pm$ 0.093 & 4.815 $\pm$ 0.028 & 4.800 $\pm$ 0.022 & 4.832 $\pm$ 0.005 & 4.823 $\pm$ 0.005 & 10.528 $\pm$ 0.103 & 10.630 $\pm$ 0.012 & 10.340 $\pm$ 0.087 & 9.693 $\pm$ 0.037 & 10.410 $\pm$ 0.044 & 8.785 $\pm$ 0.221 & 5.595 $\pm$ 0.405 & 9.250 \\
PreferenceGuidedDiffusion & 4.479 $\pm$ 0.046 & 4.836 $\pm$ 0.003 & 4.837 $\pm$ 0.003 & \textbf{4.834 $\pm$ 0.001} & 4.839 $\pm$ 0.000 & 10.613 $\pm$ 0.021 & 10.646 $\pm$ 0.001 & 10.430 $\pm$ 0.114 & 9.444 $\pm$ 0.164 & 10.248 $\pm$ 0.110 & 9.426 $\pm$ 0.223 & 6.182 $\pm$ 0.131 & 7.167 \\
\midrule
ParetoTransport & 4.588 $\pm$ 0.004 & 4.839 $\pm$ 0.002 & \textbf{4.840 $\pm$ 0.000} & 4.831 $\pm$ 0.002 & 4.840 $\pm$ 0.001 & 10.613 $\pm$ 0.000 & 10.644 $\pm$ 0.003 & 10.621 $\pm$ 0.004 & 10.045 $\pm$ 0.014 & \textbf{10.581 $\pm$ 0.005} & 10.118 $\pm$ 0.207 & 6.670 $\pm$ 0.009 & 4.333 \\
\bottomrule
\end{tabular}
}
\label{tab:re_HV}
\end{table}

%% file: tables/re_GD.tex
\begin{table}[!htbp]
\centering
\caption{GD results on RE ($\downarrow$). Bold $=$ best per column. Each algorithm is run for $5$ seeds and evaluated on $256$ designs.}
\resizebox{\linewidth}{!}{%
\begin{tabular}{lccccccccccccc}
\toprule
Method & RE21 & RE22 & RE23 & RE24 & RE25 & RE31 & RE32 & RE33 & RE34 & RE35 & RE36 & RE37 & Avg. Rank \\
\midrule
D(best) & 0.111 & 0.001 & 0.034 & 0.040 & 0.030 & 0.017 & 0.051 & 0.007 & 0.065 & 0.077 & 0.377 & 0.093 & -- \\
MultiHead-Vallina & \textbf{0.001 $\pm$ 0.000} & 272.464 $\pm$ 557.607 & 0.219 $\pm$ 0.071 & 0.460 $\pm$ 0.364 & 0.461 $\pm$ 0.059 & 0.054 $\pm$ 0.023 & 0.006 $\pm$ 0.010 & 0.093 $\pm$ 0.033 & \textbf{0.007 $\pm$ 0.001} & \textbf{0.005 $\pm$ 0.003} & 0.160 $\pm$ 0.031 & 0.016 $\pm$ 0.001 & 6.417 \\
MultiHead-PcGrad & 0.002 $\pm$ 0.000 & 96.838 $\pm$ 169.131 & 0.217 $\pm$ 0.125 & 0.251 $\pm$ 0.320 & 0.255 $\pm$ 0.254 & 0.039 $\pm$ 0.028 & 0.154 $\pm$ 0.344 & 0.084 $\pm$ 0.029 & 0.029 $\pm$ 0.010 & 0.095 $\pm$ 0.088 & 0.177 $\pm$ 0.035 & \textbf{0.013 $\pm$ 0.001} & 7.167 \\
MultiHead-GradNorm & 0.081 $\pm$ 0.157 & 0.919 $\pm$ 2.027 & 0.020 $\pm$ 0.020 & 0.240 $\pm$ 0.061 & 0.211 $\pm$ 0.144 & 0.019 $\pm$ 0.018 & 0.133 $\pm$ 0.170 & 0.335 $\pm$ 0.055 & 0.160 $\pm$ 0.082 & 0.242 $\pm$ 0.338 & 0.198 $\pm$ 0.169 & 0.081 $\pm$ 0.146 & 7.833 \\
MultipleModels-Vallina & \textbf{0.001 $\pm$ 0.000} & 0.084 $\pm$ 0.179 & 0.041 $\pm$ 0.049 & 0.443 $\pm$ 0.066 & 0.153 $\pm$ 0.112 & 0.040 $\pm$ 0.020 & 0.190 $\pm$ 0.184 & 0.102 $\pm$ 0.038 & \textbf{0.007 $\pm$ 0.001} & 0.039 $\pm$ 0.081 & 0.143 $\pm$ 0.026 & 0.016 $\pm$ 0.002 & 5.500 \\
MultipleModels-COM & 0.062 $\pm$ 0.002 & 6.222 $\pm$ 13.724 & 0.023 $\pm$ 0.009 & \textbf{0.035 $\pm$ 0.014} & 0.125 $\pm$ 0.043 & \textbf{0.015 $\pm$ 0.013} & 0.039 $\pm$ 0.049 & 0.012 $\pm$ 0.004 & 0.041 $\pm$ 0.005 & 0.035 $\pm$ 0.012 & 0.162 $\pm$ 0.021 & 0.054 $\pm$ 0.003 & 5.333 \\
MultipleModels-IOM & 0.006 $\pm$ 0.005 & 3.791 $\pm$ 4.118 & 0.043 $\pm$ 0.031 & 0.118 $\pm$ 0.055 & 4.658 $\pm$ 10.412 & 0.037 $\pm$ 0.019 & 0.211 $\pm$ 0.058 & 0.053 $\pm$ 0.036 & 0.010 $\pm$ 0.002 & 0.072 $\pm$ 0.005 & \textbf{0.092 $\pm$ 0.016} & 0.023 $\pm$ 0.003 & 6.333 \\
MultipleModels-ICT & \textbf{0.001 $\pm$ 0.000} & 17.170 $\pm$ 15.728 & 0.006 $\pm$ 0.000 & 0.437 $\pm$ 0.033 & 0.284 $\pm$ 0.025 & 0.050 $\pm$ 0.006 & \textbf{0.001 $\pm$ 0.000} & 0.057 $\pm$ 0.042 & 0.022 $\pm$ 0.002 & 0.015 $\pm$ 0.002 & 0.106 $\pm$ 0.009 & 0.019 $\pm$ 0.002 & 5.250 \\
MultipleModels-RoMA & 0.042 $\pm$ 0.008 & 198.883 $\pm$ 411.102 & 0.087 $\pm$ 0.187 & 0.648 $\pm$ 0.260 & 0.260 $\pm$ 0.221 & \textbf{0.015 $\pm$ 0.007} & 0.355 $\pm$ 0.065 & 0.105 $\pm$ 0.033 & 0.105 $\pm$ 0.006 & 0.026 $\pm$ 0.006 & 0.574 $\pm$ 0.052 & 0.058 $\pm$ 0.002 & 9.167 \\
MultipleModels-Trimentoring & 0.003 $\pm$ 0.001 & 0.009 $\pm$ 0.021 & 0.072 $\pm$ 0.027 & 0.367 $\pm$ 0.216 & 0.029 $\pm$ 0.006 & 0.071 $\pm$ 0.010 & 0.352 $\pm$ 0.073 & 0.009 $\pm$ 0.003 & 0.013 $\pm$ 0.002 & 0.019 $\pm$ 0.009 & 0.623 $\pm$ 0.268 & 0.017 $\pm$ 0.002 & 6.167 \\
ParetoFlow & 0.042 $\pm$ 0.014 & \textbf{0.000 $\pm$ 0.000} & 0.073 $\pm$ 0.006 & 0.117 $\pm$ 0.004 & 0.046 $\pm$ 0.006 & 0.024 $\pm$ 0.023 & 0.058 $\pm$ 0.005 & \textbf{0.007 $\pm$ 0.000} & 0.058 $\pm$ 0.007 & 0.065 $\pm$ 0.004 & 0.278 $\pm$ 0.022 & 0.118 $\pm$ 0.009 & 6.083 \\
PreferenceGuidedDiffusion & 0.115 $\pm$ 0.003 & inf $\pm$ nan & 0.050 $\pm$ 0.004 & 0.470 $\pm$ 0.060 & 0.035 $\pm$ 0.016 & 0.193 $\pm$ 0.074 & 0.084 $\pm$ 0.015 & 0.017 $\pm$ 0.003 & 0.101 $\pm$ 0.009 & 0.011 $\pm$ 0.009 & 0.469 $\pm$ 0.013 & 0.119 $\pm$ 0.008 & 8.333 \\
\midrule
ParetoTransport & 0.002 $\pm$ 0.000 & 0.009 $\pm$ 0.010 & \textbf{0.002 $\pm$ 0.001} & 0.074 $\pm$ 0.018 & \textbf{0.023 $\pm$ 0.026} & 0.130 $\pm$ 0.064 & 0.157 $\pm$ 0.083 & 0.030 $\pm$ 0.008 & 0.014 $\pm$ 0.004 & 0.014 $\pm$ 0.022 & 0.148 $\pm$ 0.034 & 0.032 $\pm$ 0.003 & \textbf{4.417} \\
\bottomrule
\end{tabular}
}
\label{tab:re_GD}
\end{table}

%% file: tables/re_IGD.tex
\begin{table}[!htbp]
\centering
\caption{IGD results on RE ($\downarrow$). Bold $=$ best per column. Each algorithm is run for $5$ seeds and evaluated on $256$ designs.}
\resizebox{\linewidth}{!}{%
\begin{tabular}{lccccccccccccc}
\toprule
Method & RE21 & RE22 & RE23 & RE24 & RE25 & RE31 & RE32 & RE33 & RE34 & RE35 & RE36 & RE37 & Avg. Rank \\
\midrule
D(best) & 0.108 & 0.002 & 0.046 & 0.063 & 0.018 & 0.402 & 0.064 & 0.058 & 0.070 & 0.084 & 0.435 & 0.121 & -- \\
MultiHead-Vallina & \textbf{0.002 $\pm$ 0.000} & 0.067 $\pm$ 0.011 & 0.011 $\pm$ 0.004 & 0.361 $\pm$ 0.331 & 0.145 $\pm$ 0.125 & 0.237 $\pm$ 0.086 & 0.254 $\pm$ 0.123 & 0.060 $\pm$ 0.025 & \textbf{0.023 $\pm$ 0.002} & 0.140 $\pm$ 0.100 & 0.035 $\pm$ 0.013 & 0.040 $\pm$ 0.001 & 5.917 \\
MultiHead-PcGrad & 0.003 $\pm$ 0.000 & 0.138 $\pm$ 0.084 & 0.081 $\pm$ 0.101 & 0.127 $\pm$ 0.111 & 0.045 $\pm$ 0.093 & 0.243 $\pm$ 0.078 & 0.312 $\pm$ 0.006 & 0.048 $\pm$ 0.010 & 0.028 $\pm$ 0.005 & 0.138 $\pm$ 0.129 & 0.052 $\pm$ 0.026 & 0.057 $\pm$ 0.007 & 7.167 \\
MultiHead-GradNorm & 0.121 $\pm$ 0.153 & 0.150 $\pm$ 0.054 & 0.566 $\pm$ 0.030 & 0.226 $\pm$ 0.042 & 0.072 $\pm$ 0.131 & 0.182 $\pm$ 0.038 & 0.254 $\pm$ 0.130 & 0.509 $\pm$ 0.087 & 0.215 $\pm$ 0.206 & 0.585 $\pm$ 0.359 & 0.237 $\pm$ 0.152 & 0.197 $\pm$ 0.176 & 10.333 \\
MultipleModels-Vallina & \textbf{0.002 $\pm$ 0.000} & 0.100 $\pm$ 0.013 & \textbf{0.004 $\pm$ 0.002} & 0.024 $\pm$ 0.030 & 0.001 $\pm$ 0.001 & 0.283 $\pm$ 0.091 & 0.308 $\pm$ 0.008 & 0.094 $\pm$ 0.045 & 0.024 $\pm$ 0.001 & 0.136 $\pm$ 0.110 & 0.042 $\pm$ 0.026 & 0.040 $\pm$ 0.002 & 5.083 \\
MultipleModels-COM & 0.056 $\pm$ 0.002 & 0.055 $\pm$ 0.043 & 0.397 $\pm$ 0.012 & 0.038 $\pm$ 0.026 & 0.016 $\pm$ 0.002 & 0.345 $\pm$ 0.055 & \textbf{0.054 $\pm$ 0.042} & 0.077 $\pm$ 0.063 & 0.044 $\pm$ 0.003 & 0.027 $\pm$ 0.002 & 0.079 $\pm$ 0.011 & 0.059 $\pm$ 0.003 & 7.167 \\
MultipleModels-IOM & 0.003 $\pm$ 0.000 & 0.038 $\pm$ 0.041 & 0.187 $\pm$ 0.197 & 0.013 $\pm$ 0.026 & 0.002 $\pm$ 0.001 & \textbf{0.173 $\pm$ 0.020} & 0.066 $\pm$ 0.028 & \textbf{0.030 $\pm$ 0.003} & 0.024 $\pm$ 0.002 & \textbf{0.020 $\pm$ 0.003} & 0.022 $\pm$ 0.008 & \textbf{0.039 $\pm$ 0.001} & \textbf{3.167} \\
MultipleModels-ICT & \textbf{0.002 $\pm$ 0.000} & 0.115 $\pm$ 0.054 & 0.042 $\pm$ 0.010 & 0.020 $\pm$ 0.034 & 0.009 $\pm$ 0.006 & 0.218 $\pm$ 0.070 & 0.305 $\pm$ 0.002 & 0.145 $\pm$ 0.096 & 0.031 $\pm$ 0.003 & 0.152 $\pm$ 0.010 & \textbf{0.020 $\pm$ 0.004} & 0.040 $\pm$ 0.002 & 6.083 \\
MultipleModels-RoMA & 0.022 $\pm$ 0.002 & 0.193 $\pm$ 0.300 & 0.092 $\pm$ 0.173 & 0.391 $\pm$ 0.512 & 0.004 $\pm$ 0.004 & 0.325 $\pm$ 0.047 & 0.055 $\pm$ 0.009 & 0.038 $\pm$ 0.008 & 0.086 $\pm$ 0.003 & 0.029 $\pm$ 0.004 & 0.199 $\pm$ 0.067 & 0.052 $\pm$ 0.002 & 7.583 \\
MultipleModels-Trimentoring & 0.003 $\pm$ 0.001 & 0.147 $\pm$ 0.046 & 0.007 $\pm$ 0.002 & 0.144 $\pm$ 0.176 & 0.015 $\pm$ 0.001 & 0.276 $\pm$ 0.049 & 0.121 $\pm$ 0.022 & 0.172 $\pm$ 0.019 & 0.052 $\pm$ 0.014 & 0.028 $\pm$ 0.006 & 0.469 $\pm$ 0.340 & 0.041 $\pm$ 0.002 & 7.250 \\
ParetoFlow & 0.115 $\pm$ 0.024 & 0.010 $\pm$ 0.009 & 0.042 $\pm$ 0.002 & \textbf{0.007 $\pm$ 0.005} & 0.006 $\pm$ 0.002 & 0.395 $\pm$ 0.033 & 0.062 $\pm$ 0.001 & 0.042 $\pm$ 0.007 & 0.066 $\pm$ 0.004 & 0.062 $\pm$ 0.018 & 0.240 $\pm$ 0.022 & 0.173 $\pm$ 0.026 & 6.917 \\
PreferenceGuidedDiffusion & 0.041 $\pm$ 0.005 & \textbf{0.001 $\pm$ 0.000} & 0.020 $\pm$ 0.002 & \textbf{0.007 $\pm$ 0.003} & \textbf{0.000 $\pm$ 0.000} & 0.287 $\pm$ 0.004 & 0.072 $\pm$ 0.016 & 0.031 $\pm$ 0.008 & 0.079 $\pm$ 0.009 & 0.268 $\pm$ 0.043 & 0.198 $\pm$ 0.032 & 0.093 $\pm$ 0.005 & 5.917 \\
\midrule
ParetoTransport & 0.006 $\pm$ 0.000 & 0.018 $\pm$ 0.016 & 0.009 $\pm$ 0.002 & 0.010 $\pm$ 0.003 & \textbf{0.000 $\pm$ 0.000} & 0.323 $\pm$ 0.006 & 0.079 $\pm$ 0.035 & 0.068 $\pm$ 0.017 & 0.029 $\pm$ 0.002 & 0.052 $\pm$ 0.044 & 0.058 $\pm$ 0.038 & 0.060 $\pm$ 0.003 & 5.417 \\
\bottomrule
\end{tabular}
}
\label{tab:re_IGD}
\end{table}

%% file: tables/re_W2.tex
\begin{table}[!htbp]
\centering
\caption{$W_2$ results on RE ($\downarrow$). Bold $=$ best per column. Each algorithm is run for $5$ seeds and evaluated on $256$ designs.}
\resizebox{\linewidth}{!}{%
\begin{tabular}{lccccccccccccc}
\toprule
Method & RE21 & RE22 & RE23 & RE24 & RE25 & RE31 & RE32 & RE33 & RE34 & RE35 & RE36 & RE37 & Avg. Rank \\
\midrule
D(best) & 0.120 & 0.032 & 0.350 & 0.063 & 0.064 & 0.452 & 0.296 & 0.172 & 0.252 & 0.191 & 0.479 & 0.237 & -- \\
MultiHead-Vallina & 0.009 $\pm$ 0.004 & 272.533 $\pm$ 557.568 & 0.323 $\pm$ 0.079 & 0.479 $\pm$ 0.366 & 0.465 $\pm$ 0.054 & 0.348 $\pm$ 0.064 & 0.266 $\pm$ 0.103 & 0.259 $\pm$ 0.041 & 0.148 $\pm$ 0.010 & 0.201 $\pm$ 0.101 & 0.210 $\pm$ 0.020 & 0.108 $\pm$ 0.006 & 5.833 \\
MultiHead-PcGrad & 0.014 $\pm$ 0.005 & 96.952 $\pm$ 169.051 & 0.348 $\pm$ 0.126 & 0.267 $\pm$ 0.320 & 0.263 $\pm$ 0.261 & 0.386 $\pm$ 0.047 & 0.445 $\pm$ 0.293 & 0.237 $\pm$ 0.020 & 0.174 $\pm$ 0.014 & 0.325 $\pm$ 0.164 & 0.222 $\pm$ 0.030 & 0.119 $\pm$ 0.014 & 6.917 \\
MultiHead-GradNorm & 0.226 $\pm$ 0.172 & 1.045 $\pm$ 1.962 & 0.571 $\pm$ 0.028 & 0.258 $\pm$ 0.060 & 0.236 $\pm$ 0.172 & 0.401 $\pm$ 0.064 & 0.359 $\pm$ 0.162 & 0.656 $\pm$ 0.175 & 0.332 $\pm$ 0.119 & 0.690 $\pm$ 0.388 & 0.444 $\pm$ 0.071 & 0.275 $\pm$ 0.161 & 9.000 \\
MultipleModels-Vallina & \textbf{0.007 $\pm$ 0.002} & 0.233 $\pm$ 0.159 & 0.212 $\pm$ 0.064 & 0.455 $\pm$ 0.067 & 0.246 $\pm$ 0.150 & 0.401 $\pm$ 0.051 & 0.427 $\pm$ 0.114 & 0.293 $\pm$ 0.043 & 0.151 $\pm$ 0.011 & 0.244 $\pm$ 0.127 & 0.217 $\pm$ 0.023 & \textbf{0.102 $\pm$ 0.001} & 5.750 \\
MultipleModels-COM & 0.069 $\pm$ 0.003 & 6.319 $\pm$ 13.674 & 0.404 $\pm$ 0.004 & \textbf{0.055 $\pm$ 0.018} & 0.224 $\pm$ 0.070 & 0.440 $\pm$ 0.017 & \textbf{0.174 $\pm$ 0.074} & 0.197 $\pm$ 0.062 & 0.219 $\pm$ 0.012 & \textbf{0.115 $\pm$ 0.011} & 0.229 $\pm$ 0.017 & 0.163 $\pm$ 0.007 & 5.750 \\
MultipleModels-IOM & 0.015 $\pm$ 0.007 & 3.876 $\pm$ 4.104 & 0.292 $\pm$ 0.160 & 0.134 $\pm$ 0.052 & 4.673 $\pm$ 10.441 & \textbf{0.311 $\pm$ 0.074} & 0.402 $\pm$ 0.056 & 0.172 $\pm$ 0.034 & 0.165 $\pm$ 0.016 & 0.209 $\pm$ 0.010 & \textbf{0.142 $\pm$ 0.022} & 0.124 $\pm$ 0.003 & 5.083 \\
MultipleModels-ICT & \textbf{0.007 $\pm$ 0.001} & 17.261 $\pm$ 15.678 & 0.337 $\pm$ 0.017 & 0.451 $\pm$ 0.032 & 0.396 $\pm$ 0.030 & 0.371 $\pm$ 0.059 & 0.314 $\pm$ 0.000 & 0.267 $\pm$ 0.106 & 0.146 $\pm$ 0.012 & 0.266 $\pm$ 0.044 & 0.179 $\pm$ 0.009 & 0.107 $\pm$ 0.004 & 5.750 \\
MultipleModels-RoMA & 0.064 $\pm$ 0.013 & 198.919 $\pm$ 411.081 & 0.309 $\pm$ 0.169 & 0.667 $\pm$ 0.262 & 0.287 $\pm$ 0.213 & 0.424 $\pm$ 0.018 & 0.456 $\pm$ 0.068 & 0.258 $\pm$ 0.051 & 0.305 $\pm$ 0.007 & 0.194 $\pm$ 0.059 & 0.670 $\pm$ 0.061 & 0.191 $\pm$ 0.007 & 8.667 \\
MultipleModels-Trimentoring & \textbf{0.007 $\pm$ 0.001} & 0.190 $\pm$ 0.012 & 0.164 $\pm$ 0.021 & 0.379 $\pm$ 0.218 & 0.112 $\pm$ 0.003 & 0.447 $\pm$ 0.018 & 0.512 $\pm$ 0.127 & 0.304 $\pm$ 0.009 & 0.190 $\pm$ 0.040 & 0.149 $\pm$ 0.020 & 0.729 $\pm$ 0.333 & \textbf{0.102 $\pm$ 0.010} & 6.083 \\
ParetoFlow & 0.187 $\pm$ 0.004 & \textbf{0.034 $\pm$ 0.015} & 0.360 $\pm$ 0.002 & 0.137 $\pm$ 0.004 & 0.076 $\pm$ 0.006 & 0.433 $\pm$ 0.006 & 0.313 $\pm$ 0.006 & \textbf{0.129 $\pm$ 0.004} & 0.191 $\pm$ 0.035 & 0.233 $\pm$ 0.009 & 0.413 $\pm$ 0.031 & 0.327 $\pm$ 0.040 & 6.500 \\
PreferenceGuidedDiffusion & 0.147 $\pm$ 0.034 & -- & 0.228 $\pm$ 0.067 & 0.483 $\pm$ 0.061 & \textbf{0.046 $\pm$ 0.014} & 0.466 $\pm$ 0.034 & 0.271 $\pm$ 0.009 & 0.226 $\pm$ 0.037 & 0.273 $\pm$ 0.029 & 0.460 $\pm$ 0.044 & 0.533 $\pm$ 0.012 & 0.248 $\pm$ 0.016 & 7.909 \\
\midrule
ParetoTransport & 0.074 $\pm$ 0.011 & 0.119 $\pm$ 0.012 & \textbf{0.159 $\pm$ 0.053} & 0.094 $\pm$ 0.019 & 0.049 $\pm$ 0.041 & 0.452 $\pm$ 0.053 & 0.301 $\pm$ 0.044 & 0.217 $\pm$ 0.024 & \textbf{0.117 $\pm$ 0.027} & 0.171 $\pm$ 0.072 & 0.231 $\pm$ 0.045 & 0.144 $\pm$ 0.005 & \textbf{4.417} \\
\bottomrule
\end{tabular}
}
\label{tab:re_W2}
\end{table}

%% file: tables/ablations_components_zdt.tex
\begin{table}[!htbp]
\caption{Ablation of the Wasserstein matching term $W_2$ and directional improvement term $L_n$ on ZDT. Bold $=$ best per column. Each experiment is run for $5$ seeds and evaluated on $256$ designs.}
\label{tab:zdt-components}
\resizebox{\linewidth}{!}{%
\begin{tabular}{cclccccc}
\toprule
$W_2$ & $L_n$ & Metric & ZDT1 & ZDT2 & ZDT3 & ZDT4 & ZDT6 \\
\midrule
 & \multirow{4}{*}{\checkmark} & HV ($\uparrow$) & 4.750 $\pm$ 0.011 & 5.499 $\pm$ 0.019 & \textbf{5.611} $\pm$ 0.198 & 5.479 $\pm$ 0.009 & 4.816 $\pm$ 0.008 \\
 &  & GD ($\downarrow$) & 0.015 $\pm$ 0.003 & \textbf{0.033} $\pm$ 0.036 & \textbf{0.183} $\pm$ 0.026 & 0.443 $\pm$ 0.012 & 0.016 $\pm$ 0.016 \\
 &  & IGD ($\downarrow$) & 0.116 $\pm$ 0.060 & 0.368 $\pm$ 0.142 & 0.164 $\pm$ 0.113 & 0.346 $\pm$ 0.085 & 0.019 $\pm$ 0.002 \\
 &  & W2 ($\downarrow$) & 0.350 $\pm$ 0.026 & 0.507 $\pm$ 0.010 & 0.357 $\pm$ 0.138 & 0.642 $\pm$ 0.012 & 0.156 $\pm$ 0.060 \\
 \midrule
\multirow{4}{*}{\checkmark} &  & HV ($\uparrow$) & 4.750 $\pm$ 0.018 & 5.617 $\pm$ 0.040 & 5.508 $\pm$ 0.040 & \textbf{5.491} $\pm$ 0.006 & 4.755 $\pm$ 0.086 \\
 &  & GD ($\downarrow$) & 0.130 $\pm$ 0.009 & 0.169 $\pm$ 0.016 & 0.303 $\pm$ 0.021 & \textbf{0.382} $\pm$ 0.010 & 0.054 $\pm$ 0.040 \\
 &  & IGD ($\downarrow$) & 0.178 $\pm$ 0.006 & 0.155 $\pm$ 0.023 & 0.229 $\pm$ 0.018 & 0.264 $\pm$ 0.048 & 0.045 $\pm$ 0.025 \\
 &  & W2 ($\downarrow$) & 0.225 $\pm$ 0.005 & 0.242 $\pm$ 0.023 & 0.357 $\pm$ 0.022 & 0.545 $\pm$ 0.037 & 0.184 $\pm$ 0.015 \\
 \midrule
\multirow{4}{*}{\checkmark} & \multirow{4}{*}{\checkmark} & HV ($\uparrow$) & \textbf{4.833} $\pm$ 0.023 & \textbf{5.660} $\pm$ 0.039 & 5.577 $\pm$ 0.075 & 5.485 $\pm$ 0.000 & \textbf{4.825} $\pm$ 0.001 \\
 &  & GD ($\downarrow$) & \textbf{0.013} $\pm$ 0.004 & 0.039 $\pm$ 0.013 & 0.250 $\pm$ 0.028 & 0.404 $\pm$ 0.018 & \textbf{0.013} $\pm$ 0.003 \\
 &  & IGD ($\downarrow$) & \textbf{0.032} $\pm$ 0.053 & \textbf{0.023} $\pm$ 0.011 & \textbf{0.150} $\pm$ 0.023 & \textbf{0.261} $\pm$ 0.023 & \textbf{0.016} $\pm$ 0.001 \\
 &  & W2 ($\downarrow$) & \textbf{0.181} $\pm$ 0.108 & \textbf{0.131} $\pm$ 0.024 & \textbf{0.305} $\pm$ 0.025 & \textbf{0.542} $\pm$ 0.044 & \textbf{0.147} $\pm$ 0.026 \\
\bottomrule
\end{tabular}
}
\end{table}

%% file: tables/ablations_components_dtlz.tex
\begin{table}[!htbp]
\caption{Ablation of the Wasserstein matching term $W_2$ and directional improvement term $L_n$ on DTLZ. Bold $=$ best per column. Each experiment is run for $5$ seeds and evaluated on $256$ designs.}
\label{tab:dtlz-components}
\resizebox{\linewidth}{!}{%
\begin{tabular}{cclccccccc}
\toprule
$W_2$ & $L_n$ & Metric & DTLZ1 & DTLZ2 & DTLZ3 & DTLZ4 & DTLZ5 & DTLZ6 & DTLZ7 \\
\midrule
 & \multirow{4}{*}{\checkmark} & HV ($\uparrow$) & 10.645 $\pm$ 0.001 & 12.440 $\pm$ 0.001 & \textbf{9.898} $\pm$ 0.001 & 17.324 $\pm$ 0.250 & 10.739 $\pm$ 0.006 & \textbf{11.004} $\pm$ 0.008 & 10.738 $\pm$ 0.009 \\
 &  & GD ($\downarrow$) & 0.448 $\pm$ 0.024 & 0.027 $\pm$ 0.002 & \textbf{0.394} $\pm$ 0.013 & \textbf{0.050} $\pm$ 0.010 & \textbf{0.026} $\pm$ 0.004 & \textbf{0.514} $\pm$ 0.008 & 0.095 $\pm$ 0.026 \\
 &  & IGD ($\downarrow$) & \textbf{0.062} $\pm$ 0.012 & 0.038 $\pm$ 0.001 & \textbf{0.107} $\pm$ 0.009 & 0.238 $\pm$ 0.045 & 0.015 $\pm$ 0.003 & 0.295 $\pm$ 0.003 & 0.112 $\pm$ 0.012 \\
 &  & W2 ($\downarrow$) & 0.449 $\pm$ 0.025 & 0.121 $\pm$ 0.018 & \textbf{0.394} $\pm$ 0.013 & 0.410 $\pm$ 0.008 & 0.102 $\pm$ 0.011 & \textbf{0.520} $\pm$ 0.009 & 0.281 $\pm$ 0.038 \\
 \midrule
\multirow{4}{*}{\checkmark} &  & HV ($\uparrow$) & 10.645 $\pm$ 0.001 & 12.442 $\pm$ 0.001 & 9.891 $\pm$ 0.004 & \textbf{17.619} $\pm$ 0.023 & 10.730 $\pm$ 0.002 & 10.896 $\pm$ 0.037 & 10.577 $\pm$ 0.052 \\
 &  & GD ($\downarrow$) & 0.419 $\pm$ 0.005 & 0.044 $\pm$ 0.003 & 0.533 $\pm$ 0.008 & 0.124 $\pm$ 0.014 & 0.081 $\pm$ 0.013 & 0.703 $\pm$ 0.007 & 0.252 $\pm$ 0.041 \\
 &  & IGD ($\downarrow$) & 0.081 $\pm$ 0.031 & 0.036 $\pm$ 0.001 & 0.153 $\pm$ 0.047 & 0.158 $\pm$ 0.012 & 0.020 $\pm$ 0.001 & \textbf{0.287} $\pm$ 0.001 & 0.149 $\pm$ 0.019 \\
 &  & W2 ($\downarrow$) & 0.420 $\pm$ 0.005 & 0.065 $\pm$ 0.003 & 0.533 $\pm$ 0.008 & 0.333 $\pm$ 0.013 & 0.107 $\pm$ 0.019 & 0.712 $\pm$ 0.006 & 0.365 $\pm$ 0.028 \\
 \midrule
\multirow{4}{*}{\checkmark} & \multirow{4}{*}{\checkmark} & HV ($\uparrow$) & \textbf{10.646} $\pm$ 0.000 & \textbf{12.448} $\pm$ 0.000 & 9.896 $\pm$ 0.002 & 17.611 $\pm$ 0.006 & \textbf{10.754} $\pm$ 0.001 & 10.966 $\pm$ 0.042 & \textbf{10.759} $\pm$ 0.057 \\
 &  & GD ($\downarrow$) & \textbf{0.397} $\pm$ 0.021 & \textbf{0.023} $\pm$ 0.003 & 0.451 $\pm$ 0.030 & 0.063 $\pm$ 0.016 & 0.063 $\pm$ 0.008 & 0.576 $\pm$ 0.030 & \textbf{0.063} $\pm$ 0.018 \\
 &  & IGD ($\downarrow$) & 0.087 $\pm$ 0.020 & \textbf{0.021} $\pm$ 0.001 & 0.120 $\pm$ 0.026 & \textbf{0.152} $\pm$ 0.005 & \textbf{0.007} $\pm$ 0.001 & 0.290 $\pm$ 0.006 & \textbf{0.087} $\pm$ 0.062 \\
 &  & W2 ($\downarrow$) & \textbf{0.398} $\pm$ 0.021 & \textbf{0.040} $\pm$ 0.002 & 0.452 $\pm$ 0.030 & \textbf{0.298} $\pm$ 0.010 & \textbf{0.087} $\pm$ 0.019 & 0.582 $\pm$ 0.030 & \textbf{0.216} $\pm$ 0.080 \\
\bottomrule
\end{tabular}
}
\end{table}

%% file: tables/ablation_gamma/zdt_gamma_hv.tex
% requires \usepackage{booktabs,graphicx}
\begin{table}
\caption{Ablation of the transport weight $\gamma$ on ZDT tasks. HV is computed over a population of $256$ candidate designs and averaged over $5$ random seeds.}
\label{tab:zdt-gamma-hv}
\resizebox{\linewidth}{!}{%
\begin{tabular}{lccccc}
\toprule
$\gamma$ & ZDT1 & ZDT2 & ZDT3 & ZDT4 & ZDT6 \\
\midrule
0 & 4.750 $\pm$ 0.018 & 5.617 $\pm$ 0.040 & 5.508 $\pm$ 0.040 & \textbf{5.491 $\pm$ 0.006} & 4.755 $\pm$ 0.086 \\
0.1 & 4.761 $\pm$ 0.015 & 5.656 $\pm$ 0.025 & 5.550 $\pm$ 0.056 & 5.486 $\pm$ 0.005 & 4.822 $\pm$ 0.007 \\
0.2 & 4.796 $\pm$ 0.018 & 5.656 $\pm$ 0.026 & 5.528 $\pm$ 0.020 & 5.487 $\pm$ 0.005 & 4.822 $\pm$ 0.007 \\
0.5 & \textbf{4.833 $\pm$ 0.023} & \textbf{5.660 $\pm$ 0.039} & 5.577 $\pm$ 0.075 & 5.485 $\pm$ 0.000 & \textbf{4.825 $\pm$ 0.001} \\
1 & 4.831 $\pm$ 0.028 & 5.658 $\pm$ 0.038 & 5.590 $\pm$ 0.115 & 5.487 $\pm$ 0.005 & 4.822 $\pm$ 0.007 \\
2 & 4.805 $\pm$ 0.040 & 5.641 $\pm$ 0.049 & 5.578 $\pm$ 0.139 & 5.488 $\pm$ 0.006 & 4.822 $\pm$ 0.006 \\
5 & 4.770 $\pm$ 0.022 & 5.550 $\pm$ 0.035 & 5.593 $\pm$ 0.144 & 5.486 $\pm$ 0.005 & \textbf{4.825 $\pm$ 0.001} \\
10 & 4.761 $\pm$ 0.015 & 5.497 $\pm$ 0.022 & \textbf{5.607 $\pm$ 0.180} & 5.484 $\pm$ 0.002 & 4.823 $\pm$ 0.006 \\
\bottomrule
\end{tabular}%
}
\end{table}

%% file: tables/ablation_gamma/zdt_gamma_gd.tex
% requires \usepackage{booktabs,graphicx}
\begin{table}
\caption{Ablation of the transport weight $\gamma$ on ZDT tasks. GD is computed over a population of $256$ candidate designs and averaged over $5$ random seeds.}
\label{tab:zdt-gamma-gd}
\resizebox{\linewidth}{!}{%
\begin{tabular}{lccccc}
\toprule
$\gamma$ & ZDT1 & ZDT2 & ZDT3 & ZDT4 & ZDT6 \\
\midrule
0 & 0.130 $\pm$ 0.009 & 0.169 $\pm$ 0.016 & 0.303 $\pm$ 0.021 & \textbf{0.382 $\pm$ 0.010} & 0.054 $\pm$ 0.040 \\
0.1 & 0.109 $\pm$ 0.011 & 0.115 $\pm$ 0.017 & 0.312 $\pm$ 0.025 & 0.400 $\pm$ 0.028 & \textbf{0.010 $\pm$ 0.002} \\
0.2 & 0.064 $\pm$ 0.018 & 0.075 $\pm$ 0.019 & 0.302 $\pm$ 0.027 & 0.401 $\pm$ 0.027 & 0.012 $\pm$ 0.003 \\
0.5 & 0.013 $\pm$ 0.004 & 0.039 $\pm$ 0.013 & 0.250 $\pm$ 0.028 & 0.404 $\pm$ 0.018 & 0.013 $\pm$ 0.003 \\
1 & \textbf{0.008 $\pm$ 0.004} & \textbf{0.024 $\pm$ 0.011} & 0.185 $\pm$ 0.030 & 0.419 $\pm$ 0.018 & 0.013 $\pm$ 0.003 \\
2 & 0.010 $\pm$ 0.001 & 0.028 $\pm$ 0.027 & 0.185 $\pm$ 0.026 & 0.415 $\pm$ 0.016 & 0.014 $\pm$ 0.003 \\
5 & 0.014 $\pm$ 0.003 & 0.027 $\pm$ 0.030 & 0.184 $\pm$ 0.030 & 0.419 $\pm$ 0.016 & 0.013 $\pm$ 0.004 \\
10 & 0.015 $\pm$ 0.003 & 0.030 $\pm$ 0.032 & \textbf{0.183 $\pm$ 0.028} & 0.428 $\pm$ 0.016 & 0.013 $\pm$ 0.004 \\
\bottomrule
\end{tabular}%
}
\end{table}

%% file: tables/ablation_gamma/zdt_gamma_igd.tex
% requires \usepackage{booktabs,graphicx}
\begin{table}
\caption{Ablation of the transport weight $\gamma$ on ZDT tasks. IGD is computed over a population of $256$ candidate designs and averaged over $5$ random seeds.}
\label{tab:zdt-gamma-igd}
\resizebox{\linewidth}{!}{%
\begin{tabular}{lccccc}
\toprule
$\gamma$ & ZDT1 & ZDT2 & ZDT3 & ZDT4 & ZDT6 \\
\midrule
0 & 0.178 $\pm$ 0.006 & 0.155 $\pm$ 0.023 & 0.229 $\pm$ 0.018 & 0.264 $\pm$ 0.048 & 0.045 $\pm$ 0.025 \\
0.1 & 0.151 $\pm$ 0.008 & 0.057 $\pm$ 0.033 & 0.205 $\pm$ 0.028 & \textbf{0.230 $\pm$ 0.036} & 0.016 $\pm$ 0.002 \\
0.2 & 0.071 $\pm$ 0.016 & 0.025 $\pm$ 0.008 & 0.192 $\pm$ 0.019 & 0.240 $\pm$ 0.027 & 0.016 $\pm$ 0.002 \\
0.5 & \textbf{0.032 $\pm$ 0.053} & \textbf{0.023 $\pm$ 0.011} & 0.150 $\pm$ 0.023 & 0.261 $\pm$ 0.023 & 0.016 $\pm$ 0.001 \\
1 & 0.044 $\pm$ 0.066 & \textbf{0.023 $\pm$ 0.014} & \textbf{0.143 $\pm$ 0.057} & 0.274 $\pm$ 0.033 & 0.016 $\pm$ 0.002 \\
2 & 0.066 $\pm$ 0.090 & 0.032 $\pm$ 0.019 & 0.161 $\pm$ 0.092 & 0.288 $\pm$ 0.020 & 0.016 $\pm$ 0.002 \\
5 & 0.097 $\pm$ 0.070 & 0.125 $\pm$ 0.118 & 0.167 $\pm$ 0.098 & 0.324 $\pm$ 0.058 & \textbf{0.015 $\pm$ 0.001} \\
10 & 0.098 $\pm$ 0.067 & 0.369 $\pm$ 0.142 & 0.161 $\pm$ 0.109 & 0.342 $\pm$ 0.059 & \textbf{0.015 $\pm$ 0.001} \\
\bottomrule
\end{tabular}%
}
\end{table}

%% file: tables/ablation_gamma/zdt_gamma_w2.tex
% requires \usepackage{booktabs,graphicx}
\begin{table}
\caption{Ablation of the transport weight $\gamma$ on ZDT tasks. $W_2$ is computed over a population of $256$ candidate designs and averaged over $5$ random seeds.}
\label{tab:zdt-gamma-w2}
\resizebox{\linewidth}{!}{%
\begin{tabular}{lccccc}
\toprule
$\gamma$ & ZDT1 & ZDT2 & ZDT3 & ZDT4 & ZDT6 \\
\midrule
0 & 0.225 $\pm$ 0.005 & 0.242 $\pm$ 0.023 & 0.357 $\pm$ 0.022 & 0.545 $\pm$ 0.037 & 0.184 $\pm$ 0.015 \\
0.1 & 0.213 $\pm$ 0.010 & 0.216 $\pm$ 0.025 & 0.368 $\pm$ 0.027 & 0.544 $\pm$ 0.039 & 0.188 $\pm$ 0.007 \\
0.2 & \textbf{0.148 $\pm$ 0.018} & 0.143 $\pm$ 0.024 & 0.347 $\pm$ 0.029 & \textbf{0.538 $\pm$ 0.043} & 0.170 $\pm$ 0.020 \\
0.5 & 0.181 $\pm$ 0.108 & \textbf{0.131 $\pm$ 0.024} & \textbf{0.305 $\pm$ 0.025} & 0.542 $\pm$ 0.044 & 0.147 $\pm$ 0.026 \\
1 & 0.218 $\pm$ 0.130 & 0.139 $\pm$ 0.077 & 0.346 $\pm$ 0.092 & 0.554 $\pm$ 0.038 & 0.151 $\pm$ 0.034 \\
2 & 0.189 $\pm$ 0.151 & 0.352 $\pm$ 0.038 & 0.379 $\pm$ 0.116 & 0.570 $\pm$ 0.013 & 0.147 $\pm$ 0.048 \\
5 & 0.304 $\pm$ 0.079 & 0.468 $\pm$ 0.035 & 0.375 $\pm$ 0.112 & 0.605 $\pm$ 0.012 & 0.133 $\pm$ 0.028 \\
10 & 0.330 $\pm$ 0.049 & 0.506 $\pm$ 0.008 & 0.368 $\pm$ 0.123 & 0.623 $\pm$ 0.005 & \textbf{0.069 $\pm$ 0.023} \\
\bottomrule
\end{tabular}%
}
\end{table}

%% file: tables/ablation_gamma/dtlz_gamma_hv.tex
% requires \usepackage{booktabs,graphicx}
\begin{table}
\caption{Ablation of the transport weight $\gamma$ on DTLZ tasks. HV is computed over a population of $256$ candidate designs and averaged over $5$ random seeds.}
\label{tab:dtlz-gamma-hv}
\resizebox{\linewidth}{!}{%
\begin{tabular}{lccccccc}
\toprule
$\gamma$ & DTLZ1 & DTLZ2 & DTLZ3 & DTLZ4 & DTLZ5 & DTLZ6 & DTLZ7 \\
\midrule
0 & 10.645 $\pm$ 0.001 & 12.442 $\pm$ 0.001 & 9.891 $\pm$ 0.004 & 17.619 $\pm$ 0.023 & 10.730 $\pm$ 0.002 & 10.896 $\pm$ 0.037 & 10.577 $\pm$ 0.052 \\
0.1 & 10.645 $\pm$ 0.001 & 12.445 $\pm$ 0.000 & 9.891 $\pm$ 0.005 & 17.621 $\pm$ 0.023 & 10.738 $\pm$ 0.002 & 10.909 $\pm$ 0.043 & 10.716 $\pm$ 0.056 \\
0.2 & 10.646 $\pm$ 0.000 & 12.447 $\pm$ 0.000 & 9.891 $\pm$ 0.005 & \textbf{17.622 $\pm$ 0.027} & 10.748 $\pm$ 0.002 & 10.917 $\pm$ 0.035 & 10.732 $\pm$ 0.049 \\
0.5 & 10.646 $\pm$ 0.000 & \textbf{12.448 $\pm$ 0.000} & 9.896 $\pm$ 0.002 & 17.611 $\pm$ 0.006 & 10.754 $\pm$ 0.001 & 10.966 $\pm$ 0.042 & \textbf{10.759 $\pm$ 0.057} \\
1 & \textbf{10.647 $\pm$ 0.000} & \textbf{12.448 $\pm$ 0.000} & \textbf{9.898 $\pm$ 0.001} & 17.611 $\pm$ 0.011 & \textbf{10.757 $\pm$ 0.000} & 11.047 $\pm$ 0.012 & 10.748 $\pm$ 0.063 \\
2 & 10.646 $\pm$ 0.001 & 12.447 $\pm$ 0.000 & 9.897 $\pm$ 0.002 & 17.534 $\pm$ 0.032 & 10.756 $\pm$ 0.000 & \textbf{11.062 $\pm$ 0.004} & 10.724 $\pm$ 0.068 \\
5 & \textbf{10.647 $\pm$ 0.000} & 12.442 $\pm$ 0.000 & \textbf{9.898 $\pm$ 0.000} & 17.231 $\pm$ 0.240 & 10.742 $\pm$ 0.004 & 11.013 $\pm$ 0.007 & 10.730 $\pm$ 0.046 \\
10 & 10.646 $\pm$ 0.000 & 12.441 $\pm$ 0.001 & \textbf{9.898 $\pm$ 0.001} & 17.215 $\pm$ 0.261 & 10.738 $\pm$ 0.006 & 11.010 $\pm$ 0.006 & 10.744 $\pm$ 0.009 \\
\bottomrule
\end{tabular}%
}
\end{table}

%% file: tables/ablation_gamma/dtlz_gamma_igd.tex
% requires \usepackage{booktabs,graphicx}
\begin{table}
\caption{Ablation of the transport weight $\gamma$ on DTLZ tasks. IGD is computed over a population of $256$ candidate designs and averaged over $5$ random seeds.}
\label{tab:dtlz-gamma-igd}
\resizebox{\linewidth}{!}{%
\begin{tabular}{lccccccc}
\toprule
$\gamma$ & DTLZ1 & DTLZ2 & DTLZ3 & DTLZ4 & DTLZ5 & DTLZ6 & DTLZ7 \\
\midrule
0 & 0.081 $\pm$ 0.031 & 0.036 $\pm$ 0.001 & 0.153 $\pm$ 0.047 & 0.158 $\pm$ 0.012 & 0.020 $\pm$ 0.001 & 0.287 $\pm$ 0.001 & 0.149 $\pm$ 0.019 \\
0.1 & 0.078 $\pm$ 0.017 & 0.028 $\pm$ 0.001 & 0.170 $\pm$ 0.033 & 0.156 $\pm$ 0.009 & 0.016 $\pm$ 0.002 & 0.292 $\pm$ 0.010 & 0.096 $\pm$ 0.019 \\
0.2 & 0.071 $\pm$ 0.010 & 0.024 $\pm$ 0.001 & 0.136 $\pm$ 0.055 & 0.158 $\pm$ 0.007 & 0.010 $\pm$ 0.001 & 0.288 $\pm$ 0.001 & 0.102 $\pm$ 0.049 \\
0.5 & 0.087 $\pm$ 0.020 & 0.021 $\pm$ 0.001 & 0.120 $\pm$ 0.026 & \textbf{0.152 $\pm$ 0.005} & 0.007 $\pm$ 0.001 & 0.290 $\pm$ 0.006 & \textbf{0.087 $\pm$ 0.062} \\
1 & 0.070 $\pm$ 0.006 & \textbf{0.018 $\pm$ 0.001} & \textbf{0.104 $\pm$ 0.021} & \textbf{0.152 $\pm$ 0.010} & \textbf{0.005 $\pm$ 0.000} & \textbf{0.283 $\pm$ 0.000} & 0.100 $\pm$ 0.063 \\
2 & 0.070 $\pm$ 0.009 & 0.024 $\pm$ 0.000 & 0.116 $\pm$ 0.013 & 0.200 $\pm$ 0.034 & \textbf{0.005 $\pm$ 0.001} & \textbf{0.283 $\pm$ 0.000} & 0.145 $\pm$ 0.073 \\
5 & 0.066 $\pm$ 0.008 & 0.035 $\pm$ 0.001 & 0.113 $\pm$ 0.006 & 0.231 $\pm$ 0.017 & 0.014 $\pm$ 0.002 & 0.290 $\pm$ 0.001 & 0.125 $\pm$ 0.053 \\
10 & \textbf{0.060 $\pm$ 0.011} & 0.036 $\pm$ 0.001 & 0.114 $\pm$ 0.023 & 0.247 $\pm$ 0.041 & 0.016 $\pm$ 0.002 & 0.292 $\pm$ 0.002 & 0.111 $\pm$ 0.013 \\
\bottomrule
\end{tabular}%
}
\end{table}

%% file: tables/ablation_gamma/dtlz_gamma_w2.tex
% requires \usepackage{booktabs,graphicx}
\begin{table}
\caption{Ablation of the transport weight $\gamma$ on DTLZ tasks. $W_2$ is computed over a population of $256$ candidate designs and averaged over $5$ random seeds.}
\label{tab:dtlz-gamma-w2}
\resizebox{\linewidth}{!}{%
\begin{tabular}{lccccccc}
\toprule
$\gamma$ & DTLZ1 & DTLZ2 & DTLZ3 & DTLZ4 & DTLZ5 & DTLZ6 & DTLZ7 \\
\midrule
0 & 0.420 $\pm$ 0.005 & 0.065 $\pm$ 0.003 & 0.533 $\pm$ 0.008 & 0.333 $\pm$ 0.013 & 0.107 $\pm$ 0.019 & 0.712 $\pm$ 0.006 & 0.365 $\pm$ 0.028 \\
0.1 & 0.429 $\pm$ 0.010 & 0.054 $\pm$ 0.007 & 0.509 $\pm$ 0.020 & 0.336 $\pm$ 0.021 & 0.125 $\pm$ 0.023 & 0.695 $\pm$ 0.011 & 0.262 $\pm$ 0.042 \\
0.2 & 0.415 $\pm$ 0.018 & 0.042 $\pm$ 0.003 & 0.501 $\pm$ 0.015 & 0.321 $\pm$ 0.011 & 0.111 $\pm$ 0.028 & 0.679 $\pm$ 0.010 & 0.232 $\pm$ 0.060 \\
0.5 & 0.398 $\pm$ 0.021 & 0.040 $\pm$ 0.002 & 0.452 $\pm$ 0.030 & \textbf{0.298 $\pm$ 0.010} & 0.087 $\pm$ 0.019 & 0.582 $\pm$ 0.030 & \textbf{0.216 $\pm$ 0.080} \\
1 & \textbf{0.356 $\pm$ 0.016} & \textbf{0.038 $\pm$ 0.005} & 0.397 $\pm$ 0.024 & 0.321 $\pm$ 0.018 & \textbf{0.068 $\pm$ 0.011} & 0.333 $\pm$ 0.011 & 0.223 $\pm$ 0.091 \\
2 & 0.377 $\pm$ 0.025 & 0.063 $\pm$ 0.004 & 0.388 $\pm$ 0.013 & 0.397 $\pm$ 0.032 & 0.084 $\pm$ 0.010 & \textbf{0.329 $\pm$ 0.008} & 0.271 $\pm$ 0.081 \\
5 & 0.411 $\pm$ 0.017 & 0.105 $\pm$ 0.011 & \textbf{0.387 $\pm$ 0.013} & 0.413 $\pm$ 0.010 & 0.107 $\pm$ 0.011 & 0.492 $\pm$ 0.018 & 0.242 $\pm$ 0.052 \\
10 & 0.429 $\pm$ 0.014 & 0.116 $\pm$ 0.014 & 0.388 $\pm$ 0.015 & 0.411 $\pm$ 0.008 & 0.108 $\pm$ 0.008 & 0.503 $\pm$ 0.014 & 0.224 $\pm$ 0.045 \\
\bottomrule
\end{tabular}%
}
\end{table}

%% file: iclr2027_conference.bib
@book{shalevshwartz2014understanding,
  title     = {Understanding Machine Learning: From Theory to Algorithms},
  author    = {Shalev-Shwartz, Shai and Ben-David, Shai},
  year      = {2014},
  publisher = {Cambridge University Press}
}

@book{mohri2018foundations,
  title     = {Foundations of Machine Learning},
  author    = {Mohri, Mehryar and Rostamizadeh, Afshin and Talwalkar, Ameet},
  edition   = {2},
  year      = {2018},
  publisher = {MIT Press}
}

@inproceedings{Lipman2023flowmatching,
title={{Flow Matching for Generative Modeling}},
author={Yaron Lipman and Ricky T. Q. Chen and Heli Ben-Hamu and Maximilian Nickel and Matthew Le},
booktitle={The Eleventh International Conference on Learning Representations },
year={2023},
url={https://openreview.net/forum?id=PqvMRDCJT9t}
}

@misc{Li2026hardflow,
      title={{HardFlow: Hard-Constrained Sampling for Flow-Matching Models via Trajectory Optimization}}, 
      author={Zeyang Li and Kaveh Alim and Navid Azizan},
      year={2026},
      eprint={2511.08425},
      archivePrefix={arXiv},
      primaryClass={cs.LG},
      url={https://arxiv.org/abs/2511.08425}, 
}

@misc{Webber2026flowmpc,
      title={{Solving Inverse Problems with Flow-based Models via Model Predictive Control}}, 
      author={George Webber and Alexander Denker and Riccardo Barbano and Andrew J Reader},
      year={2026},
      eprint={2601.23231},
      archivePrefix={arXiv},
      primaryClass={eess.IV},
      url={https://arxiv.org/abs/2601.23231}, 
}

@inproceedings{Pourya2026flower,
  title={{Flower: A Flow-Matching Solver for Inverse Problems}},
  author={Mehrsa Pourya and Bassam El Rawas and Michael Unser},
  booktitle={The Fourteenth International Conference on Learning Representations},
  year={2026},
  url={https://openreview.net/forum?id=QGd34p02mI}
}

@InProceedings{Liu2023flowgrad,
    author    = {Liu, Xingchao and Wu, Lemeng and Zhang, Shujian and Gong, Chengyue and Ping, Wei and Liu, Qiang},
    title     = {{FlowGrad: Controlling the Output of Generative ODEs With Gradients}},
    booktitle = {Proceedings of the IEEE/CVF Conference on Computer Vision and Pattern Recognition (CVPR)},
    month     = {June},
    year      = {2023},
    pages     = {24335-24344}
}

@inproceedings{Wang2025ocflow,
title={{Training Free Guided Flow-Matching with Optimal Control}},
author={Luran Wang and Chaoran Cheng and Yizhen Liao and Yanru Qu and Ge Liu},
booktitle={The Thirteenth International Conference on Learning Representations},
year={2025},
url={https://openreview.net/forum?id=61ss5RA1MM}
}

@inproceedings{
    Yuan2025paretoflow,
    title={{ParetoFlow: Guided Flows in Multi-Objective Optimization}},
    author={Ye Yuan and Can Chen and Christopher Pal and Xue Liu},
    booktitle={The Thirteenth International Conference on Learning Representations},
    year={2025},
    url={https://openreview.net/forum?id=mLyyB4le5u}
}

@inproceedings{Annadani2025pgd,
 author = {Annadani, Yashas and Belakaria, Syrine and Ermon, Stefano and Bauer, Stefan and Engelhardt, Barbara},
 booktitle = {Advances in Neural Information Processing Systems},
 doi = {10.52202/085713-0533},
 editor = {D. Belgrave and C. Zhang and H. Lin and R. Pascanu and P. Koniusz and M. Ghassemi and N. Chen},
 pages = {15738--15761},
 publisher = {Curran Associates, Inc.},
 title = {{Preference-Guided Diffusion for Multi-Objective Offline Optimization}},
 url = {https://proceedings.neurips.cc/paper_files/paper/2025/file/175fa4fc8f275f877ec85340131c5d7a-Paper-Conference.pdf},
 volume = {38, Main Conference},
 year = {2025}
}

@inproceedings{
  Hotegni2026spread,
  title={{SPREAD: Sampling-based Pareto front Refinement via Efficient Adaptive Diffusion}},
  author={Hotegni, Sedjro Salomon and Peitz, Sebastian},
  booktitle={The Fourteenth International Conference on Learning Representations},
  year={2026},
  url={https://openreview.net/forum?id=4731mIqv89}
}

@inproceedings{
	Shrestha2026paretoconditioned,
	title={{Pareto-Conditioned Diffusion Models for Offline Multi-Objective Optimization}},
	author={Jatan Shrestha and Santeri Heiskanen and Kari Hepola and Severi Rissanen and Pekka J{\"a}{\"a}skel{\"a}inen and Joni Pajarinen},
	booktitle={The Fourteenth International Conference on Learning Representations},
	year={2026},
	url={https://openreview.net/forum?id=S2Q00li155}
}

@article{Zeni2025naturematerial,
  author  = {Zeni, Claudio and Pinsler, Robert and Z{\"u}gner, Daniel and Fowler, Andrew and Horton, Matthew and Fu, Xiang and Wang, Zilong and Shysheya, Aliaksandra and Crabb{\'e}, Jonathan and Ueda, Shoko and Sordillo, Roberto and Sun, Lixin and Smith, Jake and Nguyen, Bichlien and Schulz, Hannes and Lewis, Sarah and Huang, Chin-Wei and Lu, Ziheng and Zhou, Yichi and Yang, Han and Hao, Hongxia and Li, Jielan and Yang, Chunlei and Li, Wenjie and Tomioka, Ryota and Xie, Tian},
  title   = {{A generative model for inorganic materials design}},
  journal = {Nature},
  year    = {2025},
  volume  = {639},
  number  = {8055},
  pages   = {624--632},
  issn    = {1476-4687},
  doi     = {10.1038/s41586-025-08628-5},
  url     = {https://doi.org/10.1038/s41586-025-08628-5}
}

@article{Du2024naturesurvey,
  author  = {Du, Yuanqi and Jamasb, Arian R. and Guo, Jeff and Fu, Tianfan and Harris, Charles and Wang, Yingheng and Duan, Chenru and Li{\`o}, Pietro and Schwaller, Philippe and Blundell, Tom L.},
  title   = {{Machine learning-aided generative molecular design}},
  journal = {Nature Machine Intelligence},
  year    = {2024},
  volume  = {6},
  number  = {6},
  pages   = {589--604},
  issn    = {2522-5839},
  doi     = {10.1038/s42256-024-00843-5},
  url     = {https://doi.org/10.1038/s42256-024-00843-5}
}

@article{Pogue2023naturesuperconducting,
  author  = {Pogue, Elizabeth A. and New, Alexander and McElroy, Kyle and Le, Nam Q. and Pekala, Michael J. and McCue, Ian and Gienger, Eddie and Domenico, Janna and Hedrick, Elizabeth and McQueen, Tyrel M. and Wilfong, Brandon and Piatko, Christine D. and Ratto, Christopher R. and Lennon, Andrew and Chung, Christine and Montalbano, Timothy and Bassen, Gregory and Stiles, Christopher D.},
  title   = {{Closed-loop superconducting materials discovery}},
  journal = {npj Computational Materials},
  year    = {2023},
  volume  = {9},
  number  = {181},
  issn    = {2057-3960},
  doi     = {10.1038/s41524-023-01131-3},
  url     = {https://doi.org/10.1038/s41524-023-01131-3}
}

@article{Wang2021naturedistillation,
  author  = {Wang, Jike and Hsieh, Chang-Yu and Wang, Mingyang and Wang, Xiaorui and Wu, Zhenxing and Jiang, Dejun and Liao, Benben and Zhang, Xujun and Yang, Bo and He, Qiaojun and Cao, Dongsheng and Chen, Xi and Hou, Tingjun},
  title   = {{Multi-constraint molecular generation based on conditional transformer, knowledge distillation and reinforcement learning}},
  journal = {Nature Machine Intelligence},
  year    = {2021},
  volume  = {3},
  number  = {10},
  pages   = {914--922},
  issn    = {2522-5839},
  doi     = {10.1038/s42256-021-00403-1},
  url     = {https://doi.org/10.1038/s42256-021-00403-1}
}

@inproceedings{
Albergo2023building,
title={{Building Normalizing Flows with Stochastic Interpolants}},
author={Michael Samuel Albergo and Eric Vanden-Eijnden},
booktitle={The Eleventh International Conference on Learning Representations },
year={2023},
url={https://openreview.net/forum?id=li7qeBbCR1t}
}

@inproceedings{
Liu2022flow,
title={{Flow Straight and Fast: Learning to Generate and Transfer Data with Rectified Flow}},
author={Xingchao Liu and Chengyue Gong and qiang liu},
booktitle={NeurIPS 2022 Workshop on Score-Based Methods},
year={2022},
url={https://openreview.net/forum?id=gWxpdtQpiYV}
}

@article{Das1998nbi,
author = {Das, Indraneel and Dennis, J. E.},
title = {{Normal-Boundary Intersection: A New Method for Generating the Pareto Surface in Nonlinear Multicriteria Optimization Problems}},
journal = {SIAM Journal on Optimization},
volume = {8},
number = {3},
pages = {631-657},
year = {1998},
doi = {10.1137/S1052623496307510},
URL = {https://doi.org/10.1137/S1052623496307510},
eprint = {https://doi.org/10.1137/S1052623496307510}

}

@InProceedings{Coello2004IGD,
author="Coello Coello, Carlos A.
and Reyes Sierra, Margarita",
editor="Monroy, Ra{\'u}l
and Arroyo-Figueroa, Gustavo
and Sucar, Luis Enrique
and Sossa, Humberto",
title="A Study of the Parallelization of a Coevolutionary Multi-objective Evolutionary Algorithm",
booktitle="MICAI 2004: Advances in Artificial Intelligence",
year="2004",
publisher="Springer Berlin Heidelberg",
address="Berlin, Heidelberg",
pages="688--697",
isbn="978-3-540-24694-7"
}

@ARTICLE{Schutze2012,
  author={Sch{\"u}tze, Oliver and Esquivel, Xavier and Lara, Adriana and Coello, Carlos A. Coello},
  journal={IEEE Transactions on Evolutionary Computation}, 
  title={{Using the Averaged Hausdorff Distance as a Performance Measure in Evolutionary Multiobjective Optimization}}, 
  year={2012},
  volume={16},
  number={4},
  pages={504-522},
  doi={10.1109/TEVC.2011.2161872}
  }

@INPROCEEDINGS{Lamont1999GD,
  author={Van Veldhuizen, D.A. and Lamont, G.B.},
  booktitle={Proceedings of the 2000 Congress on Evolutionary Computation. CEC00 (Cat. No.00TH8512)}, 
  title={{On measuring multiobjective evolutionary algorithm performance}}, 
  year={2000},
  volume={1},
  number={},
  pages={204-211 vol.1},
  doi={10.1109/CEC.2000.870296}
  }

@InProceedings{Benhamu2024dflow,
  title = 	 {{D-Flow: Differentiating through Flows for Controlled Generation}},
  author =       {Ben-Hamu, Heli and Puny, Omri and Gat, Itai and Karrer, Brian and Singer, Uriel and Lipman, Yaron},
  booktitle = 	 {Proceedings of the 41st International Conference on Machine Learning},
  pages = 	 {3462--3483},
  year = 	 {2024},
  volume = 	 {235},
  series = 	 {Proceedings of Machine Learning Research},
  month = 	 {21--27 Jul},
  publisher =    {PMLR}
}

@article{CHEN2011classicalmoo,
title = {{Convergence of multi-objective evolutionary algorithms to a uniformly distributed representation of the {P}areto front}},
journal = {Information Sciences},
volume = {181},
number = {16},
pages = {3336-3355},
year = {2011},
issn = {0020-0255},
doi = {https://doi.org/10.1016/j.ins.2011.04.004},
url = {https://www.sciencedirect.com/science/article/pii/S0020025511001721},
author = {Yu Chen and Xiufen Zou and Weicheng Xie},
}

@InProceedings{Feng2025GuidanceFM,
  title = 	 {{On the Guidance of Flow Matching}},
  author =       {Feng, Ruiqi and Yu, Chenglei and Deng, Wenhao and Hu, Peiyan and Wu, Tailin},
  booktitle = 	 {Proceedings of the 42nd International Conference on Machine Learning},
  pages = 	 {16993--17029},
  year = 	 {2025},
  editor = 	 {Singh, Aarti and Fazel, Maryam and Hsu, Daniel and Lacoste-Julien, Simon and Berkenkamp, Felix and Maharaj, Tegan and Wagstaff, Kiri and Zhu, Jerry},
  volume = 	 {267},
  series = 	 {Proceedings of Machine Learning Research},
  month = 	 {13--19 Jul},
  publisher =    {PMLR},
  url = 	 {https://proceedings.mlr.press/v267/feng25s.html},
}

@inproceedings{Chung2023DPS,
title={{Diffusion Posterior Sampling for General Noisy Inverse Problems}},
author={Hyungjin Chung and Jeongsol Kim and Michael Thompson Mccann and Marc Louis Klasky and Jong Chul Ye},
booktitle={The Eleventh International Conference on Learning Representations },
year={2023},
url={https://openreview.net/forum?id=OnD9zGAGT0k}
}

@inproceedings{Kim2021noisescore,
title={{Noise2Score: Tweedie{\textquoteright}s Approach to Self-Supervised Image Denoising without Clean Images}},
author={Kwanyoung Kim and Jong Chul Ye},
booktitle={Advances in Neural Information Processing Systems},
editor={A. Beygelzimer and Y. Dauphin and P. Liang and J. Wortman Vaughan},
year={2021},
url={https://openreview.net/forum?id=ZqEUs3sTRU0}
}

@INPROCEEDINGS{Kim2025FlowDPS,
  author={Kim, Jeongsol and Kim, Bryan Sangwoo and Ye, Jong Chul},
  booktitle={2025 IEEE/CVF International Conference on Computer Vision (ICCV)}, 
  title={{FlowDPS: Flow-Driven Posterior Sampling for Inverse Problems}}, 
  year={2025},
  volume={},
  number={},
  pages={12328-12337},
  doi={10.1109/ICCV51701.2025.01146}
  }

@inproceedings{Kim2025testtime,
title={{Test-time Alignment of Diffusion Models without Reward Over-optimization}},
author={Sunwoo Kim and Minkyu Kim and Dongmin Park},
booktitle={The Thirteenth International Conference on Learning Representations},
year={2025},
url={https://openreview.net/forum?id=vi3DjUhFVm}
}

@book{villani2009optimal,
  author    = {Villani, C{\'e}dric},
  title     = {{Optimal Transport: Old and New}},
  series    = {Grundlehren der mathematischen Wissenschaften},
  volume    = {338},
  publisher = {Springer},
  address   = {Berlin, Heidelberg},
  year      = {2009},
  doi       = {10.1007/978-3-540-71050-9},
  isbn      = {978-3-540-71049-3}
}

@article{Peyre2019ComputationalOT,
    author = {Peyr{\'e}, Gabriel and Cuturi, Marco},
    title = {{Computational Optimal Transport with Applications to Data Sciences}},
    journal = {Foundations and Trends in Machine Learning},
    volume = {11},
    number = {5-6},
    pages = {355-607},
    year = {2019},
    month = {02},
    issn = {1935-8237},
    doi = {10.1561/2200000073},
    url = {https://doi.org/10.1561/2200000073},
    eprint = {https://www.emerald.com/ftmal/article-pdf/11/5-6/355/11154291/2200000073en.pdf},
}

@article{POT,
  author  = {R{\'e}mi Flamary and Nicolas Courty and Alexandre Gramfort and
             Mokhtar Z. Alaya and Aur{\'e}lie Boisbunon and Stanislas Chambon and
             Laetitia Chapel and Adrien Corenflos and Kilian Fatras and
             Nemo Fournier and L{\'e}o Gautheron and Nathalie T.H. Gayraud and
             Hicham Janati and Alain Rakotomamonjy and Ievgen Redko and
             Antoine Rolet and Antony Schutz and Vivien Seguy and
             Danica J. Sutherland and Romain Tavenard and Alexander Tong and
             Titouan Vayer},
  title   = {{POT: Python Optimal Transport}},
  journal = {Journal of Machine Learning Research},
  year    = {2021},
  volume  = {22},
  number  = {78},
  pages   = {1--8},
  url     = {http://jmlr.org/papers/v22/20-451.html}
}

@InProceedings{Trabucco2021COM,
  title = 	 {{Conservative Objective Models for Effective Offline Model-Based Optimization}},
  author =       {Trabucco, Brandon and Kumar, Aviral and Geng, Xinyang and Levine, Sergey},
  booktitle = 	 {Proceedings of the 38th International Conference on Machine Learning},
  pages = 	 {10358--10368},
  year = 	 {2021},
  editor = 	 {Meila, Marina and Zhang, Tong},
  volume = 	 {139},
  series = 	 {Proceedings of Machine Learning Research},
  month = 	 {18--24 Jul},
  publisher =    {PMLR},
  url = 	 {https://proceedings.mlr.press/v139/trabucco21a.html}
}

@InProceedings{Trabucco2022DesignBench,
  title = 	 {{Design-Bench: Benchmarks for Data-Driven Offline Model-Based Optimization}},
  author =       {Trabucco, Brandon and Geng, Xinyang and Kumar, Aviral and Levine, Sergey},
  booktitle = 	 {Proceedings of the 39th International Conference on Machine Learning},
  pages = 	 {21658--21676},
  year = 	 {2022},
  editor = 	 {Chaudhuri, Kamalika and Jegelka, Stefanie and Song, Le and Szepesvari, Csaba and Niu, Gang and Sabato, Sivan},
  volume = 	 {162},
  series = 	 {Proceedings of Machine Learning Research},
  month = 	 {17--23 Jul},
  publisher =    {PMLR},
  url = 	 {https://proceedings.mlr.press/v162/trabucco22a.html},
}

@inproceedings{Yuan2023ICT,
 author = {Yuan, Ye and Chen, Can (Sam) and Liu, Zixuan and Neiswanger, Willie and Liu, Xue (Steve)},
 booktitle = {Advances in Neural Information Processing Systems},
 editor = {A. Oh and T. Naumann and A. Globerson and K. Saenko and M. Hardt and S. Levine},
 pages = {55718--55733},
 publisher = {Curran Associates, Inc.},
 title = {{Importance-aware Co-Teaching for Offline Model-based Optimization}},
 url = {https://proceedings.neurips.cc/paper_files/paper/2023/file/ae8b0b5838ba510daff1198474e7b984-Paper-Conference.pdf},
 volume = {36},
 year = {2023}
}

@inproceedings{Chen2023TriMentoring,
 author = {Chen, Can (Sam) and Beckham, Christopher and Liu, Zixuan and Liu, Xue (Steve) and Pal, Chris},
 booktitle = {Advances in Neural Information Processing Systems},
 editor = {A. Oh and T. Naumann and A. Globerson and K. Saenko and M. Hardt and S. Levine},
 pages = {76619--76636},
 publisher = {Curran Associates, Inc.},
 title = {{Parallel-mentoring for Offline Model-based Optimization}},
 url = {https://proceedings.neurips.cc/paper_files/paper/2023/file/f189e7580acad0fc7fd45405817ddee3-Paper-Conference.pdf},
 volume = {36},
 year = {2023}
}

@inproceedings{Qi2022IOM,
 author = {Qi, Han and Su, Yi and Kumar, Aviral and Levine, Sergey},
 booktitle = {Advances in Neural Information Processing Systems},
 editor = {S. Koyejo and S. Mohamed and A. Agarwal and D. Belgrave and K. Cho and A. Oh},
 pages = {13226--13237},
 publisher = {Curran Associates, Inc.},
 title = {{Data-Driven Offline Decision-Making via Invariant Representation Learning}},
 url = {https://proceedings.neurips.cc/paper_files/paper/2022/file/559726fdfb19005e368be4ce3d40e3e5-Paper-Conference.pdf},
 volume = {35},
 year = {2022}
}

@inproceedings{Yu2021RoMA,
 author = {Yu, Sihyun and Ahn, Sungsoo and Song, Le and Shin, Jinwoo},
 booktitle = {Advances in Neural Information Processing Systems},
 editor = {M. Ranzato and A. Beygelzimer and Y. Dauphin and P.S. Liang and J. Wortman Vaughan},
 pages = {4619--4631},
 publisher = {Curran Associates, Inc.},
 title = {{RoMA: Robust Model Adaptation for Offline Model-based Optimization}},
 url = {https://proceedings.neurips.cc/paper_files/paper/2021/file/24b43fb034a10d78bec71274033b4096-Paper.pdf},
 volume = {34},
 year = {2021}
}

@InProceedings{Xue2024OFFMOO,
  title = {{Offline Multi-Objective Optimization}},
  author = {Xue, Ke and Tan, Rongxi and Huang, Xiaobin and Qian, Chao},
  booktitle = {Proceedings of the 41st International Conference on Machine Learning},
  pages = {55595--55624},
  year = {2024},
  editor = {Salakhutdinov, Ruslan and Kolter, Zico and Heller, Katherine and Weller, Adrian and Oliver, Nuria and Scarlett, Jonathan and Berkenkamp, Felix},
  volume = {235},
  series = {Proceedings of Machine Learning Research},
  month = {21--27 Jul},
  publisher = {PMLR},
  url = {https://proceedings.mlr.press/v235/xue24b.html}
}

@misc{Kim2025MOOReview,
      title={{Offline Model-Based Optimization: Comprehensive Review}}, 
      author={Kim, Minsu and Gu, Jiayao and Yuan, Ye and Yun, Taeyoung and Liu, Zixuan and Bengio, Yoshua and Chen, Can},
      year={2025},
      howpublished={arXiv preprint arXiv:2503.17286},
      eprint={2503.17286},
      archivePrefix={arXiv},
      primaryClass={cs.LG},
      url={https://arxiv.org/abs/2503.17286}, 
}

@article{Deb2002NSGA2,
author = {Deb, K. and Pratap, A. and Agarwal, S. and Meyarivan, T.},
title = {{A fast and elitist multiobjective genetic algorithm: NSGA-II}},
year = {2002},
issue_date = {April 2002},
publisher = {IEEE Press},
volume = {6},
number = {2},
issn = {1089-778X},
url = {https://doi.org/10.1109/4235.996017},
doi = {10.1109/4235.996017},
journal = {Trans. Evol. Comp},
month = apr,
pages = {182–197},
numpages = {16}
}

@InProceedings{Chen2018GradNorm,
  title = 	 {{GradNorm: Gradient Normalization for Adaptive Loss Balancing in Deep Multitask Networks}},
  author =       {Chen, Zhao and Badrinarayanan, Vijay and Lee, Chen-Yu and Rabinovich, Andrew},
  booktitle = 	 {Proceedings of the 35th International Conference on Machine Learning},
  pages = 	 {794--803},
  year = 	 {2018},
  editor = 	 {Dy, Jennifer and Krause, Andreas},
  volume = 	 {80},
  series = 	 {Proceedings of Machine Learning Research},
  month = 	 {10--15 Jul},
  publisher =    {PMLR},
  url = 	 {https://proceedings.mlr.press/v80/chen18a.html}
}

@inproceedings{Yu2020PCGrad,
 author = {Yu, Tianhe and Kumar, Saurabh and Gupta, Abhishek and Levine, Sergey and Hausman, Karol and Finn, Chelsea},
 booktitle = {Advances in Neural Information Processing Systems},
 editor = {H. Larochelle and M. Ranzato and R. Hadsell and M.F. Balcan and H. Lin},
 pages = {5824--5836},
 publisher = {Curran Associates, Inc.},
 title = {{Gradient Surgery for Multi-Task Learning}},
 url = {https://proceedings.neurips.cc/paper_files/paper/2020/file/3fe78a8acf5fda99de95303940a2420c-Paper.pdf},
 volume = {33},
 year = {2020}
}

@ARTICLE{Zitzler1999HV,
  author={Zitzler, E. and Thiele, L.},
  journal={IEEE Transactions on Evolutionary Computation}, 
  title={{Multiobjective evolutionary algorithms: a comparative case study and the strength Pareto approach}}, 
  year={1999},
  volume={3},
  number={4},
  pages={257-271},
  doi={10.1109/4235.797969}}

@article{Tanabe2020RE,
title = {{An easy-to-use real-world multi-objective optimization problem suite}},
journal = {Applied Soft Computing},
volume = {89},
pages = {106078},
year = {2020},
issn = {1568-4946},
doi = {https://doi.org/10.1016/j.asoc.2020.106078},
url = {https://www.sciencedirect.com/science/article/pii/S1568494620300181},
author = {Tanabe, Ryoji and Ishibuchi, Hisao}
}

@ARTICLE{Zitzler2000ZDT,
  author={Zitzler, Eckart and Deb, Kalyanmoy and Thiele, Lothar},
  journal={Evolutionary Computation}, 
  title={{Comparison of Multiobjective Evolutionary Algorithms: Empirical Results}}, 
  year={2000},
  volume={8},
  number={2},
  pages={173-195},
  doi={10.1162/106365600568202}
  }

@INPROCEEDINGS{Deb2002DTLZ,
  author={Deb, Kalyanmoy and Thiele, Lothar and Laumanns, Marco and Zitzler, Eckart},
  booktitle={Proceedings of the 2002 Congress on Evolutionary Computation. CEC'02 (Cat. No.02TH8600)}, 
  title={{Scalable multi-objective optimization test problems}}, 
  year={2002},
  volume={1},
  number={},
  pages={825-830 vol.1},
  doi={10.1109/CEC.2002.1007032}
  }

@inproceedings{Song2019Manifold,
 author = {Song, Yang and Ermon, Stefano},
 booktitle = {Advances in Neural Information Processing Systems},
 editor = {H. Wallach and H. Larochelle and A. Beygelzimer and F. d\textquotesingle Alch\'{e}-Buc and E. Fox and R. Garnett},
 pages = {},
 publisher = {Curran Associates, Inc.},
 title = {{Generative Modeling by Estimating Gradients of the Data Distribution}},
 url = {https://proceedings.neurips.cc/paper_files/paper/2019/file/3001ef257407d5a371a96dcd947c7d93-Paper.pdf},
 volume = {32},
 year = {2019}
}

@Article{Jones1998,
  author    = {Jones, Donald R. and Schonlau, Matthias and Welch, William J.},
  journal   = {Journal of Global Optimization},
  title     = {{Efficient Global Optimization of Expensive Black-Box Functions}},
  year      = {1998},
  issn      = {0925-5001},
  number    = {4},
  pages     = {455--492},
  volume    = {13},
  doi       = {10.1023/a:1008306431147},
  publisher = {Springer Science and Business Media LLC},
}

@Article{Knowles2006,
  author    = {Knowles, J.},
  journal   = {IEEE Transactions on Evolutionary Computation},
  title     = {{ParEGO: a hybrid algorithm with on-line landscape approximation for expensive multiobjective optimization problems}},
  year      = {2006},
  issn      = {1089-778X},
  number    = {1},
  pages     = {50--66},
  volume    = {10},
  doi       = {10.1109/tevc.2005.851274},
  publisher = {Institute of Electrical and Electronics Engineers (IEEE)},
}

@article{jin2011,
  title={{Surrogate-assisted evolutionary computation: Recent advances and future challenges}},
  author={Jin, Yaochu},
  journal={Swarm and Evolutionary Computation},
  volume={1},
  number={2},
  pages={61--70},
  year={2011},
  publisher={Elsevier}
}

@article{wang2018,
  title={{Offline data-driven evolutionary optimization using selective surrogate ensembles}},
  author={Wang, Handing and Jin, Yaochu and Sun, Chaoli and Doherty, John},
  journal={IEEE Transactions on Evolutionary Computation},
  volume={23},
  number={2},
  pages={203--216},
  year={2018},
  publisher={IEEE}
}

@article{yang2019,
  title={{Offline data-driven multiobjective optimization: Knowledge transfer between surrogates and generation of final solutions}},
  author={Yang, Cuie and Ding, Jinliang and Jin, Yaochu and Chai, Tianyou},
  journal={IEEE Transactions on Evolutionary Computation},
  volume={24},
  number={3},
  pages={409--423},
  year={2019},
  publisher={IEEE}
}

@inproceedings{Kumar2020Gen,
 author = {Kumar, Aviral and Levine, Sergey},
 booktitle = {Advances in Neural Information Processing Systems},
 editor = {H. Larochelle and M. Ranzato and R. Hadsell and M.F. Balcan and H. Lin},
 pages = {5126--5137},
 publisher = {Curran Associates, Inc.},
 title = {{Model Inversion Networks for Model-Based Optimization}},
 url = {https://proceedings.neurips.cc/paper_files/paper/2020/file/373e4c5d8edfa8b74fd4b6791d0cf6dc-Paper.pdf},
 volume = {33},
 year = {2020}
}

@inproceedings{Canas2012distOT,
 author = {Canas, Guillermo and Rosasco, Lorenzo},
 booktitle = {Advances in Neural Information Processing Systems},
 editor = {F. Pereira and C.J. Burges and L. Bottou and K. Weinberger},
 pages = {},
 publisher = {Curran Associates, Inc.},
 title = {Learning Probability Measures with respect to Optimal Transport Metrics},
 url = {https://proceedings.neurips.cc/paper_files/paper/2012/file/c54e7837e0cd0ced286cb5995327d1ab-Paper.pdf},
 volume = {25},
 year = {2012}
}

@article{Vayer2023distOT,
  author = {Titouan Vayer and R{\'e}mi Gribonval},
  title   = {Controlling Wasserstein Distances by Kernel Norms with Application to Compressive Statistical Learning},
  journal = {Journal of Machine Learning Research},
  year    = {2023},
  volume  = {24},
  number  = {149},
  pages   = {1--51},
  url     = {https://jmlr.org/papers/v24/21-1516.html}
}

@book{Santambrogio2015optimal,
  author    = {Filippo Santambrogio},
  title     = {Optimal Transport for Applied Mathematicians: Calculus of Variations, PDEs, and Modeling},
  publisher = {Birkh{\"a}user},
  year      = {2015},
  series    = {Progress in Nonlinear Differential Equations and Their Applications},
  volume    = {87},
  doi       = {10.1007/978-3-319-20828-2},
  isbn      = {978-3-319-20828-2}
}
